\newtheorem{theorem}{Theorem}
\newtheorem{lemma}[theorem]{Lemma}
\newtheorem{definition}{Definition}
\newtheorem{remark}{Remark}
\begin{document}

%

%

\twocolumn[

\aistatstitle{
Multitask Online Learning: Listen to the Neighborhood Buzz
}

\aistatsauthor{Juliette Achddou \And Nicolò Cesa-Bianchi \And Pierre Laforgue}

\aistatsaddress{Università degli Studi di Milano \\ Italy \And Università degli Studi di Milano \\ and Politecnico di Milano, Italy \And Università degli Studi di Milano \\ Italy} ]

\begin{abstract}
We study multitask online learning in a setting where agents can only exchange information with their neighbors on a given arbitrary communication network.
We introduce \mtcool, a decentralized algorithm for this setting whose regret depends on the interplay between the task similarities and the network structure. Our analysis shows that the regret of \mtcool is never worse (up to constants) than the bound obtained when agents do not share information.
On the other hand, our bounds significantly improve when neighboring agents operate on similar tasks.
In addition, we prove that our algorithm can be made differentially private with a negligible impact on the regret.
%
Finally, we provide experimental support for our theory.
\end{abstract}

\section{\MakeUppercase{Introduction}}\label{sec:motiv}
Many real-world applications, including recommendation, personalized medicine, or environmental monitoring, require learning a personalized service offered to multiple clients.
These problems are typically studied using multitask learning, or personalized federated learning when privacy is of concern. The key idea behind these techniques is that sharing information among similar clients may help learn faster.
In this work we study multitask learning in an online convex optimization setting where multiple agents share information across a communication network to minimize their local regrets.
We focus on decentralized algorithms, that operate without a central coordinating entity and only have a local knowledge of the communication network.
Motivated by scenarios in which long-range communication is costly (e.g., in sensor networks) or slow (e.g., in advertising/financial networks, where data arrive at very high rates), we assume agents can only communicate with their neighbors in the network.
Our regret analysis applies to the general multitask setting, where each agent is solving a potentially different online learning problem.
In our decentralized environment, we do not require agents to work in a synchronized fashion. Rather, agents predict according to some unknown sequence of agent activations.
We consider both deterministic (i.e., oblivious adversarial) or stochastic activation sequences.

It is well known that the optimal regret in single-agent single-task online convex optimization is $\scO(\sqrt{T})$, achieved, for example, by the \FTRL\ algorithm \citep{orabona2019modern}.
In the multi-agent setting with $N$ agents, one can trivially achieve regret $\scO\big(\sqrt{NT}\big)$ simply by running $N$ independent instances of \FTRL\ (no communication).
\citet{cesa2020cooperative} consider a multi-agent single-task setting where each active agent sends the current loss to their neighbors.
In this setup, they show a regret bound in $\scO\big(\sqrt{\alpha(G)\,T}\big)$, where $\alpha(G)\le N$ is the independence number of the communication graph $G$ (unknown to the agents).
They also show that when agents know $G$ and active agents have access to the predictions of their neighbors, then the regret bound becomes $\scO\big(\sqrt{\gamma(G)\, T}\big)$, where $\gamma(G) \le \alpha(G)$ is the domination number of $G$.
The multi-agent multitask setting in online convex optimization was studied by \citet{cesa2022multitask} for the case  when $G$ is a clique.
They prove a regret bound of order $\sqrt{1 + \sigma(N - 1)}\sqrt{T}$, where $\sigma$ is the variance of the set of best local models for the tasks. This bound is achieved by a multitask variant of \FTRL\ based on sharing the loss gradients.

In this work we consider the multitask setting when $G$ is arbitrary, which---to the best of our knowledge---was never investigated in the context of online convex optimization.
We introduce \Coolcn, a new decentralized variant of \FTRL\ where agents can only communicate with their neighbors.
We show $\mathcal{O}\big(\sqrt{T}\big)$ regret bounds, for adversarial and stochastic activations, in which the scaling $\sqrt{1 + \sigma(N - 1)}$ for the clique is replaced by $\sqrt{1 + \sigma_j(N_j - 1)}$ summed over agents $j$, where $\sigma_j$ is the task variance in the neighborhood of $j$ of size $N_j$ (\Cref{thm:any-G}).
We also recover the previously known bounds for single-task and multitask settings as special cases of our result.
\Cref{fig:expe_1} shows that, when the task similarity is in the right range, \Coolcn outperforms both multi-agent \texttt{FTRL} without communication among agents, and multi-agent single-task \texttt{FTRL}.
See \Cref{sec:expe} for more details and experiments.

\begin{figure}[t]
\centering
\includegraphics[width=0.8\columnwidth]{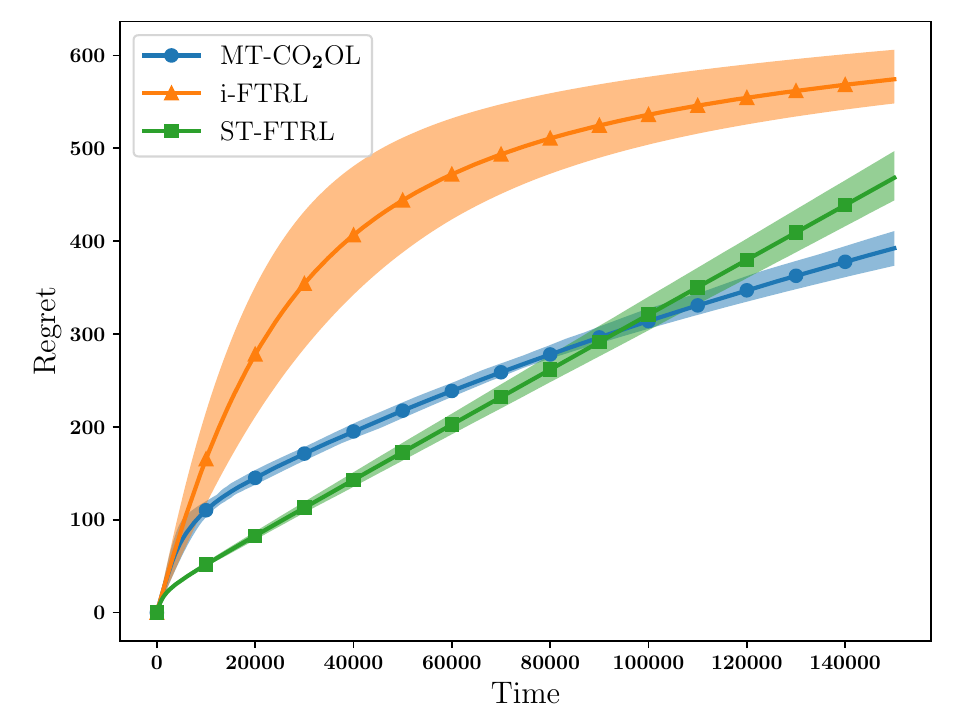}
\caption{
Multitask regret over time of \Coolcn on a random communication graph with stochastic activations against two baselines: \texttt{i-FTRL} ($N$ independent instances of \FTRL) and \texttt{ST-FTRL} (the multi-agent single-task algorithm of \citet{cesa2020cooperative}).
}
\label{fig:expe_1}
\end{figure}


We also prove a lower bound (\Cref{thm:lower-bound}) showing that our regret upper bounds are tight in some important special cases, such as regular communication graphs. Finally, we show that \Coolcn can be made differentially private, and prove that
privacy only degrades the multitask regret by a term polylogarithmic in $T$ (see \Cref{th:reg-DP_anyG}). This allows us to identify a privacy threshold above which sharing information no longer benefits the agents.

\subsection{Related works}
We review related works by focusing on multi-agent online learning with adversarial (nonstochastic) losses. In this area, we distinguish three main threads of research: single-task (or cooperative) learning, multitask (or heterogeneous) learning, and distributed optimization. We also distinguish between synchronous (all agents are active in each round) and asynchronous (one agent is active in each round) activation models. To the best of our knowledge, the study of cooperative online learning was initiated by \citet{awerbuch2008competitive}, who studied a cooperative-competitive synchronous bandit model in which agents are partitioned in unknown clusters and agents in the same cluster receive the same losses (i.e., solve a single-task problem) whereas the losses of agents in distinct clusters may be different.

\textbf{Single-task.}
The synchronous bandit model was also studied by \citet{cesa2019delay} and \citet{bar2019individual} in a network of agents where communication delays (based on the shortest-path distance) are taken into account. Their analysis was extended to linear bandits by \citet{ito2020delay}, and to linear semi-bandits by \citet{della2021efficient}.
\citet{cesa2020cooperative} investigate the asynchronous online convex optimization setting without delays and where agents can only talk to their neighbors. Their analysis is extended to neighborhoods of higher order by \citet{van2022distributed}, who also take communication delays into account. \citet{hsieh2022multi} and \cite{jiang2021asynchronous} consider the same setting, but with a more abstract model of delays, not necessarily induced by shortest-path distance on a graph.

\textbf{Multitask.} One of the earliest contributions in multitask online learning is by \citet{cavallanti2010linear}, who introduce an asynchronous multitask version of the Perceptron algorithm for binary classification.
\citet{murugesan2016} extend this idea to a model where task similarity is also learned.
Another extension is considered by \citet{cesa2022multitask}, who introduce a multitask version of Online Mirror Descent with arbitrary regularizers.
See also \citet{saha2011,zhang2018,li2019} for additional multitask online algorithms without performance bounds.
Recently, \citet{herbster2021gang} investigated an asynchronous bandit model on a social network in which the network's partition induced by the labeling assigning the best local action to each node has a small resistance-weighted cutsize.
Finally, note the work by \citet{sinha2023playing}, where the authors treat tasks as constraint generators (instead of loss generators), and achieve sublinear regret for online convex optimization.

\textbf{Distributed online optimization.}
\citet{yan2012distributed} introduced a synchronous multitask setting where the comparator is the best global prediction for all tasks and the system's performance is measured according to $\max_{j\in [N]} \sum_{i=1}^N \sum_{t=1}^T \ell_t^i\big(x_t^j)$, where $\ell_t^i(x_t^j)$ is the loss at time $t$ for the task of agent $i$ evaluated on the prediction at time $t$ of agent $j$.
Crucially, at time $t$ each agent $j$ only observes $\nabla\ell_t^j(x_t^j)$, and communication is used to gather information about other tasks.
\citet{hosseini2013online} extended the analysis from strongly convex losses to general convex losses.
\citet{yi2023doubly} consider the bandit case, in which each agent observes the loss incurred rather than the gradient.
Note that the synchronous bandit model by \citet{cesa2019delay} is a special case of this setting (when $\ell_t^1 = \ldots = \ell_t^N$ for all $t$).
However, their regret bound scales with the independence number of the communication graph, as opposed to spectral quantities such as the inverse of the spectral gap in distributed online optimization.

\section{\MakeUppercase{Problem Setting}}
\label{sec:model}

In this section, we introduce the multitask online learning setting, and recall important existing results.

\paragraph{Setting.}
Consider $N$ learning agents located at the nodes of a communication network described by an undirected graph $G = (V,E)$, where $V = [N] \coloneqq \{1,\ldots,N\}$.
Let $\mathcal{N}_i = \{i\} \cup \{j \in [N] \,:\, (i, j) \in E\}$ be the neighbors of agent $i$ in $G$ (including $i$ itself), and $N_i = |\mathcal{N}_i|$.
We denote $N_\textnormal{min} = \min_{i \in V} N_i$, and $N_\textnormal{max} = \max_{i \in V} N_i$.
We assume that agents ignore the graph, but know the identity of their neighbors.
Let $\mathcal{X} \subset \mathbb{R}^d$ be the agents common decision space, and $\ell_1,\ell_2,\ldots$ a sequence of convex losses from $\mathcal{X}$ to $\mathbb{R}$, secretly chosen by an oblivious adversary.
The learning process is as follows.
For $t=1, 2, \ldots$
\begin{enumerate}[topsep=0pt,parsep=0pt,itemsep=1pt]
\item some agent $i_t \in [N]$ is activated
\item \label{item:fetch} $i_t$ may \emph{fetch} information from its neighbors $j \in \mathcal{N}_{i_t}$
\item \label{item:pred} $i_t$ predicts $x_t \in \mathcal{X}$
\item \label{item:loss} $i_t$ incurs the loss $\ell_t(x_t)$ and observes $g_t \in \partial\ell_t(x_t)$
\item \label{item:send} $i_t$ may \emph{send} information to its neighbors $j \in \mathcal{N}_{i_t}$\,.
\end{enumerate}

Communication is limited to steps~\ref{item:fetch} and~\ref{item:send}, and only along edges incident on the active agent, hereby enforcing a decentralized learning process.
As in distributed optimization algorithms \citep{hosseini2013online}, we use step~\ref{item:fetch} to fetch model-related information, and step~\ref{item:send} to send gradient-related information, see \Cref{sec:comm-graph,sec:DP}.
%
As shown in our analysis, both communication steps are key to obtain strong regret guarantees.

We measure performance through the \emph{multitask regret}, defined as the sum of the agents' individual regrets.
The individual regret measures the performance of a single agent against the best decision in hindsight for its \emph{own personal task}, i.e., for the sequence of losses associated to the time steps it is active.
Namely, agent $i$ aims at minimizing for any $u^{(i)} \in \mathcal{X}$ its local regret $\sum_t \big(\ell_t(x_t) - \ell_t(u^{(i)})\big)\,\mathbb{I}\{i_t = i\}$, and the global objective is thus to control for any $u^{(1)}, \ldots, u^{(N)} \in \mathcal{X}$ the multitask regret $\sum_{i=1}^N \sum_{t=1}^T \big(\ell_t(x_t) - \ell_t(u^{(i)})\big)\,\mathbb{I}\{i_t = i\}$ .
Equivalently, this amounts to minimizing
\begin{equation}\label{eq:mt-regret}
\vspace{-0.15cm}
R_T(U) = \sum_{t=1}^T \ell_t(x_t) - \ell_t(U_{i_t:})   
\end{equation}
for any  horizon $T$ and multitask comparator $U \in \mathcal{U}$, where $\mathcal{U} \coloneqq \big\{U \in \mathbb{R}^{N \times d} \colon U_{i:} \in \mathcal{X} \text{ for all }i\big\}$.
Note that for simplicity in the rest of the paper we set $\mathcal{X} = \{x \in \mathbb{R}^d \colon \|x\|_2 \le 1\}$, and assume that all loss functions are $1$-Lipschitz, i.e., we have $\|g_t\|_2 \le 1$ for all $t$.
Our analysis can be readily extended to any ball of generic radius $D$ and $L$-Lipschitz losses, up to scaling each regret bound by $DL$.
As pointed out in the introduction, a naive approach to minimizing \eqref{eq:mt-regret}
consists in running $N$ independent instances of \texttt{FTRL},
%
without making use of 
steps \ref{item:fetch} and \ref{item:send}.
By Jensen's inequality, such strategy satisfies for any $U \in \mathcal{U}$
\begin{equation}\label{eq:regret_indep}
R_T(U) = \sum_{i=1}^N 2\sqrt{T_i} \le 2\sqrt{NT}\,,
\end{equation}
where $T_i = \sum_{t=1}^T \mathbb{I}\{i_t=i\}$.
The purpose of this work is to introduce and analyze a new algorithm whose regret improves on \eqref{eq:regret_indep} in terms of $N$.
Our bounds should depend on the interplay between the task similarity 
and the structure of the communication graph.


\begin{remark}[Comparison to \citet{cesa2020cooperative}]
Although \citet{cesa2020cooperative} also consider arbitrary communication graphs, their setting significantly differs from ours.
Recall that they work in a single-task setting (where $u^{(1)} = u^{(2)} = \ldots = u^{(N)}$). Hence, their proof techniques are significantly different from ours.
%
%
Moreover, while we only rely on gradient feedback, where the gradient is evaluated at the current prediction $x_t$, in their setting the learner is free to compute gradients at any point in the decision space. On the other hand, their communication model is restricted to sharing gradients while we can also fetch predictions.
%
\end{remark}

\paragraph{Preliminaries.}
%
%

Our algorithm  builds upon \MTFTRL \citep{cesa2022multitask}, designed for the clique case. 
We recall important facts about this algorithm.
\MTFTRL combines an instance of \texttt{FTRL} with a carefully chosen Mahalanobis regularizer and uses an adaptive learning rate based on \texttt{Hedge}---see \Cref{alg:mt-ftrl}, where $G_t = e_{i_t} g_t^\top \in \mathbb{R}^{N \times d}$ denotes the gradient matrix full of $0$ except for row $i_t$ which contains $g_t$.
The next result bounds the regret suffered by \MTFTRL.

\begin{theorem}[\protect{\citet[Theorem 9]{cesa2022multitask}}]\label{thm:mt-ftrl}
Let $G$ be a clique. The regret of \textnormal{\texttt{MT-FTRL}} with $\beta_{t-1}=\sqrt{t}$ satisfies for all $U \in \mathcal{U}$
\begin{equation}\label{eq:reg-mt-ftrl}
R_T(U) \tildeO \sqrt{1 + \sigma^2(N-1)}\sqrt{T}\,,
\end{equation}
where $\sigma^2 = \sigma^2(U) = \frac{1}{N-1} \sum_{i=1}^N \big\|U_{i:} - \frac{1}{N} \sum_{j=1}^N U_{j:}\big\|_2^2$ is the comparator variance.\footnote{
We use $g \tildeO f$ to denote $g = \widetilde{\mathcal{O}}(f)$, where $\widetilde{\mathcal{O}}$ hides logarithmic factors in $N$.
}
\end{theorem}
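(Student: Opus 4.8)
The plan is to lift the multitask problem over the clique to a single instance of \texttt{FTRL} run on the product space $\mathbb{R}^{N\times d}$, equipped with a Mahalanobis regularizer $\psi_A(U)=\tfrac12\langle U,(A\otimes I_d)U\rangle$ induced by a positive definite interaction matrix $A$ that couples the $N$ tasks. With $G_t=e_{i_t}g_t^\top$ denoting the single-row gradient matrix and the active agent predicting the row $i_t$ of the current iterate, the lifted loss $U\mapsto\ell_t(U_{i_t:})$ has subgradient $G_t$, so the regret of this high-dimensional learner against the comparator matrix $U$ is exactly the multitask regret $R_T(U)$. First I would invoke the standard time-varying \texttt{FTRL} bound: for the schedule $\beta_{t-1}\psi_A$ with $\beta_{t-1}=\sqrt t$,
\[
R_T(U)\;\le\;\beta_{T-1}\,\psi_A(U)\;+\;\tfrac12\sum_{t=1}^{T}\frac{1}{\beta_{t-1}}\,\|G_t\|_{A^{-1}}^2,
\]
where $\|\cdot\|_{A^{-1}}$ is the dual norm of $\psi_A$ and $\sum_t 1/\beta_{t-1}=O(\sqrt T)$.

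The heart of the proof is then a two-sided computation specific to the clique, for a one-parameter family $A=A(\lambda)$ built from the clique Laplacian $L=NI-\mathbf{1}\mathbf{1}^\top$ that assigns the fixed weight $1/N$ to the shared ``mean'' direction $\mathbf{1}$ and a tunable weight $\lambda$ to the orthogonal ``variance'' directions. On the comparator side, the quadratic form diagonalizes as $\psi_{A(\lambda)}(U)=\tfrac12\big(\|\bar U\|_2^2+\lambda\sum_i\|U_{i:}-\bar U\|_2^2\big)$, and since $\sum_i\|U_{i:}-\bar U\|_2^2=(N-1)\sigma^2$ this is exactly where the comparator variance enters; the $1/N$ weighting of the mean is precisely what keeps the single-task case ($\sigma=0$) at $O(1)$ rather than $O(N)$. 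On the gradient side, because $G_t$ has a single nonzero row, its dual norm reduces to a diagonal entry $\|G_t\|_{A(\lambda)^{-1}}^2=(A(\lambda)^{-1})_{i_t i_t}\|g_t\|_2^2\le 1+\tfrac{N-1}{N\lambda}$, which decreases toward $1$ as the variance penalty $\lambda$ grows. Substituting both into the \texttt{FTRL} bound gives $R_T(U)\lesssim\sqrt T\big(\psi_{A(\lambda)}(U)+\max_i(A(\lambda)^{-1})_{ii}\big)$, and balancing the variance penalty $\lambda(N-1)\sigma^2$ against the gradient term $\tfrac{N-1}{N\lambda}$ by optimizing $\lambda$ yields $\sqrt T\cdot O\!\big(\sqrt{1+\sigma^2(N-1)}\big)$.

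The remaining and most delicate difficulty is that the optimal coupling strength $\lambda^\star$ depends on the unknown comparator variance $\sigma^2$, which is exactly what the \texttt{Hedge} meta-layer is designed to resolve: I would run in parallel a geometric grid of $K=O(\log N)$ \MTFTRL experts, one per candidate $\lambda$ covering the relevant range, and aggregate their predictions with \texttt{Hedge}. I expect the main obstacle to lie here, in showing (i) that the grid is fine enough that some expert uses a $\lambda$ within a constant factor of $\lambda^\star$, so its own regret already matches the target up to constants, and (ii) that \texttt{Hedge} adds only its $O(\sqrt{T\log K})$ meta-regret, which, since $\sqrt{1+\sigma^2(N-1)}\ge 1$ and $K=O(\log N)$, degrades the bound by at most the logarithmic-in-$N$ factors absorbed into $\widetilde{\mathcal{O}}$. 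One must also check that $\psi_{A(\lambda)}$ is strongly convex uniformly over the grid, so that the per-expert \texttt{FTRL} bound above remains valid throughout.
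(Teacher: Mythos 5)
Your proposal is correct in substance and arrives at the right bound, but it does so for a different parameterization of the algorithm than the one in the statement, and via a genuinely different adaptation mechanism than the paper's. The shared skeleton is identical: lift the problem to \texttt{FTRL} on $\mathbb{R}^{N\times d}$ with a Mahalanobis regularizer, apply the time-varying \texttt{FTRL} bound, exploit that $G_t$ has a single nonzero row so the dual-norm term reduces to a diagonal entry of the inverse matrix, and resolve the unknown $\sigma^2$ with a \texttt{Hedge} layer over experts. Where you diverge: you tune the coupling strength $\lambda$ inside the regularizer $A(\lambda)$ (the MT-OMD-style parameterization of \citet{cesa2021multitask}) over a geometric grid of $O(\log N)$ values, arguing that a constant-factor approximation of $\lambda^\star=\Theta(1/\sqrt{N\sigma^2})$ suffices. \texttt{MT-FTRL} as defined in \Cref{alg:mt-ftrl} (and analyzed in \citet[Theorem 9]{cesa2022multitask}, whose argument is reproduced in weighted form in this paper's proof of \Cref{thm:any-G}) instead keeps the matrix $A=(1+N)I_N-\bm{1}_N\bm{1}_N^\top$ fixed and indexes experts by candidate variances $\xi\in\{1/N,\ldots,1\}$, an additive grid of size $N$: expert $\xi$ runs \texttt{FTRL} constrained to $\{\sigma^2(X)\le\xi\}$ with learning rate $\eta^{(\xi)}\propto\sqrt{1+\xi(N-1)}$, and the analysis compares against the expert $\bar{\xi}$ satisfying $\bar{\xi}-1/N\le\sigma^2\le\bar{\xi}$, falling back to $\xi=1$ (independent learning) when $\sigma^2\ge1$. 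Your route buys an exponentially smaller expert set ($O(\log N)$ versus $N$) and a transparent one-dimensional optimization over $\lambda$; the paper's parameterization buys exact rather than constant-factor alignment of the best expert with the true variance, a clean fallback for large $\sigma^2$, and---importantly for the rest of this paper---the expert structure (cumulative gradient sums and cumulative inner products) on which the neighborhood-weighted extension (\Cref{thm:any-G}) and the differentially private variant (\Cref{th:reg-DP_anyG}) are built. Two minor caveats: strictly speaking your argument bounds the regret of your variant rather than of \Cref{alg:mt-ftrl} itself, so to prove the theorem as stated you would still need to translate the $\lambda$-tuning into the fixed-$A$, learning-rate-plus-constraint-set form; and the uniform-strong-convexity check you flag as delicate is actually vacuous, since each expert's \texttt{FTRL} bound is invoked with respect to its own norm pair $\big(\|\cdot\|_{A(\lambda)},\|\cdot\|_{A(\lambda)^{-1}}\big)$, under which $\psi_{A(\lambda)}$ is $1$-strongly convex by definition.
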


\SetKwInput{Req}{Requires}
\SetKwInput{Init}{Init}

\begin{algorithm}[t]
\caption{~\texttt{MT-FTRL} (on linear losses)}\label{alg:mt-ftrl}

\Req{Number of agents $N$, learning rates $\beta_{t-1}$}
\vspace{0.1cm}

\Init{$A = (1+N) I_N - \bm{1}_N\bm{1}_N^\top$,~~~$\Xi = \{1/N,\,2/N, \ldots, 1\}$,\\
\hspace{0.93cm}$p_1^{(\xi)} = \frac{1}{N}$~~for all $\xi \in \Xi $}
\vspace{0.05cm}

\For{$t = 1, 2, \ldots$}{
\vspace{0.05cm}

\For{$\xi \in \Xi$}{\vspace{0.05cm}

\tcp{\small \textcolor{blue}{Set learning rate assuming $\sigma^2 = \xi$}}
\vspace{0.05cm}

$\eta^{(\xi)}_{t-1} = \frac{N}{\beta_{t-1}}\sqrt{1+\xi(N-1)}$\vspace{0.15cm}

\tcp{\small \textcolor{blue}{FTRL with Mahalanobis regularizer}}
$\displaystyle X^{(\xi)}_t = \argmin_{\substack{X \in \mathcal{U}\\\sigma^2(X) \le \xi}} ~ \eta^{(\xi)}_{t-1} \left\langle \sum_{s \le t-1}G_s, X\right\rangle + \frac{1}{2} \|X\|_A^2$ \label{li:FTRL}

}
\vspace{0.15cm}

\tcp{Average the experts predictions}

Predict $X_t = \sum_{\xi \in\,\Xi} ~ p_t^{(\xi)}\,X_t^{(\xi)}$ \label{li:average_experts}
\vspace{0.15cm} 

Incur linearized loss $\langle g_t, [X_t]_{i_t:}\rangle$ and observe $g_t$
\vspace{0.15cm}

\tcp{Update $p_t$ based on the experts losses}
\vspace{0.05cm}

\For{$\xi \in \Xi$}{

\hspace{-0.15cm}$\displaystyle p_{t+1}^{(\xi)} = \frac{\exp\left(-\frac{\sqrt{\ln N}}{\beta_t}\sum_{s=1}^t\big\langle g_s, [X_s^{(\xi)}]_{i_s:}\big\rangle\right)}{\sum_k\,\exp\left(-\frac{\sqrt{\ln N}}{\beta_t} \sum_{s=1}^t \big\langle g_s, [X_s^{(k)}]_{i_s:}\big\rangle\right)}$ \label{li:compute_pt}
}
}
\end{algorithm}

Note that \eqref{eq:reg-mt-ftrl} is at most of the same order as the naive bound \eqref{eq:regret_indep}, as $\sigma^2(U) \le 8$ for all $U$. 
On the other hand, \eqref{eq:reg-mt-ftrl} gets better as $\sigma^2$ decreases, i.e., as the tasks get similar, and recovers the single-agent bound $\sqrt{T}$ when $\sigma^2 = 0$, i.e., when all tasks are identical.
Finally, we stress that \texttt{MT-FTRL} adapts to the true comparator variance $\sigma^2(U)$  without any prior knowledge about it.

The above approach crucially relies on the fact that the communication graph $G$ is a clique.
In the next section, we show how to use \MTFTRL as a building block to devise an algorithm that can operate on any communication graph.
Some of our bounds depend on notable parameters of the graph $G$, that we recall now.

\begin{definition}
Let $G = (V, E)$. A subset $S \subset V$ is\vspace{-0.1cm}
\begin{itemize}[topsep=0pt,itemsep=1pt,wide]
%
%
\item \emph{$k$-times independent} in $G$ if the shortest path between any two vertices in $S$ is of length $k+1$. The cardinality of the largest $k$-times independent set of $G$ is denoted $\alpha_k(G)$. For $k=1$, we use the term \emph{independent set} and adopt the notation $\alpha(G)$.
\item \emph{dominating} in $G$ if any vertex in $G \setminus S$ has a neighbor in $S$. The cardinality of the smallest dominating set of $G$ is denoted $\gamma(G)$ and called \emph{domination number of $G$}.\looseness-1
\end{itemize}

It is well known that: $\alpha_2(G) \le \gamma(G) \le \alpha(G) \le |V|$.
\end{definition}

\section{\MakeUppercase{Algorithm and Analysis}}
\label{sec:comm-graph}

\begin{algorithm}[t]
\caption{~\Coolcn}\label{alg:global_alg}
\Req{Base algorithm \Aclique, weights $w_{ij}$}\vspace{0.07cm}

\For{$t=1, 2, \ldots$}{\vspace{0.1cm}

Active agent $i_t$\vspace{0.07cm}
    
\quad fetches $\big[Y_t^{(j)}\big]_{i_t:}$ from each $j \in \mathcal{N}_{i_t}$
    
\quad predicts $x_t = \sum_{j \in \mathcal{N}_{i_t}} w_{i_tj}\big[Y_t^{(j)}\big]_{i_t:}$\vspace{0.1cm}
    
\quad pays $\ell_t(x_t)$ and observes $g_t \in \partial\ell_t(x_t)$\vspace{0.14cm}
    
\quad sends $\big(i_t, w_{i_tj}\,g_t\big)$ to each $j \in \mathcal{N}_{i_t}$\vspace{0.1cm}

\For{$j \in \mathcal{N}_{i_t}$}{\vspace{0.1cm}

Agent $j$ feeds the linear loss $\langle w_{i_t j}\,g_t, \cdot\,\rangle$ to their local instance of \Aclique\ and obtains $Y^{(j)}_{t+1}$
}
}
\end{algorithm}

\begin{figure*}[t]
\centering
\hspace{1.3cm}\includegraphics[height=3.5cm]{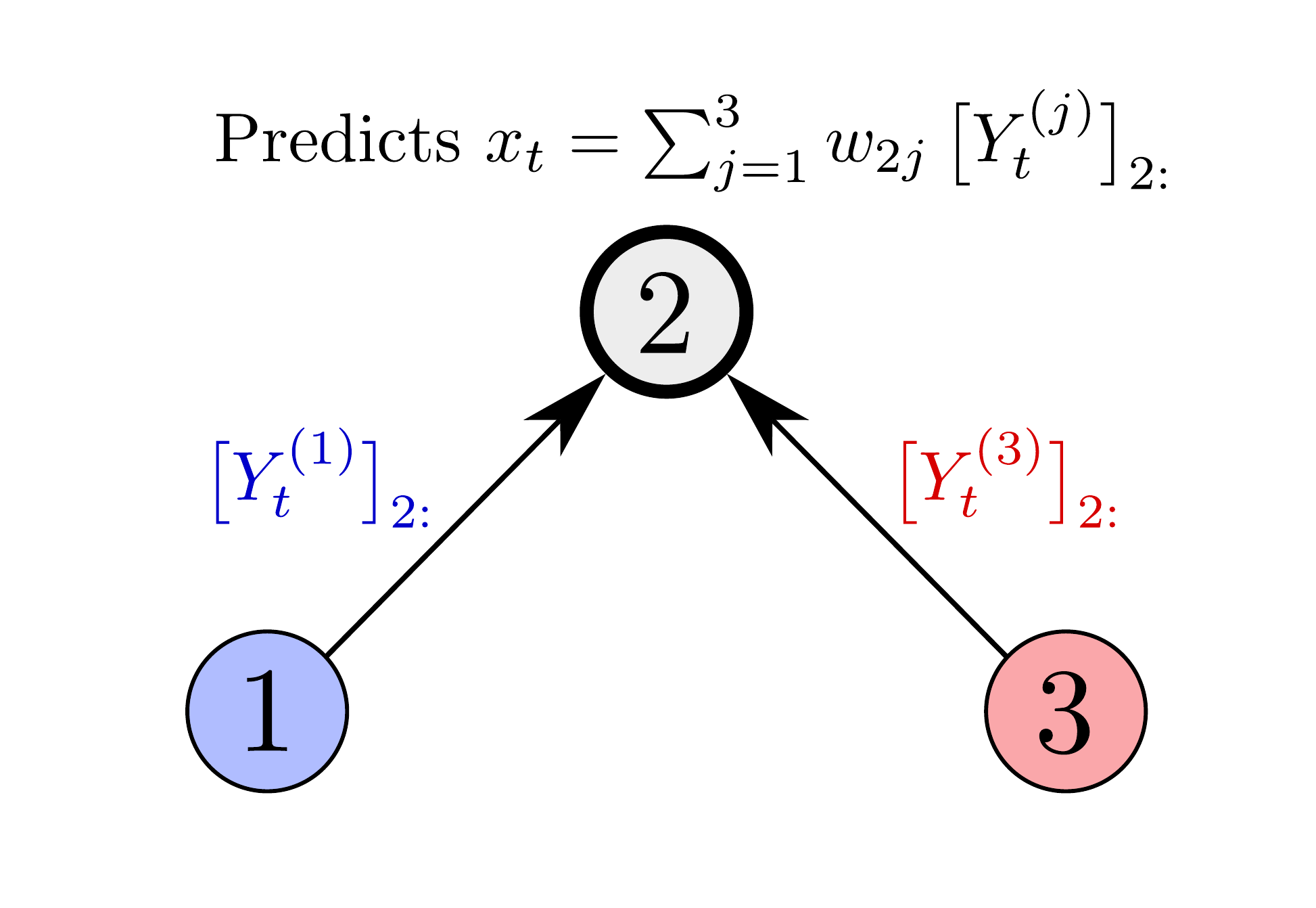}\hfill
\includegraphics[height=3.5cm]{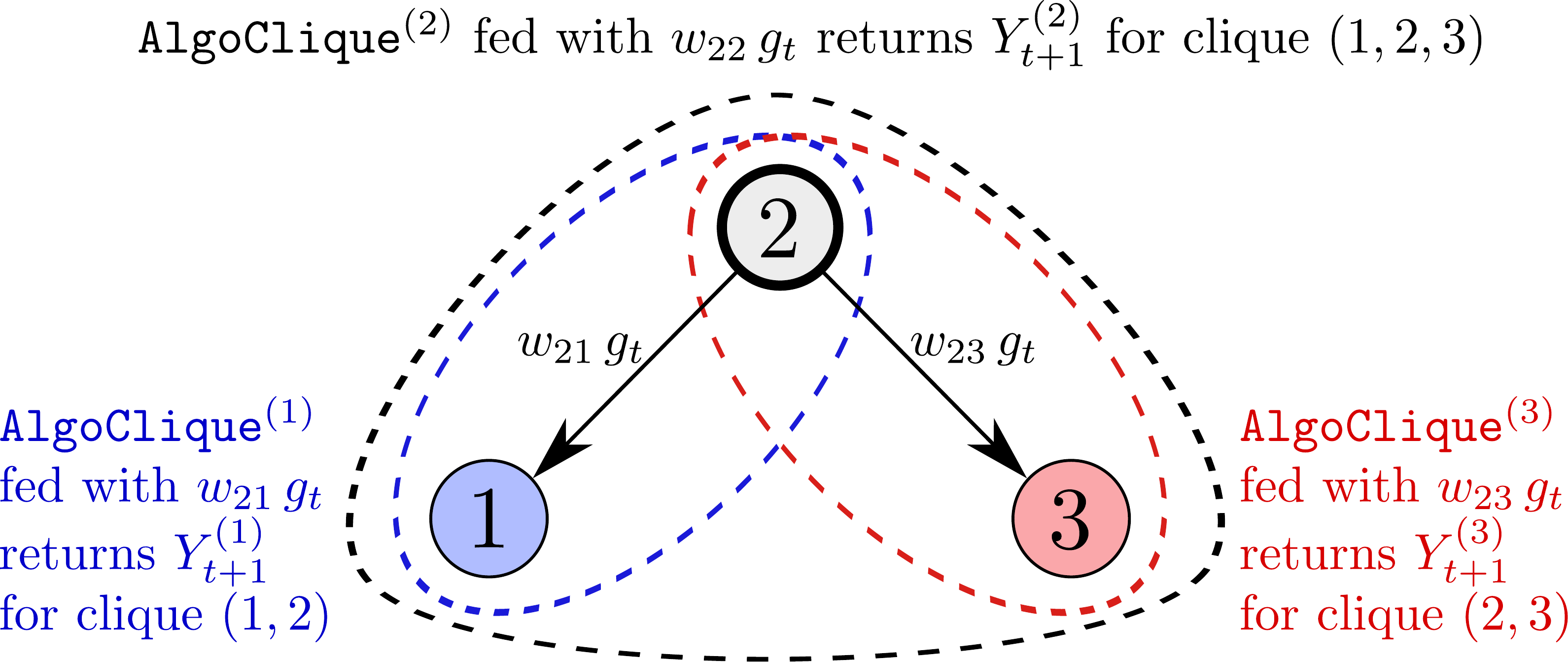}
\hspace{.3cm}
\caption{One Iteration of \Coolcn. Active agent $2$ fetches their neighbor's models and predicts with their weighted average $x_t$ (left). Then $2$ pays $\ell_t(x_t)$, observes $g_t \in \partial\ell_t(x_t)$, and sends back $w_{2j}\,g_t$. Finally, each local  $\Aclique$ instance
updates the models for the agents in the virtual clique centered on the agent running the instance (right).\looseness-1}
\label{fig:cool_cn}
\end{figure*}

In this section, we introduce and analyze \mtcool, a meta-algorithm for MultiTask COmmunication-COnstrained Online Learning.
In particular, we prove two sets of regret upper bounds, depending on whether the agent activations are adversarial or stochastic.
We also prove some lower bounds on the regret.

\Coolcn uses as building block any multitask algorithm able to operate when the communication graph is a clique. In the following, we generically refer to this base algorithm as \Aclique.
\Coolcn requires each agent $j \in V$ to run an instance of \Aclique on a virtual clique over its neighbors $i \in \mathcal{N}_j$.\footnote{Note that the alternative consisting in running \Aclique for each pair $(i,j) \in E$ fails, see \Cref{sec:fail}.}
The instance of \Aclique run by $j$ maintains a matrix $Y_t^{(j)} \in \mathbb{R}^{N_j \times d}$, whose each row $\big[Y_t^{(j)}\big]_{i:}$ stores a model associated to task $i$, for any $i \in \scN_j$.
%
At time $t$, each $j\in\scN_{i_t}$ sends to the active agent $i_t$ (or equivalently $i_t$ fetches from $j$) the model $\big[Y_t^{(j)}\big]_{i_t:}$. 
The prediction made by $i_t$ at time $t$ is the weighted average $x_t = \sum_{j \in \scN_{i_t}} w_{i_tj} \big[Y_t^{(j)}\big]_{i_t:}$, where $\wij$ are non-negative weights satisfying $\sum_{j\in \scN_{i}} \wij =1$.
After predicting, $i_t$ observes $g_t \in \partial\ell_t(x_t)$ and sends $w_{i_tj}\,g_t$ to each $j\in\scN_{i_t}$.
Agents $j\in\scN_{i_t}$ then feed the linear loss $\langle w_{i_tj} g_t, \cdot\rangle$ to their local instance of \Aclique, obtaining the updated matrix $Y_{t+1}^{(j)}$.
The pseudocode of \mtcool is summarized in \Cref{alg:global_alg}.
See \Cref{fig:cool_cn} for an illustration of one iteration.
One interpretation behind \Coolcn is as follows: by fetching the predictions maintained by their neighbors, the active agent $i_t$ actually leverages information from the neighbors of their neighbors, i.e., from agents $k \in \scN_j$, for $j \in \scN_{i_t}$.
Indeed, these agents have a direct influence $Y_t^{(j)}$, so in particular on $\big[Y_t^{(j)}\big]_{i_t}$.
By iterating this mechanism, \Coolcn allows to propagate information along the communication graph.
Note that the local instance of \Aclique run by agent $j$ does not have access to the global time $t$, e.g., to set the learning rate, but may only use the local time $\sum_{s \le t} \Ind{i_s \in \scN_j}$.

An important consequence of the fetch and send operations in \Coolcn is that they enable running \Aclique \textit{as if each agent $j$ were part of an isolated clique over $\mathcal{N}_j$}, which in turn makes \Cref{thm:mt-ftrl} applicable.
Formally, let \Coolcn be run over an arbitrary sequence $i_1,\ldots,i_T$ of agent activations in a communication network $G = (V,E)$.
Then, for each $j \in V$ and time $t$, the matrix $Y_t^{(j)}$ of models computed by \Aclique run at node $j$ is identical to the matrix computed by \Aclique run on a clique over $\scN_j$ with the sequence of activations restricted to $\scN_j$.
This simple yet key observation allows us to derive the following lemma, which shows that the regret of \Coolcn can be expressed in terms of the regrets suffered by \Aclique run on the artificial cliques $\scN_j$, for $j \in V$.
\vspace{-0.3cm}

\begin{lemma}\label{lem:cool-cn}
For any $U \in \mathbb{R}^{N \times d}$, let $U^{(j)} \in \mathbb{R}^{N_j \times d}$ be the matrix whose rows contain all $U_{i:}$ for $i$ in $\scN_j$, sorted in ascending order of $i$.
Furthermore, let $R^\textnormal{clique-$j$}_T$ be the regret suffered by \Aclique run by agent $j$ on the linear losses $\langle w_{i_tj}\,g_t, \cdot\rangle$ over the rounds $t \le T$ such that $i_t \in \mathcal{N}_j$.
Then, the regret of \Coolcn satisfies
\[
\forall\, U \in \mathbb{R}^{N \times d}, \quad R_T(U) \le \sum_{j=1}^N R^\textnormal{clique-$j$}_T\big(U^{(j)}\big)\,.
\]
\end{lemma}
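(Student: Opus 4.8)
The plan is to reduce the multitask regret of \Coolcn, through convexity and the normalization of the mixing weights, to a sum over agents of the (linear) regrets incurred by the local \Aclique instances, and then match this sum term-by-term with the definitions in the statement. No property of \Aclique beyond the accounting of its predictions and losses is needed.

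First I would linearize. Since each $\ell_t$ is convex and $g_t \in \partial\ell_t(x_t)$, the subgradient inequality gives $\ell_t(x_t) - \ell_t(U_{i_t:}) \le \langle g_t,\, x_t - U_{i_t:}\rangle$, so
\[
R_T(U) \le \sum_{t=1}^T \big\langle g_t,\, x_t - U_{i_t:}\big\rangle\,.
\]
Next I would exploit that the prediction is a convex combination. Because $x_t = \sum_{j \in \mathcal{N}_{i_t}} w_{i_tj}\,\big[Y_t^{(j)}\big]_{i_t:}$ with $\sum_{j \in \mathcal{N}_{i_t}} w_{i_tj} = 1$, I can write $x_t - U_{i_t:} = \sum_{j \in \mathcal{N}_{i_t}} w_{i_tj}\big(\big[Y_t^{(j)}\big]_{i_t:} - U_{i_t:}\big)$, hence
\[
\big\langle g_t,\, x_t - U_{i_t:}\big\rangle = \sum_{j \in \mathcal{N}_{i_t}} \big\langle w_{i_tj}\,g_t,\; \big[Y_t^{(j)}\big]_{i_t:} - U_{i_t:}\big\rangle\,.
\]

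Then I would exchange the order of summation. Since $G$ is undirected, the condition $j \in \mathcal{N}_{i_t}$ is equivalent to $i_t \in \mathcal{N}_j$, so summing first over $t$ and then over $j \in \mathcal{N}_{i_t}$ is the same as summing over $j \in V$ and then over the rounds $t$ with $i_t \in \mathcal{N}_j$:
\[
R_T(U) \le \sum_{j=1}^N \sum_{t\,:\,i_t \in \mathcal{N}_j} \big\langle w_{i_tj}\,g_t,\; \big[Y_t^{(j)}\big]_{i_t:} - U_{i_t:}\big\rangle\,.
\]
Finally I would identify the inner sum with $R^\textnormal{clique-$j$}_T\big(U^{(j)}\big)$: on the virtual clique over $\mathcal{N}_j$, at each active round $t$ (those with $i_t \in \mathcal{N}_j$) the \Aclique instance run by $j$ predicts the row $\big[Y_t^{(j)}\big]_{i_t:}$, is fed the linear loss $\langle w_{i_tj}\,g_t,\,\cdot\,\rangle$, and is compared against the row of $U^{(j)}$ indexed by $i_t$, which is exactly $U_{i_t:}$; summing these instantaneous regrets reproduces the inner sum verbatim.

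The reduction itself is elementary — convexity plus a Fubini-type exchange — and I expect the only point requiring care, which I would treat as the crux, to be this last identification. I must ensure the \Aclique regret is measured against the instance's own maintained rows $\big[Y_t^{(j)}\big]_{i_t:}$ (not the aggregate prediction $x_t$) and on the reweighted gradients $w_{i_tj}\,g_t$, and that the comparator row of $U^{(j)}$ selected by the active agent is precisely $U_{i_t:}$. This bookkeeping is exactly what the ``as if in an isolated clique'' observation preceding the lemma secures — the models $Y_t^{(j)}$ produced at node $j$ coincide with those \Aclique would produce on a standalone clique over $\mathcal{N}_j$ driven by the restricted activation sequence — so no argument about the internal dynamics of \Aclique is needed.
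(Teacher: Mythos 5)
Your proof is correct and follows essentially the same route as the paper's: linearize via the subgradient inequality, expand $x_t - U_{i_t:}$ using the weight normalization, swap the order of summation over $t$ and $j$ (the paper packages this exchange, which hinges on $j \in \mathcal{N}_i \iff i \in \mathcal{N}_j$, into a separate small lemma), and identify each inner sum with $R^{\textnormal{clique-}j}_T\big(U^{(j)}\big)$. Your explicit attention to the final bookkeeping step — that the clique regret is measured on the rows $\big[Y_t^{(j)}\big]_{i_t:}$ with the reweighted gradients and comparator rows of $U^{(j)}$ — matches the ``isolated clique'' observation the paper states just before the lemma.
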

\begin{proof}
Let $x_1,\ldots,x_T$ be the predictions of \Coolcn.
%
By convexity of the $\ell_t$, we have
\begin{align}
&\sum_{t=1}^T \ell_t(x_t) - \ell_t(U_{i_t:}) \le \sum_{t=1}^T \langle g_t, x_t - U_{i_t:}\rangle\label{eq:linearization}\\
&~~= \sum_{t=1}^T \sum_{i=1}^N  \Big\langle g_t, \sum_{j \in \mathcal{N}_i} \wij\big[Y_t^{(j)}\big]_{i:} -U_{i:} \Big\rangle \Ind{i_t=i}\nonumber\\
&~~= \sum_{t=1}^T \sum_{i=1}^N \sum_{j \in \mathcal{N}_i}   \left\langle g_t,   \wij\left(\big[Y_t^{(j)}\big]_{i:} -U_{i:} \right) \right\rangle \Ind{i_t=i}\nonumber\\
&~~= \sum_{j=1}^N \sum_{t=1}^T \sum_{i \in \mathcal{N}_j} \left\langle \wij\,g_t,  \big[Y_t^{(j)}\big]_{i:} -U_{i:}  \right\rangle \Ind{i_t=i}\label{eq:invert}\\
&~~= \sum_{j=1}^N R^\text{clique-$j$}_T\big(U^{(j)}\big)\,,\nonumber
\end{align}
where \eqref{eq:invert} follows from \Cref{lem:invert}.
\end{proof}

Note that we could obtain similar guarantees by using the convexity of $\ell_t$ and postponing the linearization step \eqref{eq:linearization} to the individual clique regrets.
However, this would require computing the subgradients at the partial predictions $\big[Y_t^{(j)}\big]_{i:}$.
Instead, \mtcool only uses the subgradient $g_t$ evaluated at the true prediction $x_t$.
In the next section, we show how to leverage \Cref{lem:cool-cn} to derive regret bounds for \mtcool.

\subsection{Adversarial Activations}
\label{sec:adversarial_activations}

In this section, we assume the sequence $i_1,i_2,\ldots\in V$ of agent activations is chosen by an oblivious adversary.
In the next theorem, we show that setting \Aclique as \MTFTRL yields good regret bounds.
Importantly, since the instance of \MTFTRL of agent $j$ is run on the virtual clique $\scN_j$, the regret bound scales with the \emph{local} task variances $\sigma^2_j = \frac{1}{2N_j(N_j-1)} \sum_{i, i'\in \mathcal{N}_j} \big\|U_{i:} - U_{i':}\big\|_2^2$,
quantifying how similar neighboring tasks are.
%
Let $\sigma^2_\textnormal{max} = \max_{j \in [N]} \sigma^2_j$ and $\sigma^2_\textnormal{min} = \min_{j \in [N]} \sigma^2_j$.
We are now ready to state our result.
%
\begin{restatable}{theorem}{thmanyG}\label{thm:any-G}
Let $G = (V,E)$ be any communication graph.
Consider \Coolcn where the base algorithm \Aclique run by each agent $j\in V$ is an instance of \MTFTRL with parameters $N=N_j$ and $\beta_{t-1} = \max_{i\in \scN_j} \wij \sqrt{1+\sum_{s \le t-1}\Ind{i_s \in \scN_j}}$.
Then, the regret of \Coolcn satisfies for all $U \in \mathcal{U}$
\[
R_T(U) \tildeO \sum_{j=1}^N \max_{i\in \scN_j} \wij \sqrt{1 + \sigma_j^2 (N_j-1)}\sqrt{\sum_{i \in \scN_j} T_i}\,.
\]
Setting $w_{ij} = \Ind{j\in\mathcal{N}_i}/N_i$ we obtain
%
\begin{align*}
R_T(U) \tildeO \min\Bigg\{ \!&\frac{\sqrt{NN_\textnormal{max}}}{N_\textnormal{min}}\sqrt{1 + \sigma_\textnormal{max}^2(N_\textnormal{max}-1)}\,,\\
&~\frac{N}{N_\textnormal{min}}\sqrt{1 + \bar{\sigma}^2(N_\textnormal{max}-1)} \Bigg\} \sqrt{T}\,.
\end{align*}
%
where $\bar{\sigma}^2 = (1/N)\sum_{j=1}^N\sigma_j^2$ is the average local variance.
Finally, there exist some $w_{ij}$ such that
\[
R_T(U) \tildeO \sqrt{1 + \Delta^2 (N_\textnormal{max}-1)} \sqrt{\gamma(G)\, T}\,,
\]
where $\Delta^2 = \sup_{(i,j) \in E} \|U_{i:} - U_{j:}\|_2^2$.
\end{restatable}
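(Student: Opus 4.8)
The plan is to instantiate the general bound of \Cref{lem:cool-cn} with weights supported on a minimum dominating set, so that all but $\gamma(G)$ of the clique instances become silent. Concretely, let $D\subseteq V$ be a dominating set of minimum cardinality $|D|=\gamma(G)$. Since $D$ is dominating and $i\in\scN_i$, for every agent $i$ I can select a single dominator $d(i)\in D\cap\scN_i$ (taking $d(i)=i$ whenever $i\in D$), and then set $w_{i,d(i)}=1$ and $\wij=0$ for $j\ne d(i)$; this is admissible since $\sum_{j\in\scN_i}\wij=1$. Writing $D_j=\{i\in V: d(i)=j\}$, the sets $\{D_j\}_{j\in D}$ form a partition of $V$, and by symmetry of the closed neighborhoods one has $D_j\subseteq\scN_j$.

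With these weights, any agent $j\notin D$ only ever receives the null loss $\langle\wij[i_t]\,g_t,\cdot\rangle=0$ (the sent gradient $w_{i_t j}\,g_t$ vanishes because $d(i_t)\in D$ never equals $j$), so its instance of \MTFTRL suffers $R^{\textnormal{clique-}j}_T(U^{(j)})=0$; by \Cref{lem:cool-cn} the regret collapses to $\sum_{j\in D}R^{\textnormal{clique-}j}_T(U^{(j)})$. For $j\in D$, the instance receives a nonzero gradient only on the $\sum_{i\in D_j}T_i$ rounds where an agent of $D_j$ is active, and on those rounds the effective gradient $w_{i_t j}\,g_t=g_t$ has unit norm. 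The step I expect to be the crux is to make the per-clique regret scale with this \emph{effective} horizon $\sum_{i\in D_j}T_i$ rather than the full local horizon $\sum_{i\in\scN_j}T_i$: this is precisely what avoids the neighborhood-overlap (double counting) that a crude application of the first bound of \Cref{thm:any-G} would incur, and which would cost an extra factor. I would achieve it by letting agent $j$'s learning rate adapt to the norms of the received gradients---equivalently, advancing the local clock only on rounds that carry a nonzero update---so that \Cref{thm:mt-ftrl} applies with horizon $\sum_{i\in D_j}T_i$ and yields $R^{\textnormal{clique-}j}_T(U^{(j)})\tildeO\sqrt{1+\sigma_j^2(N_j-1)}\sqrt{\sum_{i\in D_j}T_i}$.

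It then remains to control the two surviving quantities. For the variance factor I would use that every $i\in\scN_j$ is adjacent to, or equal to, $j$, so for any $i,i'\in\scN_j$ the triangle inequality gives $\|U_{i:}-U_{i':}\|_2\le\|U_{i:}-U_{j:}\|_2+\|U_{j:}-U_{i':}\|_2\le2\Delta$; substituting into the definition of $\sigma_j^2$ yields $\sigma_j^2(N_j-1)\le2\Delta^2(N_j-1)\le2\Delta^2(N_\textnormal{max}-1)$, hence $\sqrt{1+\sigma_j^2(N_j-1)}\tildeO\sqrt{1+\Delta^2(N_\textnormal{max}-1)}$ uniformly in $j$. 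For the horizon factor, since $\{D_j\}_{j\in D}$ partitions $V$, Cauchy--Schwarz over the $\gamma(G)$ dominators gives
\[
\sum_{j\in D}\sqrt{\sum_{i\in D_j}T_i}\le\sqrt{|D|}\,\sqrt{\sum_{j\in D}\sum_{i\in D_j}T_i}=\sqrt{\gamma(G)}\,\sqrt{\sum_{i\in V}T_i}=\sqrt{\gamma(G)\,T}\,.
\]
Combining the three estimates delivers $R_T(U)\tildeO\sqrt{1+\Delta^2(N_\textnormal{max}-1)}\sqrt{\gamma(G)\,T}$, as claimed. The only delicate point is the one flagged above: the gain from the domination number hinges on the per-clique bound seeing only the effective horizon $\sum_{i\in D_j}T_i$, which is exactly where the binary, dominating-set-supported weights pay off, whereas the remaining manipulations are routine.
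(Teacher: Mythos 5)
Your proposal establishes only the \emph{third} assertion of \Cref{thm:any-G} (the $\sqrt{1+\Delta^2(N_\textnormal{max}-1)}\sqrt{\gamma(G)\,T}$ bound); the first two are left unproven, and they are where most of the paper's work lies. The first bound---valid for arbitrary weights $\wij$---cannot be imported from \Cref{thm:mt-ftrl} as a black box: that theorem is stated for a clique with unit-norm gradients, global time, and $\beta_{t-1}=\sqrt{t}$, whereas here agent $j$'s instance receives the scaled gradients $w_{i_tj}\,g_t$ on an adversarial subsequence of rounds and runs on the local clock $\sum_{s\le t}\Ind{i_s\in\scN_j}$. The paper re-derives the bound: it splits the clique-$j$ regret into the \texttt{Hedge} regret plus the regret of the expert $\bar{\xi}$ closest to $\sigma_j^2$, tunes $\beta=\max_{i\in\scN_j}\wij$ to balance the terms $\|\text{loss}_t\|_\infty^2\le w_{i_tj}^2$ against the horizon term, and treats the cases $\sigma_j^2\le 1$ and $\sigma_j^2\ge 1$ separately. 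Your argument silently relies on precisely this machinery (your per-clique bound with effective horizon \emph{is} a weighted, local-clock variant of it) but never supplies it. The second assertion (uniform weights $w_{ij}=\Ind{j\in\mathcal{N}_i}/N_i$), which the paper obtains from the first via Jensen's inequality and the index-swap identity of \Cref{lem:invert}, is not addressed at all.

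For the part you do prove, your route coincides with the paper's: it also sets $w_{ij}=\delta_{jj(i)}$ for a minimum dominating set $S_\gamma(G)$, observes that the clique-$j$ regret collapses to the rounds with $i_t\in\mathcal{V}_j$ (your $D_j$), bounds the relevant variance by a multiple of $\Delta^2$, and concludes by Cauchy--Schwarz over $|S_\gamma(G)|=\gamma(G)$. To your credit, you isolated the one genuinely delicate point: the per-clique bound must scale with the effective horizon $\sum_{i\in D_j}T_i$ rather than $\sum_{i\in\scN_j}T_i$, since neighborhoods of distinct dominators overlap (e.g., a hub adjacent to all dominators would otherwise inflate the bound to order $\gamma(G)\sqrt{T}$). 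This also means the learning rate cannot literally be the one displayed in the theorem statement, whose clock advances on all of $\scN_j$; the paper resolves this by unrolling its first-claim proof with $\scN_j$ replaced by $\mathcal{V}_j$, i.e., with exactly the restricted clock you describe, so your fix is the right one. But as a proof of \Cref{thm:any-G} in its entirety, the proposal has a genuine gap of coverage: two of the three claimed bounds, including the general one on which your own argument leans, are missing.
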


\Cref{thm:any-G} provides three results with different flavors.
The first bound is the most general, and holds for any choice of weights $w_{ij}$.
It shows that the regret of \mtcool improves as the $\sigma_j^2$ get smaller, i.e., when neighbors in $G$ have similar tasks.
The second bound uses uniform weights that only depend on local information (the neighborhood sizes). Interestingly, the bound depends on the total horizon $T$ instead of the individual $T_i$, and gets smaller as $G$ becomes dense ($N_\textnormal{min} \gg 1$) and regular ($N_\textnormal{max}/N_\textnormal{min} \to 1)$. 
Finally, the third bound uses knowledge of the full graph $G$ (in particular of its smallest dominating set, which is NP-hard to compute) to set the weights.
This bound can be viewed as a multitask version of the single-task regret bound $\sqrt{\gamma(G)\,T}$ (recovered for $\Delta^2=0$) achieved by the centralized algorithm that pre-computes this dominating set and then delegates the predictions of each node to its dominating node.
%
These bounds also show the improvement brought by sharing models, as methods sharing only gradients suffer from a lower bound of $\sqrt{\alpha(G)\,T}$ \citep{cesa2020cooperative}.
We now instantiate our bound for uniform weights to some particular graphs of interest.

\begin{restatable}{corollary}{corspecialgraph}\label{cor:reggraphs_cliques}
Under the assumptions of \Cref{thm:any-G}, set $w_{ij} = \Ind{j\in\mathcal{N}_i}/N_i$.
If $G$ is $K$-regular, then
\[
R_T(U) \tildeO \sqrt{1 + K\,\sigma_\textnormal{max}^2} \, \sqrt{\frac{NT}{K+1}}\,.
\]
If $G$ is a union of $\chi$ cliques, then\looseness-1
\[
R_T(U) \tildeO \sqrt{\chi + \sigma^2_{\textnormal{max}}(N - \chi)} \sqrt{T}\,.
\]
\end{restatable}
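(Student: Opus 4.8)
The plan is to obtain both bounds as direct specializations of Theorem~\ref{thm:any-G} with the uniform weights $w_{ij} = \Ind{j \in \mathcal{N}_i}/N_i$, reading off the relevant graph parameters from each structure. For the $K$-regular case I would invoke the second (uniform-weight) bound of the theorem, whereas for the union of cliques I would start from the first (fully general) bound, since its per-agent summands factorize neatly over the clique components.

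For the $K$-regular graph, the key observation is that every neighborhood $\mathcal{N}_i$ contains $i$ together with its $K$ neighbors, so $N_i = K+1$ for all $i$, giving $N_\textnormal{min} = N_\textnormal{max} = K+1$. Substituting these into the first term of the minimum in the second bound of Theorem~\ref{thm:any-G} turns the prefactor $\sqrt{N N_\textnormal{max}}/N_\textnormal{min}$ into $\sqrt{N(K+1)}/(K+1) = \sqrt{N/(K+1)}$, and the variance factor $\sqrt{1 + \sigma_\textnormal{max}^2(N_\textnormal{max}-1)}$ into $\sqrt{1 + K\sigma_\textnormal{max}^2}$. Collecting the factors yields the claimed $\tildeO\big(\sqrt{1+K\sigma_\textnormal{max}^2}\sqrt{NT/(K+1)}\big)$ with no further work.

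For the union of $\chi$ cliques, I would exploit that each connected component is complete: for any agent $j$ lying in a clique $C$ one has $\mathcal{N}_j = C$, $N_j = |C|$, and $w_{ij} = 1/|C|$ for every $i \in C$, so $\max_{i \in \mathcal{N}_j} w_{ij} = 1/|C|$; moreover all agents of $C$ share the same neighborhood and hence the same local variance, which I denote $\sigma_C^2$. Writing $T_C = \sum_{i \in C} T_i$, the general bound of Theorem~\ref{thm:any-G} then contributes, for each of the $|C|$ agents in $C$, the identical term $\frac{1}{|C|}\sqrt{1 + \sigma_C^2(|C|-1)}\sqrt{T_C}$; summing over these $|C|$ agents cancels the $1/|C|$ factor, reducing the total regret to $\sum_{C}\sqrt{1 + \sigma_C^2(|C|-1)}\sqrt{T_C}$ over the $\chi$ cliques.

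To finish, I would upper bound each $\sigma_C^2$ by $\sigma_\textnormal{max}^2$ and apply Cauchy--Schwarz across the $\chi$ cliques, obtaining $\sum_C \sqrt{1+\sigma_\textnormal{max}^2(|C|-1)}\sqrt{T_C} \le \sqrt{\sum_C \big(1 + \sigma_\textnormal{max}^2(|C|-1)\big)}\,\sqrt{\sum_C T_C}$. The two sums evaluate, via $\sum_C |C| = N$ and $\sum_C T_C = T$, to $\chi + \sigma_\textnormal{max}^2(N-\chi)$ and $T$ respectively, which is exactly the stated bound. This last step is the only place requiring genuine care: one must correctly identify $\sum_C(|C|-1) = N - \chi$ and check that Cauchy--Schwarz is applied in the direction that aggregates the $\chi$ per-clique terms. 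The other mild pitfall is the off-by-one in the regular case ($N_i = K+1$ rather than $K$, since $\mathcal{N}_i$ includes $i$ itself); everything else is a mechanical substitution into Theorem~\ref{thm:any-G}.
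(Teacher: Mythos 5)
Your proposal is correct and follows essentially the same route as the paper: the $K$-regular bound is obtained by substituting $N_\textnormal{min} = N_\textnormal{max} = K+1$ into the uniform-weight bound of \Cref{thm:any-G}, and the union-of-cliques bound by specializing the general per-agent sum (the paper's starting point \eqref{eq:staring_point}), noting the $1/|C_k|$ cancellation within each clique, bounding $\sigma_j^2 \le \sigma_\textnormal{max}^2$, and applying Cauchy--Schwarz with $\sum_k(|C_k|-1) = N-\chi$. Your accounting of the off-by-one ($N_i = K+1$) and of the direction of Cauchy--Schwarz matches the paper's computation exactly.
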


The regret bound for $K$-regular graphs improves upon the $\sqrt{\alpha(G)\,T}$ bound proven by \citet{cesa2020cooperative} in the more restrictive setting of single-task problems with stochastic activations.
Indeed, by Turan's theorem, see e.g., \citet[Lemma~3]{mannor2011bandits}, we have that $N/(K+1) \le \alpha(G)$ for $K$-regular graphs.
This shows that the fetch step in \Coolcn, which is missing in the learning protocol of \citet{cesa2020cooperative}, is key to derive improved regret guarantees.
Regarding the second bound in \Cref{cor:reggraphs_cliques}, we recover the bound of \texttt{MT-FTRL} \citep{cesa2022multitask} when $\chi=1$.
Notably, \mtcool achieves these regret bounds without requiring knowledge of the graph structure.

\subsection{Stochastic Activations}
In this section, we show that \mtcool enjoys improved regret bounds when the agent activations are stochastic rather than adversarial.
Formally, let $q_1, \ldots, q_N \in [0, 1]^N$ such that $\sum_{i=1}^N q_i = 1$.
Let $i_1,\ldots,i_T$ be independent random variables denoting agent activations
such that for every $t$ we have $\mathbb{P}(i_t = i) = q_i$.
We also define $Q_j = \sum_{i \in \scN_j} q_i$.
In the following theorem, we show how \mtcool can adapt to the stochasticity to attain better regret bounds.
For simplicity, we assume the algorithm discussed below has access to the conditional probabilities $q_i/Q_j$.
In \Cref{rmk:unknown_q}, we discuss a simple strategy to extend the analysis to the case where only a lower bound on $\min_i q_i$ is available.

\begin{restatable}{theorem}{thmsto}\label{thm:sto}
%
%
%
Let $G = (V,E)$ be any communication graph. Consider \Coolcn where the base algorithm \Aclique run by each agent $j\in V$ is an instance of \MTFTRL with parameters $N=N_j$ and $\beta_{t-1} = \sqrt{\sum_{i \in \scN_j} \frac{q_i}{Q_j}\,w_{ij}^2} \sqrt{1+\sum_{s \le t-1}\Ind{i_s \in \scN_j}}$.
Then, the regret of \Coolcn satisfies for all $U \in \mathcal{U}$
\[
\mathbb{E}[R_T(U)] \tildeO \hspace{-0.04cm}\left(\sum_{j=1}^N \sqrt{\sum_{i\in \mathcal{N}_j}q_i\,w_{ij}^2} \sqrt{1 + \sigma_j^2 (N_j-1)}\right) \hspace{-0.1cm}\sqrt{T}\,,
\]
where the expectation is taken over $i_1,\ldots,i_T$.
Setting $w_{ij} = q_j/Q_i$ we obtain
\[
\mathbb{E}[R_T(U)] \tildeO \sqrt{1+\sigma_\textnormal{max}^2(N_\textnormal{max}-1)}\sqrt{\alpha(G)\,T}\,.
\]
\end{restatable}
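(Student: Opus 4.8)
The plan is to reduce the analysis to the per-clique regrets via \Cref{lem:cool-cn} and then control each contribution in expectation, exploiting the fact that the learning rate $\beta_{t-1}$ is a \emph{predictable} proxy for the cumulative squared gradient norm seen by clique $j$. First I would apply \Cref{lem:cool-cn} to write $R_T(U) \le \sum_{j=1}^N R^\textnormal{clique-$j$}_T(U^{(j)})$, so that it suffices to bound each $\mathbb{E}\big[R^\textnormal{clique-$j$}_T\big]$ separately. Each clique-$j$ instance is \MTFTRL run on $\scN_j$ fed with the scaled gradients $w_{i_tj}g_t$ (of norm at most $w_{i_tj}$) over the rounds $t$ with $i_t \in \scN_j$. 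Writing $\nu_j = \sqrt{\sum_{i\in\scN_j}\tfrac{q_i}{Q_j}w_{ij}^2}$ and $n_{t,j} = \sum_{s\le t}\Ind{i_s\in\scN_j}$ for the local activation count, the chosen learning rate is governed by $\beta_{t-1}^{(j)} = \nu_j\sqrt{1+n_{t-1,j}}$. I would then invoke the adaptive-learning-rate version of the \MTFTRL analysis underlying \Cref{thm:mt-ftrl}, which for a predictable $\beta$ and scaled gradients yields
\[
R^\textnormal{clique-$j$}_T \tildeO \sqrt{1+\sigma_j^2(N_j-1)}\Big(\beta_{T}^{(j)} + \sum_{t\,:\,i_t\in\scN_j}\frac{w_{i_tj}^2}{\beta_{t-1}^{(j)}}\Big)\,.
\]

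The crux is to show that both terms are, in expectation, of order $\nu_j\sqrt{TQ_j} = \sqrt{T\sum_{i\in\scN_j}q_iw_{ij}^2}$. For the first term, Jensen's inequality and $\mathbb{E}[n_{T,j}] = TQ_j$ give $\mathbb{E}[\beta_T^{(j)}] = \nu_j\,\mathbb{E}\big[\sqrt{1+n_{T,j}}\big] \le \nu_j\sqrt{1+TQ_j}$. The second (stability) term is the main obstacle, because the denominators $\beta_{t-1}^{(j)}$ depend on the random count $n_{t-1,j}$ and Jensen points the wrong way for $1/\sqrt{\cdot}$. The device I would use is a \emph{marking/thinning} argument: letting $\tau_1<\tau_2<\cdots$ denote the rounds with $i_t\in\scN_j$, the activated agents $i_{\tau_1},i_{\tau_2},\ldots$ are i.i.d.\ with law $q_i/Q_j$ on $\scN_j$ and independent of the times $\tau_k$, while $\beta_{\tau_k-1}^{(j)} = \nu_j\sqrt{k}$ depends only on $k$. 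Since $\mathbb{E}\big[w_{i_{\tau_k}j}^2\big] = \nu_j^2$ by the definition of $\nu_j$, conditioning on $m = n_{T,j}$ yields $\mathbb{E}\big[\sum_{t:i_t\in\scN_j} w_{i_tj}^2/\beta_{t-1}^{(j)} \mid m\big] = \sum_{k=1}^m \nu_j/\sqrt{k} \le 2\nu_j\sqrt{m}$, and a final application of Jensen gives $\mathbb{E}\big[\sqrt{m}\big]\le\sqrt{TQ_j}$. Combining the two terms, $\mathbb{E}[R^\textnormal{clique-$j$}_T] \tildeO \sqrt{1+\sigma_j^2(N_j-1)}\sqrt{T\sum_{i\in\scN_j}q_iw_{ij}^2}$, and summing over $j$ proves the first displayed bound.

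For the second bound I would specialize to $w_{ij} = q_j/Q_i$, which is admissible since $\sum_{j\in\scN_i}w_{ij} = Q_i/Q_i = 1$. Then $\sum_{i\in\scN_j}q_iw_{ij}^2 = q_j^2\sum_{i\in\scN_j}q_i/Q_i^2$, so after bounding $\sqrt{1+\sigma_j^2(N_j-1)}\le\sqrt{1+\sigma_\textnormal{max}^2(N_\textnormal{max}-1)}$ and pulling it out, it remains to control $\sum_{j}q_j\sqrt{\sum_{i\in\scN_j}q_i/Q_i^2}$. By Cauchy--Schwarz this is at most $\sqrt{\sum_j q_j}\cdot\sqrt{\sum_j q_j\sum_{i\in\scN_j}q_i/Q_i^2}$, and swapping the order of summation (using $j\in\scN_i\iff i\in\scN_j$) collapses the inner sum to $\sum_j q_j\sum_{i\in\scN_j}q_i/Q_i^2 = \sum_i (q_i/Q_i^2)\sum_{j\in\scN_i}q_j = \sum_i q_i/Q_i$. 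Finally I would invoke the standard fractional-independence estimate $\sum_{i}q_i/Q_i\le\alpha(G)$ (valid for any activation distribution, with equality on both edgeless and complete graphs), giving $\sum_j\sqrt{\sum_{i\in\scN_j}q_iw_{ij}^2}\le\sqrt{\alpha(G)}$ and hence the claimed $\sqrt{1+\sigma_\textnormal{max}^2(N_\textnormal{max}-1)}\sqrt{\alpha(G)\,T}$ rate. The genuinely delicate step is the expectation control of the stability term through the marking argument; the rest is a deterministic rearrangement combined with the known graph inequality.
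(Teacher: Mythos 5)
Your proposal is correct and follows essentially the same route as the paper's proof: reduce to per-clique regrets via \Cref{lem:cool-cn}, bound each clique regret by a learning-rate term plus a stability term, decouple $w_{i_tj}^2$ from the random denominator using the fact that, conditionally on an activation occurring in $\mathcal{N}_j$, the identity of the active agent has law $q_i/Q_j$ independently of the local clock (the paper implements this by per-step conditioning on $i_1,\ldots,i_{t-1}$ and $\{i_t\in\mathcal{N}_j\}$, you by conditioning on the thinned activation times---the same decoupling in different bookkeeping), then finish with $\sum_{k\le m}1/\sqrt{k}\le 2\sqrt{m}$ and Jensen's inequality, exactly as in the paper. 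Your treatment of the choice $w_{ij}=q_j/Q_i$ (concavity/Cauchy--Schwarz, index swap via $i\in\mathcal{N}_j\iff j\in\mathcal{N}_i$, and the combinatorial bound $\sum_i q_i/Q_i\le\alpha(G)$) reproduces the paper's chain of inequalities verbatim.
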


The improvement with respect to \Cref{thm:any-G} is a natural consequence of the stochastic activations. 
Indeed, the term $\max_{i \in \scN_j} w_{ij}$ arises because agent $j$ receives scaled gradients from its neighbors $i \in \scN_j$, with squared norms bounded by $\|w_{ij}\,g_t\|_2^2 \le \max_{i \in \scN_j} w_{ij}^2$.
With stochastic activations, this norm can be bounded in expectation by $\sum_{i \in \scN_j} \frac{q_i}{Q_j} w_{ij}^2$,
resulting in the improvement seen in in the first bound of \Cref{thm:sto}.
In the second bound, we show that an appropriate choice of weights $w_{ij}$ allows to recover the $\sqrt{\alpha(G)\,T}$ bound derived by \citet{cesa2020cooperative} in the single-task setting ($\sigma_\textnormal{max}^2 = 0$).
Importantly, the choice of $w_{ij}$ remains local to agent $j$, and does not require knowledge of the full communication graph.
Finally, while \Cref{cor:reggraphs_cliques} shows regret bounds sharper than $\sqrt{\alpha(G)\,T}$ in the more general adversarial case, 
those bounds only apply to particular graphs.
In contrast, the results of \Cref{thm:sto} are valid for any graph.
We conclude this section with a remark extending the guarantees to the case where the $q_j/Q_i$ have to be estimated.

\begin{remark}[Extension to unknown $q_i$]\label{rmk:unknown_q}
Both the choice of learning rates $\beta_{t-1}$ and weights $w_{ij}$ in the second claim of \Cref{thm:sto} require that agent $j$ can access the conditional probabilities $q_i/Q_j$.
When the latter are unknown, they can be estimated easily through $\widehat{\pi}_{ij}(t) \coloneqq  \sum_{s\le t} \Ind{i_s = i} / \sum_{s\le t} \Ind{i_s \in \scN_j}$.
In \Cref{apx:unknown_q}, we show that the 2-step strategy consisting in first computing the $\hat{\pi}_{ij}$ and then running \Coolcn with $\hat{\pi}_{ij}$ instead of $q_i/Q_j$ suffers minimal additional regret.
Note that, although appealing, the idea of replacing $q_i/Q_j$ in $\beta_{t-1}$ by $\hat{\pi}_{ij}(t)$ at each time step $t$ would not work, as it would disrupt the monotonicity of the learning rate sequence $\beta_{t-1}$.
\end{remark}

\subsection{Lower Bounds}

We conclude the regret analysis by providing lower bounds.
In particular, we show that \mtcool's regret bound for regular graphs is tight up to constant factors. \looseness-1

\begin{restatable}{theorem}{thmlowerbound}\label{thm:lower-bound}
Let $G$ be any communication graph.
Then for any algorithm following \Cref{sec:model}'s protocol:
\begin{enumerate}[nosep,wide]
\item there exists a sequence of activations and gradients such that the algorithm suffers regret
\[
\sup_{\substack{U \in \mathcal{U}\\{\sigma^2(U) \le \nu^2}}}\hspace{-0.3cm}R_T(U) \geq \max\hspace{-0.03cm}\Big\{\hspace{-0.09cm}\sqrt{1 + \nu^2 (N-1)}, \sqrt{\alpha_2(G)}\Big\}\frac{\sqrt{T}}{3}.
\]
\item for any even number $K$, there exists a $K$-regular graph and a sequence of activations and gradients such that the algorithm suffers regret
\[
\sup_{U \in \mathcal{U}} R_T(U) \geq \frac{1}{5} \sqrt{1 + K\,\sigma^2_\textnormal{min}}\,\sqrt{\frac{NT}{K}}\,.
\]
\end{enumerate}
\end{restatable}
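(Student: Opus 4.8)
The plan is to establish the two displayed inequalities through separate randomized constructions, in each case relying on the standard "derandomization" step: I will lower bound the expected value $\mathbb{E}[\sup_U R_T(U)]$ over a random gradient sequence, so that some deterministic realization must be at least as bad. Since the right-hand side of the first claim is a maximum of two terms, it suffices to exhibit two constructions—one forcing $\sqrt{1+\nu^2(N-1)}\,\sqrt{T}/3$ and one forcing $\sqrt{\alpha_2(G)}\,\sqrt{T}/3$—and let the adversary deploy whichever yields the larger value, which reproduces the $\max$. A common ingredient throughout is that, under linearized losses $\ell_t(x)=\langle g_t,x\rangle$ with $g_t=\epsilon_t\,e_1$ and $\epsilon_t\in\{\pm1\}$ i.i.d.\ Rademacher, each prediction $x_t$ depends only on information derived from $\epsilon_1,\dots,\epsilon_{t-1}$, so $\mathbb{E}\big[\sum_t\langle g_t,x_t\rangle\big]=0$ \emph{regardless of the algorithm and of what is communicated}. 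Hence the expected regret equals the expected comparator advantage, and the whole problem reduces to estimating suprema of Rademacher functionals.

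For the variance term I would activate the agents round-robin ($T_i\approx T/N$) and decompose the comparator as $U_{i:}=\bar U+V_i$ with $\sum_i V_i=0$ and $\sum_i\|V_i\|_2^2\le\nu^2(N-1)$. The advantage then splits into a common part $\big|\sum_i S_i\big|=\big|\sum_t\epsilon_t\big|$ and, by Cauchy--Schwarz, a deviation part equal to $\nu\sqrt{N-1}\big(\sum_i(S_i-\bar S)^2\big)^{1/2}$, where $S_i=\sum_{t:i_t=i}\epsilon_t$. Rather than optimize both simultaneously (which forces $\|U_{i:}\|_2\le1$ to be checked against clipping), I would exploit that $\sqrt{1+\nu^2(N-1)}\le\sqrt2\,\max\{1,\nu\sqrt{N-1}\}$ and prove the two regimes with single-component comparators: all mass on a common, zero-variance comparator $u=-\mathrm{sign}(\sum_t\epsilon_t)\,e_1$ (advantage $\gtrsim\sqrt T$), or all mass on zero-mean deviations spread so each $\|V_i\|_2\le1$ (advantage $\gtrsim\nu\sqrt{N-1}\,\sqrt T$). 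Both lower bounds follow from anti-concentration of Rademacher sums, and $1+\nu\sqrt{N-1}\ge\sqrt{1+\nu^2(N-1)}$ collapses them into the stated form after tracking the constant $1/3$.

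The $\alpha_2$ term rests on an information-flow argument valid against \emph{any} protocol-compliant algorithm. Taking $S$ a maximum $2$-times independent set and activating only its members, I claim the agents of $S$ are mutually isolated: data observed by an active $i\in S$ is written only into the states of $\mathcal{N}_i$, and for another $i'\in S$ ever to read it some active agent (hence a member of $S$) must lie within distance $2$ of $i$, which would force two elements of $S$ at distance $\le2$, contradicting $2$-times independence. I would then feed every agent of $S$ the \emph{same} Rademacher sequence along $e_1$; isolation keeps each prediction independent of its current fresh sign, so the algorithm loses $0$ in expectation, while the single zero-variance comparator $u=-\mathrm{sign}(\sum_k\epsilon_k)\,e_1$ simultaneously realizes every agent's optimum, yielding advantage $\alpha_2(G)\cdot\mathbb{E}\big|\sum_{k\le T/\alpha_2(G)}\epsilon_k\big|\gtrsim\sqrt{\alpha_2(G)\,T}$. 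The same machinery proves the second claim: I would take $G$ to be the disjoint union of $\chi=N/(K+1)$ cliques of size $K+1$ (which is $K$-regular and a legitimate choice since we pick the graph), so that distinct cliques are isolated and $R_T$ is exactly the sum of per-clique regrets; running the variance construction inside each clique with $m=K+1$, $\tau=T/\chi$, and comparator of prescribed local variance $\sigma_{\min}^2$ gives a per-clique contribution $\gtrsim\sqrt{1+K\sigma_{\min}^2}\,\sqrt{T/\chi}$, and summing over the $\chi$ cliques with $\chi=N/(K+1)$ produces $\sqrt{1+K\sigma_{\min}^2}\,\sqrt{NT/K}$ up to the constant $1/5$.

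The step I expect to be the genuine obstacle is the variance lower bound underlying both the first claim's leading term and the per-clique estimate. One must exhibit an explicit comparator living inside the variance ball, prove the matching anti-concentration lower bounds on the two Rademacher functionals (the second via a second-moment/Paley--Zygmund argument, since $\mathbb{E}\sqrt{X}$ cannot be bounded below by Jensen), and simultaneously honor the norm constraint $\|U_{i:}\|_2\le1$ and the target local variance—while driving every constant into the clean factors $1/3$ and $1/5$. By contrast, the isolation argument is conceptually the easy part, provided one verifies that it withstands algorithms communicating arbitrary (not merely gradient) content, which the distance argument above indeed guarantees.
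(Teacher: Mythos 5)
Your proposal is correct in structure and, for the $\alpha_2(G)$ term, coincides with the paper's argument: activate only a maximum $2$-times independent set, observe that the fetch/send protocol lets information about an active agent reach another agent only if the two are within distance two of each other, so the active agents stay mutually isolated, and then sum independent single-agent lower bounds under a common zero-variance comparator. You genuinely diverge in two places. First, for the term $\sqrt{1+\nu^2(N-1)}\,\sqrt{T}/3$ the paper proves nothing: it invokes Proposition~4 of \citet{cesa2022multitask}, which is oblivious to the communication graph and hence applies verbatim to any protocol-compliant algorithm. Your plan to re-derive it from scratch (round-robin activations, Rademacher gradients, the split $\sqrt{1+\nu^2(N-1)}\le\sqrt{2}\max\{1,\nu\sqrt{N-1}\}$, Paley--Zygmund for the deviation functional) is plausible, but it is precisely where your sketch leaves the norm-clipping of the comparator and the anti-concentration constants unverified; citing the known result closes this gap entirely and is the economical route. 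Second, for the $K$-regular claim the paper constructs a \emph{connected} family of $K$-regular graphs (this is why $K$ must be even) whose designated ``central'' nodes are pairwise two edges apart; there are $|\mathcal{J}_{\textnormal{cn}}|=2N/(5K)$ of them, each group receives the clique lower bound of \citet{cesa2022multitask}, and summing gives the constant $1/(2\sqrt{5})\ge 1/5$. Your construction---the disjoint union of $N/(K+1)$ cliques of size $K+1$---is simpler and legitimate, since the theorem only asserts existence of \emph{some} $K$-regular graph and the paper's framework allows disconnected communication graphs (its own corollary treats unions of cliques); it yields the better constant $\tfrac{\sqrt{2}}{4}\sqrt{NT/(K+1)}\ge\tfrac14\sqrt{NT/K}$ and does not require $K$ even. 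The only caveats are the implicit divisibility of $N$ by $K+1$ (the paper's family has an analogous requirement) and that, once you cite the clique lower bound per component rather than re-proving your ``variance construction'' inside each clique, both of your remaining open technical steps disappear.
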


The first result is the maximum of two lower bounds.
The bound depending on the (upper bound of the) comparator variance $\nu^2$ is due to \cite{cesa2022multitask}, and holds even without communication constraints.
The bound depending on $\alpha_2(G)$ is established in \Cref{apx:lower}, and leverages the fact that there is no information leakage if the activated nodes are always 2 edges apart.
The second result investigates this idea further for $K$-regular graphs, and provides a matching lower bound for \Cref{cor:reggraphs_cliques}'s first claim.

\section{\MakeUppercase{A private variant}}\label{sec:DP}

\Coolcn involves a gradient-sharing step, potentially harming privacy if losses contain sensitive information.
In this section, we modify \Coolcn so as to satisfy \textit{loss-level differential privacy}.
We prove that \ifthenelse{\boolean{long}}{}{ for linear losses,} privacy only degrades the regret by a term polylogarithmic in $T$.
Additionally, we establish privacy thresholds where sharing information becomes ineffective.
Our analysis is based on the following privacy notion.\looseness-1

\begin{definition}\label{def:loss-level-priv}
We consider a randomized multi-agent algorithm $\mathcal A$, governing the communication between $N$ agents.
Let $m^{(i)}_t$ denote the batch of messages sent by agent~$i$ to the other agents at time $t$.
For any $\epsilon>0$, $\mathcal A$ is loss-level $\epsilon$-differentially private (or $\epsilon$-DP for short) if for all $i\in \setN$ and set $\mathcal{M}$ of sequences of messages
 \begin{equation}\label{eq:loss-level-priv}
\frac{\sP\big(m^{(i)}_{1}, \ldots, m^{(i)}_{T} \in \mathcal{M} \mid i_1, \ell_1, \ldots, i_T, \ell_T\big)} {\sP\big(m^{(i)}_{1}, \ldots, m^{(i)}_{T}  \in \mathcal{M} \mid i_1, \ell'_1, \ldots, i_T, \ell'_T\big)} \leq e^{\epsilon}\,,
\end{equation}
where $(\ell'_t)_{t \le T}$ and $(\ell_t)_{t \le T}$ differ by at most one entry.
\end{definition}

%
As it applies to messages between agents, our definition of DP is more general than traditional notions which focus on predictions, as commonly done in single-agent scenarios \citep{jain2012differentially,agarwal2017price}.
Note that a similar definition has been employed by \citet{bellet2018personalized}, in the batch case though.

%
Our modification of \Coolcn relies on the fact that \MTFTRL only requires information on sums of gradients and sums of inner products between predictions and gradients (for \texttt{Hedge}). This allows us to use
aggregation trees \citep{dwork2010differential,chan2011private}, enabling the DP release of cumulative vector sums so that the level of noise introduced remains logarithmic in the number of vectors.
In particular, agent $i_t$ can compute: 
\vspace{-0.4cm}
\begin{itemize}[topsep=0pt,itemsep=0.5pt,wide] 
\item Sanitized versions $\tilde{\gamma}^{(i)}_t$ of the sums of gradients observed by the agents $\sum_{s\le t} g_s\,\Ind{i_s = i}$;
\item Sanitized versions $\tilde{s}_{t,i}^{(j, \xi)}$ of the sums of inner products between predictions of expert $\xi$ of agent $j$ and weighted gradients $\sum_{s \le t-1} \langle w_{ij} g_s, \big[X_s^{(j, \xi)}\big]_{i:}\rangle \Ind{i_s = i}$.
\end{itemize}
This technique was used in \cite{guha2013nearly,agarwal2017price} for DP single-agent online learning.
As in \cite{agarwal2017price}, we use \texttt{TreeBasedAgg}, which tweaks the original tree-aggregation algorithm to obtain identically distributed noise across rounds. \looseness-1

\begin{algorithm}[t]
\caption{ \DOPE}\label{alg:DOPE_learning}
\Req{Base algorithm \DPMTFTRL, weights $w_{ij}$, privacy level $\epsilon$, distributions $\mathcal{D}_d$, $\mathcal{D}_1$} \vspace{0.07cm}

\For{$t=1, 2, \ldots$}{\vspace{0.1cm}

Active agent $i_t$\vspace{0.07cm}
    
\quad fetches $\big[Y_t^{(j)}\big]_{i_t:}$,  $\Big\{\big[X_t^{(j, \xi)}\big]_{i_t:} \, : \xi \in \Xi_j\Big\}$ from each

\quad  $j \in \mathcal{N}_{i_t}$
    
\quad predicts $x_t = \sum_{j \in \mathcal{N}_{i_t}} w_{i_tj}\big[Y_t^{(j)}\big]_{i_t:}$\vspace{0.1cm}
    
\quad pays $\ell_t(x_t)$ and observes $g_t \in \partial\ell_t(x_t)$\vspace{0.14cm}

\quad updates   $\GammaOut_t^{(i_t)} =   \GammaOut_{t}^{(i_t)} + g_t$   
\vspace{0.1cm}

\quad  computes $\hatGammaOut_{t}^{(i_t)}$ using an instance of

\quad \treeBasedAgg with  distribution $\mathcal{D}_d$

\vspace{0.14cm}

\For{$j \in \mathcal{N}_{i_t}$}{

\For{$ \xi \in \Xi_j$}{
$s_{t, i_t}^{(j,\xi)} =  s_{t-1, i_t}^{(j,\xi)} + w_{i_tj}\big\langle[X_t^{(j,\xi)}]_{i_t:}, g_t\big\rangle,$ 
\vspace{0.1cm}

computes $\tilde{s}_{t, i_t}^{(j,\xi)}$ using an instance of

\treeBasedAgg set with distribution $\mathcal{D}_1$ 
    }
    sends $\big(i_t, w_{i_tj}\,\hatGammaOut_{t}^{(i_t)}, \{\tilde{s}_{t, i_t}^{(j,\xi)} : \xi\in\Xi_j\} \big)$ to $j$ 
    



Agent $j$ feeds $\big(w_{i_tj}\,\hatGammaOut_{t}^{(i_t)}, \{\tilde{s}_{t, i_t}^{(j, \xi)}  : \xi\in\Xi_j\}\big)$ to their local instance of \DPMTFTRL and obtains $\big(Y^{(j)}_{t+1}, \{X_{t+1}^{(j, \xi)},~\forall \xi \in \Xi_j\}\big)$. 

}
}
\end{algorithm}

The resulting algorithm, called \DOPE, is described in \Cref{alg:DOPE_learning}.
It invokes  the $\epsilon$-DP version of \MTFTRL, summarized in the supplementary material (\Cref{alg:mt-ftrl-DP}).
We now prove that \DOPE is $\epsilon$-DP.
\begin{restatable}{theorem}{thmprivacy}\label{th:privacy}
Let $G$ be any graph, and for any $\varepsilon > 0$, let $\epsilon' = \epsilon / (6N_\textnormal{max}^2)$.
Assume that \DOPE is run \ifthenelse{\boolean{long}}{}{on linear losses } with $\mathcal{D}_d$ set to a $d$-dimensional Laplacian distribution with parameter $\frac{\sqrt{d} \ln T}{\epsilon'}$, and $\mathcal{D}_1$ set to a $1$-dimensional Laplacian distribution with parameter $\frac{\ln T}{\epsilon'}$. Then \DOPE is $\epsilon$-DP.
\end{restatable}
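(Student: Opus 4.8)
The plan is to prove the stronger statement that the entire collection of messages exchanged by \DOPE\ is $\epsilon$-DP with respect to changing a single loss; the per-agent guarantee of \Cref{def:loss-level-priv} then follows, since each $(m^{(i)}_1,\ldots,m^{(i)}_T)$ is a sub-vector of the full transcript and DP is closed under post-processing. The only randomness in \DOPE\ comes from the noise injected by the \texttt{TreeBasedAgg} instances, and every message is a deterministic function of the activation sequence, the observed gradients $g_1,\ldots,g_T$, and this noise. Since \DOPE\ reveals information about the losses only through sanitized cumulative sums, I would reduce the whole argument to composing the privacy guarantees of the individual \texttt{TreeBasedAgg} streams, after identifying exactly which streams are perturbed when one loss changes.

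The main obstacle is adaptivity: fixing the activation sequence and changing a single loss $\ell_{t_0}\to\ell'_{t_0}$ does not, by itself, change only $g_{t_0}$, because each prediction $x_t$ (and each expert prediction $X_t^{(j,\xi)}$) depends on previously released noisy sums, which in turn depend on $g_{t_0}$; under a naive coupling the whole downstream gradient stream would change. I would resolve this in the standard manner for adaptive continual release, by conditioning on a fixed output transcript $o$. Given $o$, every prediction is a deterministic post-processing of $o$ (the \DPMTFTRL\ updates read only the sanitized sums, never the raw losses), so $x_t$ and $X_t^{(j,\xi)}$ are the same functions of $o$ under both loss sequences. Consequently $g_s=\partial\ell_s(x_s(o))$ equals $g'_s$ for every $s\neq t_0$, while $g_{t_0}$ and $g'_{t_0}$ differ yet are evaluated at the same point $x_{t_0}(o)$. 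Hence, conditioned on $o$, the input stream fed to every \texttt{TreeBasedAgg} instance differs in at most one coordinate (the one at time $t_0$), which is precisely the neighboring-stream condition required. Since the map from noise to transcript is triangular in time, the change of measure can be carried out node by node with unit Jacobian.

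For a single \texttt{TreeBasedAgg} stream I would invoke the standard binary-tree mechanism analysis \citep{dwork2010differential,chan2011private}: a one-coordinate change affects only the $\le\lceil\log_2 T\rceil+1$ nodes on the corresponding root-to-leaf path, and each prefix sum is a post-processing of such nodes. The prescribed noise matches the per-node $L_1$-sensitivity: the gradient-sum stream uses $\mathcal{D}_d$ with $\|g_t\|_1\le\sqrt d\,\|g_t\|_2\le\sqrt d$, while each inner-product stream uses $\mathcal{D}_1$ with $|w_{i_tj}\langle[X_t^{(j,\xi)}]_{i_t:},g_t\rangle|\le w_{i_tj}\le 1$. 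By the Laplace mechanism each perturbed node then contributes a privacy loss of at most $\epsilon'/\ln T$, so, absorbing the logarithmic tree depth into a constant, a single affected stream is $(\lceil\log_2 T\rceil+1)\epsilon'/\ln T$-DP, which is at most $3\epsilon'$ for all horizons of interest.

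It remains to count the streams perturbed by the change at $t_0$, observed by the active agent $i_{t_0}$. By the transcript argument, $g_{t_0}$ enters exactly the gradient-sum stream of $i_{t_0}$ together with the inner-product streams $\tilde s^{(j,\xi)}_{\cdot,\,i_{t_0}}$ for $j\in\mathcal{N}_{i_{t_0}}$ and $\xi\in\Xi_j$, of which there are $\sum_{j\in\mathcal{N}_{i_{t_0}}}N_j\le N_\textnormal{max}^2$. Thus at most $1+N_\textnormal{max}^2\le 2N_\textnormal{max}^2$ streams are affected, and they are randomized by independent noise. Basic composition of their $3\epsilon'$-DP guarantees yields a total privacy loss of at most $2N_\textnormal{max}^2\cdot 3\epsilon'=6N_\textnormal{max}^2\epsilon'=\epsilon$, which proves the claim; the two factors $2$ and $3$ are precisely what produce the constant $6$ in the definition of $\epsilon'$. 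The genuinely delicate point is the transcript-conditioning step of the second paragraph, which neutralizes the feedback between predictions and future gradients; once that is in place, the sensitivity computations and the composition bookkeeping are routine.
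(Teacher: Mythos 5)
Your proposal is correct and arrives at the paper's constant, but through a genuinely different decomposition. The paper proves the per-agent guarantee directly: for each agent $i$ it splits the message sequence into its components $\big(\tilde{\gamma}_t^{(i)}\big)_t$, $\big(X_t^{(i,:)}\big)_t$, $\big(\tilde{s}_{t,i}^{(j,:)}\big)_{t,j}$, $\big(p_t^{(i,:)}\big)_t$, bounds a likelihood ratio for each component separately --- $\epsilon'$ for the gradient trees, $(N_j+1)\epsilon'$ for each block $\big(\tilde{s}_{t,i}^{(j,:)}\big)_t$ via an explicit conditioning on the expert sequences $X^{(j,:)}$ (this conditioning, together with the decomposition imported from \citet[Equation~(11)]{smith2017federated}, is the paper's substitute for your transcript argument), and $N_i(N_i+1)\epsilon'$ for the probabilities --- and then composes all four into $\epsilon' + \epsilon' + 2N_i^2\epsilon' + 2N_i^2\epsilon' \le 6N_\textnormal{max}^2\epsilon' = \epsilon$. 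You instead prove DP of the entire transcript of sanitized releases and obtain the per-agent statement by post-processing. Under your transcript conditioning, the experts, the probabilities, and all fetch-step messages are deterministic functions of the transcript and cost nothing, so composition runs only over the raw noise-injection streams actually perturbed by the change at $t_0$: the single gradient tree of $i_{t_0}$ and the $\sum_{j \in \mathcal{N}_{i_{t_0}}} N_j \le N_\textnormal{max}^2$ inner-product trees indexed by $i_{t_0}$. This is cleaner and in fact tighter: with the per-tree constant the paper takes from \citet[Theorem~3.5]{chan2011private} (i.e., $\epsilon'$ per stream), your stream count would give roughly $(1+N_\textnormal{max}^2)\,\epsilon' \approx \epsilon/3$; you land exactly on $\epsilon$ only because you charge each tree the more conservative $3\epsilon'$ coming from your explicit $\log_2$-depth versus $\ln T$ accounting. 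The one step that would need to be written out carefully in a full proof is the change-of-measure argument (the triangular noise-to-transcript map and the per-node density ratios); you correctly flag it as the crux, and it is the same adaptivity issue the paper dispatches more tersely via its conditional-integral decomposition in \eqref{eq:split_indep}.
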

Next, we state the regret bound for \DOPE with adversarial agent activations\ifthenelse{\boolean{long}}{}{ and linear losses}.

\begin{restatable}{theorem}{regretDP}
\label{th:reg-DP_anyG}
%
Let $G = (V,E)$ be any communication graph,
Consider \DOPE where the base algorithm run by each agent $j\in V$ is an instance of \DPMTFTRL with parameters $N=N_j$ and $\beta_{t-1} = \max_{i\in \scN_j} \wij \sqrt{1+\sum_{s \le t-1}\Ind{i_s \in \scN_j}}$. Then the regret of \DOPE \ifthenelse{\boolean{long}}{}{run over linear losses} with $\mathcal{D}_d$ and $\mathcal{D}_1$ set as in \Cref{th:privacy} satisfies
%
\begin{align}
\E[R_T(U)] & \tildeO \sum_{j=1}^N \max_{i\in \scN_j} \wij \sqrt{1 + \sigma_j^2(N_j - 1)} \sqrt{ \sum_{i \in \scN_j} T_i}\nonumber\\
&~~~+ \frac{dN_{\textnormal{max}}^4}{\epsilon} N\ln^2 T\,,\label{eq:privacy_cost}
\end{align}
where randomness is due to sanitization.
\end{restatable}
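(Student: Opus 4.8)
The plan is to reduce the regret of \DOPE\ to the regret of its non-private counterpart \Coolcn\ plus the cumulative error introduced by the sanitization, and then to bound each of these two contributions separately. First I would set up the decomposition by invoking \Cref{lem:cool-cn}, which expresses the regret in terms of the per-clique regrets $R^\text{clique-$j$}_T(U^{(j)})$ of the base algorithm run at each node $j$. The crucial structural observation is the same as the one underpinning \Coolcn: the fetch/send mechanism makes agent $j$'s instance of \DPMTFTRL\ behave exactly as if $j$ were embedded in an isolated clique over $\scN_j$, fed the linearized, weighted, and \emph{sanitized} losses. Thus I only need a clique-level regret bound for \DPMTFTRL\ that accounts for noise, and then sum it over $j$ exactly as in the proof of \Cref{thm:any-G}.

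The heart of the argument is therefore a noisy analogue of \Cref{thm:mt-ftrl}. I would write the regret of \DPMTFTRL\ at node $j$ as the regret of the \emph{noise-free} \MTFTRL\ (which gives the first line of \eqref{eq:privacy_cost}, summed over $j$ exactly as in \Cref{thm:any-G}) plus a perturbation term capturing the difference between running \texttt{FTRL}/\texttt{Hedge} on the true cumulative statistics versus on the sanitized versions $\hatGammaOut$ and $\tilde s$. For the \texttt{FTRL} layer, the error propagates through the argmin in line~\ref{li:FTRL}: a perturbation of magnitude $\eta_{t-1}^{(\xi)}\|\Gamma_t - \hatGammaOut_t\|$ in the linear term changes the predicted $X_t^{(\xi)}$, and hence the incurred loss, by at most a comparable amount by the standard stability/Lipschitz argument for strongly convex regularizers (the Mahalanobis regularizer $\tfrac12\|\cdot\|_A^2$ is $1$-strongly convex). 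For the \texttt{Hedge} layer, the sanitized inner-product sums $\tilde s$ enter the exponential weights in line~\ref{li:compute_pt}, and I would bound the resulting change in the experts' mixture loss by the total additive noise, using the boundedness of the \texttt{Hedge} losses. In both cases the per-round noise is $\widetilde{\mathcal O}(\text{poly}(N_j)/\epsilon')$ in magnitude by the Laplacian tail bound, and because \treeBasedAgg\ releases \emph{cumulative} sums the accumulated error over the horizon grows only polylogarithmically in $T$ rather than polynomially.

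The final step is bookkeeping of the privacy rescaling. Recall from \Cref{th:privacy} that $\epsilon' = \epsilon/(6N_\textnormal{max}^2)$, so the Laplacian parameters $\sqrt d\,\ln T/\epsilon'$ and $\ln T/\epsilon'$ each carry a factor $N_\textnormal{max}^2$. I would track how these parameters enter: the gradient-sanitization error contributes a term scaling like $d\,\ln^2 T/\epsilon'$ per active round at each node (the extra $\ln T$ coming from the depth of the aggregation tree), the inner-product sanitization contributes a lower-order $\ln^2 T/\epsilon'$ term, and summing the dominant gradient term over the $N$ nodes together with the substitution for $\epsilon'$ yields exactly the second term $\tfrac{dN_\textnormal{max}^4}{\epsilon}N\ln^2 T$ in \eqref{eq:privacy_cost}. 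Here one factor $N_\textnormal{max}^2$ comes from $1/\epsilon'$ and the other from aggregating the noisy statistics a node may need to produce across its neighborhood of size up to $N_\textnormal{max}$.

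The main obstacle I expect is controlling the \texttt{Hedge}-layer perturbation cleanly. Unlike the \texttt{FTRL} argmin, where strong convexity gives a direct stability bound, the adaptive expert weights couple the noisy statistics nonlinearly through the softmax, and the noise fed to different experts $\xi$ is correlated (it depends on the same gradient stream). I would handle this by bounding the deviation of each sanitized cumulative inner product uniformly via a single high-probability (or in-expectation) Laplacian maximal inequality over the $O(\log T)$ tree nodes and the $N_j$ experts, then passing this uniform bound through the $1$-Lipschitzness of the log-sum-exp potential; the delicate point is ensuring that the resulting bound remains additive (and polylogarithmic in $T$) rather than compounding multiplicatively with the \texttt{FTRL} error, which I would argue by treating the two layers' perturbations separately and summing, since \MTFTRL's regret analysis already decouples the \texttt{FTRL} regret of each expert from the \texttt{Hedge} meta-regret.
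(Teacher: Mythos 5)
Your high-level skeleton (the per-clique decomposition via \Cref{lem:cool-cn}, the \texttt{Hedge}/\texttt{FTRL} split of each clique regret, and the final bookkeeping with $\epsilon'=\epsilon/(6N_\textnormal{max}^2)$) matches the paper's proof. The genuine gap is in the central step: how the sanitization noise enters the regret. You bound, round by round, the displacement of the predictions caused by the noise---via strong-convexity stability of the argmin for the \texttt{FTRL} layer and Lipschitzness of the softmax for the \texttt{Hedge} layer---and then sum these displacements, claiming the total stays polylogarithmic because tree aggregation keeps each round's noise small. This accumulation does \emph{not} stay polylogarithmic: at round $t$ the perturbation of the prediction is of order $\eta_{t-1}\|Z_{t-1}\|$, where $\|Z_{t-1}\|=\widetilde{\Theta}(\mathrm{poly}(N_j)/\epsilon')$ is the tree-aggregation noise (polylogarithmic in $T$ in magnitude, but non-vanishing and present at every round), so the summed contribution is of order $\sum_{t\le T}\eta_{t-1}\,\mathrm{poly}(N_j)/\epsilon' = \widetilde{\Theta}(\sqrt{T}/\epsilon')$. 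The zero-mean property of the noise cannot rescue the argument, because the constrained argmin (projection onto $\mathcal{U}_j$ with the $\sigma^2(X)\le\xi$ constraint, and the softmax for \texttt{Hedge}) is nonlinear, so the expected displacement does not vanish. Your route would therefore yield a privacy cost that multiplies the $\sqrt{T}$ term by $\widetilde{O}(1/\epsilon')$, not the additive $\frac{dN_\textnormal{max}^4}{\epsilon}N\ln^2 T$ term claimed in \Cref{th:reg-DP_anyG}.

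The paper sidesteps the accumulation entirely through the reduction of \citet{agarwal2017price}, instantiated as \Cref{thm:mainftrlthm2} for the expert layer and \Cref{lem:alternative_2} for the \texttt{Hedge} layer. The two ingredients your proposal never invokes are exactly what make it work: (i) the losses are linear and the gradient/activation sequence is oblivious, so the gradient stream is independent of the realized noise; and (ii) \treeBasedAgg\ with noise completion releases, at \emph{every} round, the true cumulative sum plus noise with one and the same distribution. Together these imply that the expected regret of the noisy algorithm equals the expected regret of an alternative algorithm in which a single noise $Z$ is drawn once and added linearly to the objective, i.e., \FTRL\ with regularizer $\psi(\cdot)+\langle Z,\cdot\rangle$. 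Since a linear perturbation preserves strong convexity, the standard analysis then gives the noise-free bound plus a single diameter term $D_{\mathcal{D}'}=\E_{Z}\big[\max_{x}\langle Z,x\rangle-\min_{x}\langle Z,x\rangle\big]$, which is $\widetilde{O}(\mathrm{poly}(N_j)\,d\ln^2T/\epsilon')$ and is paid \emph{once per node}, not once per round; summing over nodes and substituting $\epsilon'$ gives the stated $\frac{dN_\textnormal{max}^4}{\epsilon}N\ln^2 T$. Without this reduction (or an equivalent mechanism producing cancellation across rounds), the additive polylogarithmic privacy cost is out of reach, so the proposal as written does not prove the theorem.
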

\Cref{th:reg-DP_anyG} shows that the additional regret due to the privatization of our algorithm (second line in bound \eqref{eq:privacy_cost}) has a very mild dependence on $T$.
In particular, when $G$ is $K$-regular, it becomes negligible as soon as $T \ge N (d^2 K^9/\varepsilon^2)$.
On the other side, the bound indicates that running \DOPE is worse than running $N$ instances of \FTRL without communication (which is DP by design) when $\epsilon \le dN_\textnormal{max}^4\sqrt{N/T}$.
In short, if agents are too private, listening to neighbors introduces more noise than valuable information.
Note that the cut-off value for $\epsilon$ below which sharing information is detrimental vanishes as $T$ goes to $+\infty$.
Hence, for any privacy level $\epsilon$, \DOPE is asymptotically (in $T$) preferable to independent instances of \FTRL.
Note finally that the cost of privacy remains the same for stochastic activations, and that in the single-agent case we do recover the result of \citet{agarwal2017price}.

\ifthenelse{\boolean{long}}{Another possibility to maintain DP is to }{The assumption of linear losses  is crucial for \Cref{th:privacy}, as it ensures that predictions do not affect gradients.
If losses are convex but not necessarily linear, we can still} implement \DOPE by adding Laplacian noise of parameter $1/\epsilon$ to each individual gradient.
It can be shown that this method yields a total privacy cost of order at least $d\left(\sum_{j =1}^N \max_{i\in \scN_j} \wij \sqrt{\sum_{i \in \mathcal{N}_j} T_i}\right)/{\epsilon}$.
In particular, when $G$ is $K$-regular, the cut-off value for $\epsilon$ becomes $d/ \sqrt{K}$, which does not vanish as $T\to\infty$.
Hence, this way of privatizing \mtcool is too coarse to maintain better performances over non-communicating methods for all privacy levels $\epsilon$.

Interestingly, the proof of \Cref{th:reg-DP_anyG} also shows that knowing the task variances $\sigma_j^2$ reduces the cost of privacy, because the instances of \MTFTRL do not need to use experts to adapt to $\sigma_j^2$. 
%
%
In particular, the active agent can avoid sharing $\tilde{s}_{t, i_t}^{(j,\xi)}$, which leads to a significant reduction in the information leakage.
%
%
Then, the cost of privacy becomes of order $\Big(\sum_{j=1}^N \sqrt{N_j\big(1 + \sigma_j^2(N_j - 1)\big)} \Big) d\ln^2 T/\epsilon$.

\section{\MakeUppercase{Experiments}}\label{sec:expe}

In this section, we describe experiments supporting our theoretical claims.
In our experiments, the communication graphs $G$ are generated using the Erdős–Rényi's model: given $N=30$ vertices, the edge between two vertices is included in $G$ with probability $p=0.9$.
Our results are averaged over $48$ independent draws of such graphs.
The losses are of the form $\ell_t(x) = \frac{1}{2}\|x - z_t\|_2^2$, where the $z_t$ are realizations of a multivariate Gaussian with covariance matrix $10^{-4}\,I_d$ and expectation $U_{i_t:}$.
Here, $i_t$ denotes the active task, and $U_{i_t:}$ the corresponding task vector.
The matrix $U$ is drawn according to a centered Gaussian distribution with covariance matrix $(I_N+ \lambda L_G)^{-1}$, where $L_G$ is the Laplacian of the graph $G$. 
The local task variance is then controlled via the parameter $\lambda$, which in our experiments varies in the set $\{10^{10}, 10, 9, 8, 7, 6, 5, 4, 3, 2\}$.\footnote{
We mimicked $\lambda=\infty$, i.e., no task variance, by $\lambda = 10^{10}$.}
The average local task standard deviation $\bar{\sigma}$ resulting from these choices of $\lambda$ are the $x$-coordinates of points in \Cref{fig:expe_2}.
Activations are stochastically generated with $q_i= 1/N$. 

For practical reasons, the version of \mtcool used here slightly differs from \Cref{alg:global_alg}, as our adaptive part is based on the Krichevsky-Trofimov algorithm (see \Cref{alg:mt-ftrl-kt} for details) to account for task variances and domain diameter.
This variant has regret guarantees similar to the original \mtcool (see \cref{sec:app_expe}) but is simpler to run in practice.
It is however nontrivial to make it $\epsilon$-DP, which is the reason why in the theoretical analysis we privileged \texttt{Hedge} over Krichevsky-Trofimov.
We consider two baselines: the first one, \texttt{i-FTRL}, consists in running instances of \FTRL on agents that do not communicate with each other. 
The second method, \texttt{ST-FTRL}, is the multi-agent single-task algorithm with communication graph $G$ introduced in \citep{cesa2020cooperative}.
Note that this algorithm requires oracle access to the loss function, instead of just access to the gradient computed at the current prediction. This represents a significant advantage. 
In particular, the regret guarantees for this algorithm only hold given access to the loss function oracle. 
To ensure fair comparisons, we extend adaptivity to the diameter to both \texttt{i-FTRL} and \texttt{ST-FTRL} using the Krichevsky-Trofimov method. \looseness -1

\begin{figure}[t]
\centering
\includegraphics[width=0.81\columnwidth]{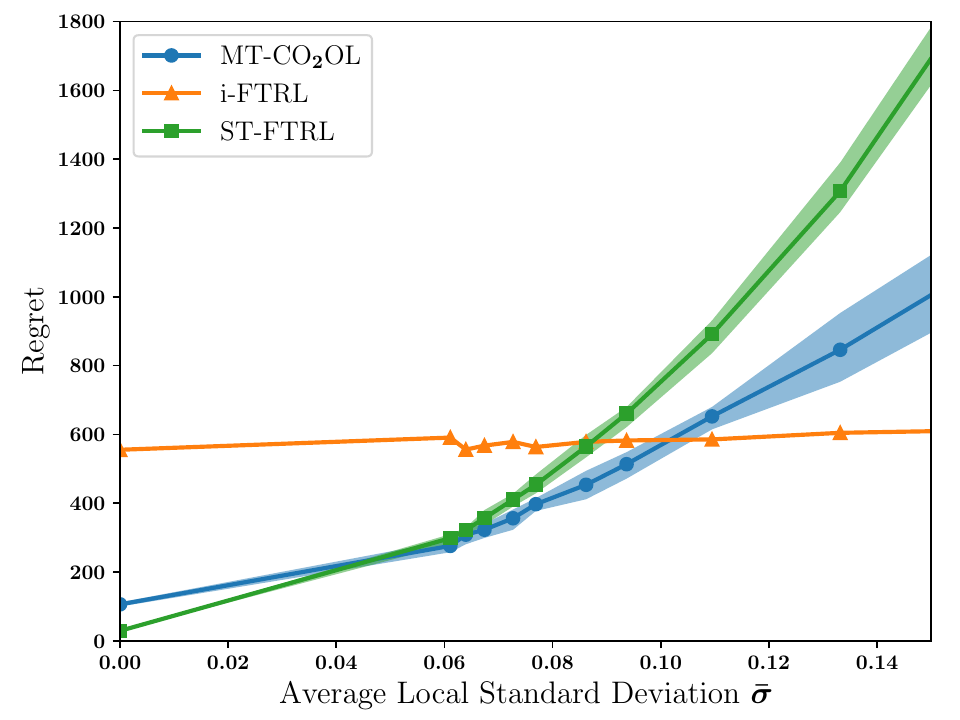}
\caption{Multitask regret at horizon $T=150\,000$.}
\label{fig:expe_2}
\end{figure}

\Cref{fig:expe_2} shows the (multitask) regret of these algorithms. 
As expected, the performance of \texttt{i-FTRL} is independent from $\bar{\sigma}$ and dominates when the latter is large (i.e., tasks are very different). On the other hand, \texttt{ST-FTRL} wins when $\bar{\sigma}$ is close to zero (i.e., tasks are very similar). Finally, \mtcool outperforms the baselines for intermediate values of $\bar{\sigma}$ (see also \cref{fig:expe_1} that plots the regret against time for $\bar{\sigma} = 0.08$).

\section{\MakeUppercase{Conclusion}}

We introduced and analyze \mtcool, an algorithm tackling multitask online learning on arbitrary communication networks.
Interesting generalizations of this work may focus on the cases where: (1) the communication network changes over time, (2) privacy levels are user-specific, (3) agents have further constraints (e.g., size-wise or frequency-wise) on the messages they send.\looseness-1

\subsubsection*{Acknowledgements}
The authors acknowledge the financial support from the MUR PRIN grant 2022EKNE5K (Learning in Markets and Society), the NextGenerationEU program within the PNRR-PE-AI scheme (project FAIR), the EU Horizon CL4-2021-HUMAN-01 research and innovation action under grant agreement 101070617 (project ELSA).
The authors thank anonymous reviewer 2 for insightful comments that helped improve the correctness of the paper.


\bibliography{biblioDL}

\begin{thebibliography}{}

\bibitem[Agarwal and Singh, 2017]{agarwal2017price}
Agarwal, N. and Singh, K. (2017).
\newblock The price of differential privacy for online learning.
\newblock In {\em International Conference on Machine Learning}, pages 32--40.
  PMLR.

\bibitem[Awerbuch and Kleinberg, 2008]{awerbuch2008competitive}
Awerbuch, B. and Kleinberg, R. (2008).
\newblock Competitive collaborative learning.
\newblock {\em Journal of Computer and System Sciences}, 74(8):1271--1288.

\bibitem[Bar-On and Mansour, 2019]{bar2019individual}
Bar-On, Y. and Mansour, Y. (2019).
\newblock Individual regret in cooperative nonstochastic multi-armed bandits.
\newblock {\em Advances in Neural Information Processing Systems}, 32.

\bibitem[Bellet et~al., 2018]{bellet2018personalized}
Bellet, A., Guerraoui, R., Taziki, M., and Tommasi, M. (2018).
\newblock Personalized and private peer-to-peer machine learning.
\newblock In {\em International Conference on Artificial Intelligence and
  Statistics}, pages 473--481. PMLR.

\bibitem[Cavallanti et~al., 2010]{cavallanti2010linear}
Cavallanti, G., Cesa-Bianchi, N., and Gentile, C. (2010).
\newblock Linear algorithms for online multitask classification.
\newblock {\em The Journal of Machine Learning Research}, 11:2901--2934.

\bibitem[Cesa-Bianchi et~al., 2020]{cesa2020cooperative}
Cesa-Bianchi, N., Cesari, T., and Monteleoni, C. (2020).
\newblock Cooperative online learning: Keeping your neighbors updated.
\newblock In {\em Algorithmic learning theory}, pages 234--250. PMLR.

\bibitem[Cesa-Bianchi et~al., 2019]{cesa2019delay}
Cesa-Bianchi, N., Gentile, C., Mansour, Y., et~al. (2019).
\newblock Delay and cooperation in nonstochastic bandits.
\newblock {\em Journal of Machine Learning Research}, 20(17):1--38.

\bibitem[Cesa-Bianchi et~al., 2022]{cesa2022multitask}
Cesa-Bianchi, N., Laforgue, P., Paudice, A., et~al. (2022).
\newblock Multitask online mirror descent.
\newblock {\em Transactions on Machine Learning Research}.

\bibitem[Chan et~al., 2011]{chan2011private}
Chan, T.-H.~H., Shi, E., and Song, D. (2011).
\newblock Private and continual release of statistics.
\newblock {\em ACM Transactions on Information and System Security (TISSEC)},
  14(3):1--24.

\bibitem[Della~Vecchia and Cesari, 2021]{della2021efficient}
Della~Vecchia, R. and Cesari, T. (2021).
\newblock An efficient algorithm for cooperative semi-bandits.
\newblock In {\em Algorithmic Learning Theory}, pages 529--552. PMLR.

\bibitem[Dwork et~al., 2010]{dwork2010differential}
Dwork, C., Naor, M., Pitassi, T., and Rothblum, G.~N. (2010).
\newblock Differential privacy under continual observation.
\newblock In {\em Proceedings of the forty-second ACM symposium on Theory of
  computing}, pages 715--724.

\bibitem[Griggs, 1983]{griggs1983lower}
Griggs, J.~R. (1983).
\newblock Lower bounds on the independence number in terms of the degrees.
\newblock {\em Journal of Combinatorial Theory, Series B}, 34(1):22--39.

\bibitem[Guha~Thakurta and Smith, 2013]{guha2013nearly}
Guha~Thakurta, A. and Smith, A. (2013).
\newblock (nearly) optimal algorithms for private online learning in
  full-information and bandit settings.
\newblock {\em NeurIPS}, 26.

\bibitem[Herbster et~al., 2021]{herbster2021gang}
Herbster, M., Pasteris, S., Vitale, F., and Pontil, M. (2021).
\newblock A gang of adversarial bandits.
\newblock {\em Advances in Neural Information Processing Systems},
  34:2265--2279.

\bibitem[Hosseini et~al., 2013]{hosseini2013online}
Hosseini, S., Chapman, A., and Mesbahi, M. (2013).
\newblock Online distributed optimization via dual averaging.
\newblock In {\em 52nd IEEE Conference on Decision and Control}, pages
  1484--1489. IEEE.

\bibitem[Hsieh et~al., 2022]{hsieh2022multi}
Hsieh, Y.-G., Iutzeler, F., Malick, J., and Mertikopoulos, P. (2022).
\newblock Multi-agent online optimization with delays: Asynchronicity,
  adaptivity, and optimism.
\newblock {\em The Journal of Machine Learning Research}, 23(1):3377--3425.

\bibitem[Ito et~al., 2020]{ito2020delay}
Ito, S., Hatano, D., Sumita, H., Takemura, K., Fukunaga, T., Kakimura, N., and
  Kawarabayashi, K.-I. (2020).
\newblock Delay and cooperation in nonstochastic linear bandits.
\newblock {\em Advances in Neural Information Processing Systems},
  33:4872--4883.

\bibitem[Jain et~al., 2012]{jain2012differentially}
Jain, P., Kothari, P., and Thakurta, A. (2012).
\newblock Differentially private online learning.
\newblock In {\em Conference on Learning Theory}, pages 24--1.

\bibitem[Jiang et~al., 2021]{jiang2021asynchronous}
Jiang, J., Zhang, W., Gu, J., and Zhu, W. (2021).
\newblock Asynchronous decentralized online learning.
\newblock {\em Advances in Neural Information Processing Systems},
  34:20185--20196.

\bibitem[Li et~al., 2019]{li2019}
Li, R., Ma, F., Jiang, W., and Gao, J. (2019).
\newblock Online federated multitask learning.
\newblock In {\em Proceedings of the 7th IEEE International Conference on Big
  Data}, pages 215--220.

\bibitem[Mannor and Shamir, 2011]{mannor2011bandits}
Mannor, S. and Shamir, O. (2011).
\newblock From bandits to experts: On the value of side-observations.
\newblock {\em Advances in Neural Information Processing Systems}, 24.

\bibitem[Mitzenmacher and Upfal, 2005]{mitzenmacher2005probability}
Mitzenmacher, M. and Upfal, E. (2005).
\newblock {\em Probability and computing: Randomization and probabilistic
  techniques in algorithms and data analysis}.
\newblock Cambridge university press.

\bibitem[Murugesan et~al., 2016]{murugesan2016}
Murugesan, K., Liu, H., Carbonell, J., and Yang, Y. (2016).
\newblock Adaptive smoothed online multi-task learning.
\newblock In {\em Proceedings of the 29th Annual Conference on Advances in
  Neural Information Processing Systems}, pages 4296--4304.

\bibitem[Orabona, 2019]{orabona2019modern}
Orabona, F. (2019).
\newblock A modern introduction to online learning.
\newblock {\em arXiv preprint arXiv:1912.13213}.

\bibitem[Saha et~al., 2011]{saha2011}
Saha, A., Rai, P., Daum{\'e}, H., Venkatasubramanian, S., et~al. (2011).
\newblock Online learning of multiple tasks and their relationships.
\newblock In {\em Proceedings of the 14th International Conference on
  Artificial Intelligence and Statistics}, pages 643--651.

\bibitem[Sinha and Vaze, 2023]{sinha2023playing}
Sinha, A. and Vaze, R. (2023).
\newblock Playing in the dark: No-regret learning with adversarial constraints.
\newblock {\em arXiv preprint arXiv:2310.18955}.

\bibitem[Smith et~al., 2017]{smith2017federated}
Smith, V., Chiang, C.-K., Sanjabi, M., and Talwalkar, A.~S. (2017).
\newblock Federated multi-task learning.
\newblock {\em NeurIPS}, 30.

\bibitem[Van~der Hoeven et~al., 2022]{van2022distributed}
Van~der Hoeven, D., Hadiji, H., and van Erven, T. (2022).
\newblock Distributed online learning for joint regret with communication
  constraints.
\newblock In {\em International Conference on Algorithmic Learning Theory},
  pages 1003--1042. PMLR.

\bibitem[Yan et~al., 2012]{yan2012distributed}
Yan, F., Sundaram, S., Vishwanathan, S., and Qi, Y. (2012).
\newblock Distributed autonomous online learning: Regrets and intrinsic
  privacy-preserving properties.
\newblock {\em IEEE Transactions on Knowledge and Data Engineering},
  25(11):2483--2493.

\bibitem[Yi and Vojnovi{\'c}, 2023]{yi2023doubly}
Yi, J. and Vojnovi{\'c}, M. (2023).
\newblock Doubly adversarial federated bandits.
\newblock {\em arXiv preprint arXiv:2301.09223}.

\bibitem[Zhang et~al., 2018]{zhang2018}
Zhang, C., Zhao, P., Hao, S., Soh, Y.~C., Lee, B.~S., Miao, C., and Hoi, S.~C.
  (2018).
\newblock Distributed multi-task classification: a decentralized online
  learning approach.
\newblock {\em Machine Learning}, 107(4):727--747.

\end{thebibliography}
\section*{Checklist}



 \begin{enumerate}

 \item For all models and algorithms presented, check if you include:
 \begin{enumerate}
   \item A clear description of the mathematical setting, assumptions, algorithm, and/or model. [Yes]
   \item An analysis of the properties and complexity (time, space, sample size) of any algorithm. [Yes]
   \item (Optional) Anonymized source code, with specification of all dependencies, including external libraries. [No]
 \end{enumerate}

 \item For any theoretical claim, check if you include:
 \begin{enumerate}
   \item Statements of the full set of assumptions of all theoretical results. [Yes]
   \item Complete proofs of all theoretical results. [Yes]
   \item Clear explanations of any assumptions. [Yes]     
 \end{enumerate}

 \item For all figures and tables that present empirical results, check if you include:
 \begin{enumerate}
   \item The code, data, and instructions needed to reproduce the main experimental results (either in the supplemental material or as a URL). [No]
   \item All the training details (e.g., data splits, hyperparameters, how they were chosen). [Yes]
         \item A clear definition of the specific measure or statistics and error bars (e.g., with respect to the random seed after running experiments multiple times). [Yes]
         \item A description of the computing infrastructure used. (e.g., type of GPUs, internal cluster, or cloud provider). [Not Applicable]
 \end{enumerate}
 The code will be released upon acceptance.

 \item If you are using existing assets (e.g., code, data, models) or curating/releasing new assets, check if you include:
 \begin{enumerate}
   \item Citations of the creator If your work uses existing assets. [Not Applicable]
   \item The license information of the assets, if applicable. [Not Applicable]
   \item New assets either in the supplemental material or as a URL, if applicable. [Not Applicable]
   \item Information about consent from data providers/curators. [Not Applicable]
   \item Discussion of sensible content if applicable, e.g., personally identifiable information or offensive content. [Not Applicable]
 \end{enumerate}

 \item If you used crowdsourcing or conducted research with human subjects, check if you include:
 \begin{enumerate}
   \item The full text of instructions given to participants and screenshots. [Not Applicable]
   \item Descriptions of potential participant risks, with links to Institutional Review Board (IRB) approvals if applicable. [Not Applicable]
   \item The estimated hourly wage paid to participants and the total amount spent on participant compensation. [Not Applicable]
 \end{enumerate}

 \end{enumerate}
\appendix
\onecolumn
\section{Technical Proofs (Results from Section \ref{sec:comm-graph})}

In this section, we gather the technical proofs of the results exposed in \Cref{sec:comm-graph}.
We start by stating a lemma which allows to invert indices when summation is made over the edges of $G$.
This result is in particular used to prove \Cref{lem:cool-cn}.
We then prove all results stated in \Cref{sec:comm-graph}.
Finally, we analyze an unsuccessful approach that runs $|E|$ instances of \MTFTRL that maintain predictions for all pair of agents $(i,j)$ such that $(i,j)\in E$.


\begin{lemma}\label{lem:invert}
Let $F \in \mathbb{R}^{N \times N}$ be any matrix (i.e., not necessarily symmetric). Then we have
\[
\sum_{i=1}^N \sum_{j \in \scN_i} F_{ij} = \sum_{j=1}^N \sum_{i \in \scN_j} F_{ij}\,.
\]
\end{lemma}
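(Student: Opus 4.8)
The plan is to prove this by observing that the neighborhood relation on an undirected graph is symmetric, so that the two iterated sums range over exactly the same collection of ordered pairs $(i,j)$ and therefore agree term by term. This reduces the lemma to a pure relabeling (double-counting) argument, with no analytic content.

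First I would establish the crucial equivalence
\[
j \in \scN_i \quad\Longleftrightarrow\quad i \in \scN_j\,.
\]
By definition $\scN_i = \{i\} \cup \{j : (i,j) \in E\}$, so $j \in \scN_i$ holds precisely when $j = i$ or $(i,j) \in E$. Since $G$ is undirected we have $(i,j) \in E \iff (j,i) \in E$, and the condition $j=i$ is symmetric in $i$ and $j$; hence $j \in \scN_i$ is equivalent to $i = j$ or $(j,i) \in E$, which is exactly the statement that $i \in \scN_j$. The inclusion of the vertex itself in each neighborhood must be handled here, but it causes no difficulty as the diagonal pairs $(i,i)$ satisfy both membership conditions trivially.

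Next I would recast both sides of the claimed identity as single sums over ordered pairs. Defining $P \coloneqq \{(i,j) \in [N]\times[N] : j \in \scN_i\}$, the left-hand side is $\sum_{(i,j)\in P} F_{ij}$, while the right-hand side is $\sum_{(i,j) : i \in \scN_j} F_{ij}$. The equivalence above shows that the two index sets coincide, so the two sums are literally the same, which gives the result.

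I do not expect any genuine obstacle: the entire content is the symmetry of the neighborhood relation together with a change in the order of summation over a fixed index set. The only point requiring a line of care is to check the equivalence on the diagonal $i=j$ and to invoke the undirectedness of $G$ explicitly, so that the argument does not silently assume a symmetric matrix $F$ (which the statement does not provide).
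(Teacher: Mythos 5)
Your proof is correct and follows essentially the same route as the paper's: the paper rewrites each restricted sum as a full double sum with the indicator $\Ind{j \in \scN_i}$, uses the symmetry $\Ind{j \in \scN_i} = \Ind{i \in \scN_j}$, and swaps the summation order, which is exactly your ordered-pairs relabeling argument in indicator notation. Your explicit verification of the equivalence $j \in \scN_i \iff i \in \scN_j$ (including the diagonal case) is a detail the paper leaves implicit, but the substance is identical.
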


\begin{proof}
We have
\[
\sum_{i=1}^N \sum_{j \in \scN_i} F_{ij} = \sum_{i=1}^N \sum_{j=1}^N F_{ij} ~ \Ind{j \in \scN_i} =\sum_{i=1}^N \sum_{j=1}^N F_{ij} ~ \Ind{i \in \scN_j} = \sum_{j=1}^N \sum_{i \in \scN_j} F_{ij}\,.
\]
\end{proof}


\subsection{Proof of Theorem \ref{thm:any-G}}

\thmanyG*

\begin{proof}
By \Cref{lem:cool-cn}, we have
\[
R_T(U) \le \sum_{j=1}^N R^\textnormal{clique-$j$}_T\big(U^{(j)}\big)\,,
\]
where $R^\textnormal{clique-$j$}_T$ the regret suffered by \MTFTRL on the linear losses $\langle w_{i_tj}\,g_t, \cdot\rangle$ over the rounds $t \le T$ such that $i_t \in \mathcal{N}_j$.
We now upper bound each of these terms individually.
Let $j \in \setN$, using the notation in \Cref{alg:mt-ftrl} we have
\begin{align}
R^\textnormal{clique-$j$}_T\big(U^{(j)}\big) &= \sum_{t=1}^T \sum_{i \in \mathcal{N}_j} \left\langle \wij\,g_t,  \big[Y_t^{(j)}\big]_{i:} - U^{(j)}_{i:}  \right\rangle \Ind{i_t=i}\nonumber\\
&= \sum_{t \colon i_t \in \mathcal{N}_j} \left\langle w_{i_tj}\,g_t,  \big[Y_t^{(j)}\big]_{i_t:} - U^{(j)}_{i_t:}\right\rangle\nonumber\\
&= \sum_{t \colon i_t \in \mathcal{N}_j} \left\langle w_{i_tj}\,g_t,  \big[X_t\big]_{i_t:} - U^{(j)}_{i_t:}\right\rangle\label{eq:useful_gamma}\\
&\le \underbrace{\sum_{t \colon i_t \in \mathcal{N}_j} \left\langle w_{i_tj}\,g_t,  \big[X_t\big]_{i_t:} - \big[X_t^{(\xi^*)}\big]_{i_t:}\right\rangle}_{\text{Regret of \texttt{Hedge}}} + \underbrace{\sum_{t \colon i_t \in \mathcal{N}_j} \left\langle w_{i_tj}\,g_t,  \big[X_t^{(\xi^*)}\big]_{i_t:} - U^{(j)}_{i_t:}\right\rangle}_{\text{Regret with choice $\xi^*$}}\,,\label{eq:decompo_hedge}
\end{align}
where $\xi^* = \argmin_{\xi \in\,\Xi_j} \, \sum_{t \colon i_t \in \mathcal{N}_j} \Big\langle w_{i_tj}\,g_t,  \big[X_t^{(\xi)}\big]_{i_t:}\Big\rangle$.
We start by upper bounding the regret due to \texttt{Hedge}.
Let $\text{loss}_t \in \mathbb{R}^{N_j}$ be the vector storing the $w_{i_tj} \big\langle g_t, \big[X_t^{(\xi)}\big]_{i_t:}\big\rangle$ for $\xi \in \Xi_j$, and $e^* \in \mathbb{R}^{N_j}$ the one-hot vector with an entry of $1$ at expert $\xi^*$.
By the analysis of \texttt{Hedge} with regularizers $\psi_t(\bm{p}) = \frac{\beta\sqrt{1+\sum_{s\le t-1}\Ind{i_s \in \scN_j}}}{\sqrt{\ln N_j}} \sum_{k=1}^{N_j} p_k \ln(p_k)$, see e.g., \citet[Section~7.5]{orabona2019modern}, we have
\begin{align}
\sum_{t \colon i_t \in \mathcal{N}_j} \left\langle w_{i_tj}\,g_t,  \big[X_t\big]_{i_t:} - \big[X_t^{(\xi^*)}\big]_{i_t:}\right\rangle &= \sum_{t \colon i_t \in \mathcal{N}_j} \big\langle\,\text{loss}_t,  \bm{p}_t - e^*\big\rangle\nonumber\\
&\le \beta \sqrt{\ln N_j \sum_{i \in \scN_j} T_i} + \frac{\sqrt{\ln N_j}}{2\beta} \sum_{t\colon i_t \in \scN_j} \frac{\|\text{loss}_t\|_\infty^2}{\sqrt{1+\sum_{s \le t-1} \Ind{i_s \in \scN_j}}}\nonumber\\
&\le \beta \sqrt{\ln N_j \sum_{i \in \scN_j} T_i} + \frac{\sqrt{\ln N_j}}{2\beta} \sum_{t\colon i_t \in \scN_j} \frac{w_{i_tj}^2}{\sqrt{1+\sum_{s \le t-1} \Ind{i_s \in \scN_j}}}\nonumber\\
&\le \sqrt{\ln N_j \sum_{i \in \scN_j} T_i} \left(\beta + \frac{1}{\beta}\,\max_{i \in \scN_j} w_{ij}^2 \right)\nonumber\\
&= 2 \max_{i \in \scN_j} w_{ij} \sqrt{\ln N_j \sum_{i \in \scN_j} T_i}\label{eq:bound_hedge}
\end{align}
if we choose $\beta = \max_{i \in \scN_j} w_{ij}$.
We now turn to the second regret.
Assume first that $\sigma_j^2 \le 1$.
Then, the regret with choice $\xi^*$ is in particular better than the regret with choice $\bar{\xi} \in \Xi_j$ such that $\bar{\xi}-\frac{1}{N_j} \le \sigma_j^2 \le \bar{\xi}$.
Recall that the sequence $X_t^{(\bar{\xi})}$ is generated by \FTRL with the sequence of regularizers $\frac{1}{2}\|\cdot\|_{A_j}^2/\eta_{t-1}^{(\bar{\xi})}$.
By the analysis of \FTRL, see e.g., \citet[Corollary~7.9]{orabona2019modern}, we have
\begin{align}
\sum_{t \colon i_t \in \mathcal{N}_j} \Big\langle w_{i_tj}\,g_t,  \big[&X_t^{(\bar{\xi})}\big]_{i_t:} - U^{(j)}_{i_t:}\Big\rangle\nonumber\\
&\le \frac{\big\|U^{(j)}\big\|_{A_j}^2}{2\,\eta_{\sum_{i \in \scN_j}T_i-1}^{(\bar{\xi})}} + \frac{1}{2}\sum_{t\colon i_t \in \scN_j} \eta_{t-1}^{(\bar{\xi})}\big\|w_{i_tj}\,G_t\big\|_{A_j^{-1}}^2\nonumber\\[0.1cm]
&\le \frac{1+\sigma_j^2(N_j-1)}{2\sqrt{1 + \bar{\xi}(N_j-1)}}~\max_{i \in \scN_j} w_{ij}\sqrt{\sum_{i \in \scN_j} T_i} + \frac{N_j\sqrt{1 + \bar{\xi}(N_j-1)}}{2\max_{i \in \scN_j} w_{ij}} \sum_{t \colon i_t \in \scN_j} \frac{w_{i_tj}^2\,\big[A_j^{-1}\big]_{i_ti_t}\,\|g_t\|_2^2}{\sqrt{1+\sum_{s \le t-1} \Ind{i_s \in \scN_j}}}\nonumber\\[0.25cm]
&\le \frac{5}{2}\,\max_{i \in \scN_j} w_{ij}\,\sqrt{1 + \bar{\xi}(N_j-1)}\sqrt{\sum_{i \in \scN_j} T_i}\nonumber\\
&\le \frac{5}{2}\,\max_{i \in \scN_j} w_{ij}\,\sqrt{1 + \left(\sigma_j^2+\frac{1}{N_j}\right)(N_j-1)}\sqrt{\sum_{i \in \scN_j} T_i}\nonumber\\
&\le 4\,\max_{i \in \scN_j} w_{ij}\, \sqrt{1 + \sigma_j^2(N_j-1)}\sqrt{\sum_{i \in \scN_j} T_i}\,,\label{eq:bound_ftrl}
\end{align}
where we used $[{A_{j}}^{-1}]_{ii}= \frac{2}{N_j+1}$ (see for example computations in Appendix A.2 of \cite{cesa2022multitask}) for the third inequality.
Assume now that $\sigma_j^2 \ge 1$.
Then, the regret with choice $\xi^*$ is in particular better than the regret with choice $1$.
The latter corresponds to independent learning \citep{cesa2022multitask} and an analysis similar to the one above shows that its regret is bounded by
\begin{equation}
\max_{i \in \scN_j} w_{ij}\,\sqrt{N_j \sum_{i \in \scN_j}T_i} \le \max_{i \in \scN_j} w_{ij} \sqrt{1 + \sigma_j^2(N_j -1)}\sqrt{\sum_{i \in \scN_j}T_i}\,.\label{eq:bound_ftrl_2}
\end{equation}
Substituting \eqref{eq:bound_hedge} and \eqref{eq:bound_ftrl} or \eqref{eq:bound_ftrl_2} (depending on the value of $\sigma_j^2$) into \eqref{eq:decompo_hedge}, we obtain
\[
R_T(U) \le 6 \sum_{j=1}^N \max_{i \in \scN_j} w_{ij} \left(\sqrt{1 + \sigma_j^2(N_j - 1)} + \ln N_j \right)\sqrt{ \sum_{i \in \scN_j} T_i}\,.
\]

For the second claim, substituting $w_{ij} = \Ind{j \in \scN_i}/N_i$ yields
\begin{align}
\sum_{j=1}^N \max_{i \in \scN_j} w_{ij} \sqrt{1 + \sigma_j^2(N_j - 1)} \sqrt{\sum_{i \in \scN_j} T_i} &= \sum_{j=1}^N \frac{\sqrt{1 + \sigma_j^2(N_j - 1)}}{\min_{i \in \scN_j} N_i} \sqrt{\sum_{i \in \scN_j} T_i}\label{eq:staring_point}\\
&\le \frac{\sqrt{1 + \sigma_\textnormal{max}^2(N_\textnormal{max}-1)}} {N_\textnormal{min}} \sum_{j=1}^N  \sqrt{\sum_{i \in \scN_j} T_i}\nonumber\\
&\le \frac{\sqrt{1 + \sigma_\textnormal{max}^2(N_\textnormal{max}-1)}} {N_\textnormal{min}} \sqrt{N \sum_{j=1}^N \sum_{i \in \scN_j} T_i}\label{eq:Jensennn}\\
&= \frac{\sqrt{1 + \sigma_\textnormal{max}^2(N_\textnormal{max}-1)}} {N_\textnormal{min}} \sqrt{N \sum_{i=1}^N \sum_{j \in \scN_i} T_i}\nonumber\\
&\le \frac{\sqrt{N N_\textnormal{max}}}{N_\textnormal{min}} \sqrt{1 + \sigma_\textnormal{max}^2(N_\textnormal{max}-1)} \sqrt{T}\,,\label{eq:bound1_claim2}
\end{align}
where \eqref{eq:Jensennn} comes from Jensen's inequality, and the following equality from \Cref{lem:invert}.
Starting from \eqref{eq:staring_point} again, we also have
\begin{equation}
\sum_{j=1}^N \frac{\sqrt{1 + \sigma_j^2(N_j - 1)}}{\min_{i \in \scN_j} N_i} \sqrt{\sum_{i \in \scN_j} T_i} \le \sum_{j=1}^N \sqrt{1 + \sigma_j^2(N_\textnormal{max} - 1)} \frac{\sqrt{T}}{N_\textnormal{min}} \le \frac{N}{N_\textnormal{min}}\sqrt{1 + \bar{\sigma}^2(N_\textnormal{max}-1)}\sqrt{T}\,,\label{eq:bound2_claim2}
\end{equation}
where we have used Jensen's inequality, and $\bar{\sigma}^2 = (1/N)\sum_{j=1}^N\sigma_j^2$ is the average local variance.
Combining \eqref{eq:bound1_claim2} and \eqref{eq:bound2_claim2} gives the second claim of \Cref{thm:any-G}.

To prove the third claim, let $S_\gamma(G)$ be a smallest dominant set of $G$.
For each $i \in [N]$, let $j(i)$ be the node in $\scN_i$ that belongs to $S_\gamma(G)$ (if there are several, take the one with the smallest index).
We set $w_{ij} = \delta_{jj(i)}$, i.e., agent $i$ completely delegates its prediction to its neighbour in the dominant set.
Substituting $w_{ij} = \delta_{jj(i)}$ into \eqref{eq:useful_gamma}, we obtain
\[
R_T^\textnormal{clique-$j$}(U^{(j)}) = \sum_{t\colon i_t \in \mathcal{V}_j} \big\langle g_t, [X_t]_{i_t:} - U_{i_t:}^{(j)}\big\rangle\,,
\]
where $\mathcal{V}_j = \{i \in [N] \colon j(i) = j\}$ is the set of agents with $j$ as referent node.
Unrolling the proof of the first claim (substituting $\mathcal{N}_j$ by $\mathcal{V}_j$ and $w_{ij}$ by $\delta_{jj(i)}$), we get
\[
R_T(U) \overset{\tilde{\mathcal{O}}}{=} \sum_{j = 1}^N \max_{i \in \mathcal{V}_j} \delta_{jj(i)} \sqrt{1 + \tilde{\sigma}_j^2(|\mathcal{V}_j|-1)}\sqrt{\sum_{i \in \mathcal{V}_j} T_i} = \sum_{j \in S_\gamma(G)} \sqrt{1 + \tilde{\sigma}_j^2(|\mathcal{V}_j|-1)} \sqrt{\sum_{i \colon j(i) = i} T_i}\,,
\]
where $\tilde{\sigma}_j^2$ is the local variance at $j$ computed among its neighbors in $\mathcal{V}_j$ (rather than  $\mathcal{N}_j$).
Note that $\tilde{\sigma}_j^2$ might be larger than $\sigma_\textnormal{max}^2$, but is always bounded by $\Delta^2 \coloneqq \sup_{(i, j) \in E} \|U_{i:} - U_{j:}\|_2^2$.
The proof is concluded by observing that Cauchy-Schwarz inequality gives
\begin{align*}
\sum_{j \in S_\gamma(G)} \sqrt{1 + \tilde{\sigma}_j^2(|\mathcal{V}_j|-1)} \sqrt{\sum_{i \colon j(i) = i} T_i}  &\le \sqrt{1 + \Delta^2(N_\textnormal{max} - 1)}\,\sum_{j \in S_\gamma(G)} \sqrt{\sum_{i \colon j(i) = j} T_i}\\
&\le \sqrt{1 + \Delta^2(N_\textnormal{max} - 1)}\, \sqrt{\big|S_\gamma(G)\big| \sum_{j \in S_\gamma(G)} ~ \sum_{i \colon j(i) = j} T_i}\\
&= \sqrt{1 + \Delta^2(N_\textnormal{max} - 1)}\, \sqrt{\gamma(G) \,T}\,.
\end{align*}
\end{proof}


\subsection{Proof of Corollary \ref{cor:reggraphs_cliques}}

\corspecialgraph*

\begin{proof}
The first claim of \Cref{cor:reggraphs_cliques} is proved by using the last claim of \Cref{thm:any-G}, and recalling that for a $K$-regular graph we have $N_\textnormal{max} = N_\textnormal{min} = K + 1$.
Consider now a collection of $\chi$ cliques, $C_1, \ldots, C_{\chi}$.
Starting from the right-hand side of \Cref{eq:staring_point}, we have
\begin{align*}
\sum_{j=1}^N \frac{\sqrt{1 + \sigma_j^2(N_j-1)}}{\max_{i \in \scN_j} N_i} \sqrt{\sum_{i \in \scN_j} T_i} &= \sum_{k=1}^{\chi}\sum_{j\in C_k}\frac{\sqrt{1 + \sigma_j^2 (|C_k|-1)}}{|C_k| } \sqrt{\sum_{i \in C_k}T_i}\\
&= \sum_{k=1}^{\chi} \sqrt{1 + \sigma_\textnormal{max}^2 (|C_k|-1)} \sqrt{\sum_{i \in C_k}T_i}\\
&\le \sqrt{\sum_{k=1}^{\chi}\big(1 + \sigma_\textnormal{max}^2 (|C_k|-1)\big)} ~ \sqrt{\sum_{k=1}^{\chi} \sum_{i \in C_k}T_i}\\
&= \sqrt{\chi + \sigma_\textnormal{max}^2(N - \chi)}\, \sqrt{T}\,.
\end{align*}
\end{proof}


\subsection{Proof of Theorem \ref{thm:sto}}

\thmsto*

\begin{proof}
The proof follows that of Theorem \ref{thm:any-G} until the decomposition of  \Cref{eq:decompo_hedge}.
The analysis of \texttt{Hedge} with regularizers $\psi_t(\bm{p}) = \frac{\beta\sqrt{1+\sum_{s \le t-1}\Ind{i_s \in \scN_j}}}{\sqrt{\ln N_j}} \sum_{k=1}^{N_j} p_k \ln(p_k)$ then gives
\begin{align}
\sum_{t \colon i_t \in \mathcal{N}_j} \left\langle w_{i_tj}\,g_t,  \big[X_t\big]_{i_t:} - \big[X_t^{(\xi^*)}\big]_{i_t:}\right\rangle &= \sum_{t \colon i_t \in \mathcal{N}_j} \big\langle\,\text{loss}_t,  \bm{p}_t - e^*\big\rangle\nonumber\\
&\le \beta \sqrt{\ln N_j \sum_{i \in \scN_j}T_i} + \frac{\sqrt{\ln N_j}}{2\beta} \sum_{t\colon i_t \in \scN_j} \frac{\|\text{loss}_t\|_\infty^2}{\sqrt{1+\sum_{s\le t-1}\Ind{i_s \in \scN_j}}}\nonumber\\
&\le \beta \sqrt{\ln N_j \sum_{i \in \scN_j} T_i} + \frac{\sqrt{\ln N_j}}{2\beta} \sum_{t=1}^T \frac{w_{i_tj}^2\,\Ind{i_t \in \scN_j}}{\sqrt{1+\sum_{s \le t-1} \Ind{i_s \in \scN_j}}}\,.\label{eq:Ineedtorefertothis}
\end{align}
Taking expectation on both sides, we obtain
\begin{align*}
\mathbb{E}\left[\rule{0cm}{0.7cm}\right.\sum_{t \colon i_t \in \mathcal{N}_j} \Big\langle& w_{i_tj}\,g_t, \big[X_t\big]_{i_t:} - \big[X_t^{(\xi^*)}\big]_{i_t:}\Big\rangle\left.\rule{0cm}{0.7cm}\right]\\
&\le \mathbb{E}\left[\beta \sqrt{\ln N_j \sum_{i \in \scN_j}T_i} + \frac{\sqrt{\ln N_j}}{2\beta} \sum_{t=1}^T \frac{w_{i_tj}^2\,\Ind{i_t \in \scN_j}}{\sqrt{1+\sum_{s \le t-1} \Ind{i_s \in \scN_j}}}\right]\\
&\le \beta \sqrt{\ln N_j~\mathbb{E}\left[\sum_{i \in \scN_j}T_i\right]} + \frac{\sqrt{\ln N_j}}{2\beta} \mathbb{E}\left[\sum_{t=1}^T \mathbb{E}\left[\frac{w_{i_tj}^2\,\Ind{i_t \in \scN_j}}{\sqrt{1+\sum_{s \le t-1} \Ind{i_s \in \scN_j}}}\,\bigg|\, i_1, \ldots, i_{t-1}, i_t \in \scN_j\right]\right]\\
&\le \beta \sqrt{Q_j T\,\ln N_j} + \frac{\sqrt{\ln N_j}}{2\beta} \sum_{i \in \scN_j}\frac{q_i}{Q_j}\omega^2_{ij} ~ \mathbb{E}\left[\sum_{t=1}^T \frac{\Ind{i_t \in \scN_j}}{\sqrt{1+\sum_{s \le t-1} \Ind{i_s \in \scN_j}}}\right]\\
&\le \beta \sqrt{Q_j T\,\ln N_j} + \frac{\sqrt{\ln N_j}}{\beta} \sum_{i \in \scN_j}\frac{q_i}{Q_j}\omega^2_{ij} ~ \mathbb{E}\left[\sqrt{\sum_{i \in \scN_j} T_i}\right]\\
&\le 2 \sqrt{\sum_{i \in \scN_j} q_i\,w_{ij}^2} \sqrt{T\,\ln N_j}
\end{align*}
if we choose $\beta = \sqrt{\sum_{i \in \scN_j} \frac{q_i}{Q_j}w_{ij}^2}$.
Note that the same analysis can be applied to the second regret in the decomposition of \Cref{eq:decompo_hedge}.
Overall, we obtain that for all $U \in \mathcal{U}$ we have
\[
\mathbb{E}[R_T(U)] \le 6 \left(\sum_{j=1}^N \sqrt{\sum_{i \in \scN_j}q_i\,w_{ij}^2} \sqrt{1 + \sigma_j^2(N_j - 1)}\right) \sqrt{T\ln N}\,.
\]
To prove the second claim, substitute $\wij = q_j/Q_i$ in the above equation and observe that
\begin{align*}
\sum_{j=1}^N \sqrt{\sum_{i\in \scN_j} q_i\,w_{ij}^2} &= \sum_{j=1}^N \sqrt{\sum_{i\in \scN_j}\frac{q_i q_j^2}{Q_i^2}} = \sum_{j=1}^N q_j \sqrt{\sum_{i\in \scN_j}\frac{q_i}{Q_i^2}}\\
&\le \sqrt{\sum_{j=1}^N q_j \sum_{i\in \scN_j} \frac{q_i}{Q_i^2}} = \sqrt{\sum_{i=1}^N \sum_{j\in \scN_i} q_j \frac{q_i}{Q_i^2}} = \sqrt{\sum_{i=1}^N \frac{q_i}{Q_i}} \le \sqrt{\alpha(G)}\,,
\end{align*}
where the first inequality comes from Jensen's inequality, and the second one from a known combinatorial result, see e.g., \citet{griggs1983lower} or \citet[Lemma~3]{cesa2020cooperative}.
\end{proof}


\subsection{Extension of Theorem \ref{thm:sto} to unknown \texorpdfstring{$q_i$}{qi}}
\label{apx:unknown_q}

\begin{theorem}
Let $G$ be any graph, and assume that the agent activations are stochastic.
Consider the following strategy, run independently by each agent $j$ for its artificial clique, and based on its local time.
First, predict $Y_t^{(j)} = 0$ until the local time reaches $\tau \coloneqq \left\lceil \frac{4}{q_\textnormal{min}}\ln\big(2N^2T\big)\right\rceil$, where $q_\textnormal{min} = \min_{i \in [N]}q_i$.
Then, predict $Y_t^{(j)}$ using \MTFTRL, run with $N=N_j$ and $\beta_{t-1} = \sqrt{\sum_{i \in \scN_j} \hat{\pi}_{ij}(\tau)\,w_{ij}^2} \sqrt{1+\sum_{s \le t-1}\Ind{i_s \in \scN_j} - \tau}$, where the $\hat{\pi}_{ij}(\tau)$ are the empirical estimates of the $q_i/Q_j$ computed at local time $\tau$.
Then, the regret satisfies for all $U \in \mathcal{U}$
\[
\mathbb{E}[R_T(U)] \tildeO \hspace{-0.04cm}\left(\sum_{j=1}^N \sqrt{\sum_{i\in \mathcal{N}_j}q_i\,w_{ij}^2} \sqrt{1 + \sigma_j^2 (N_j-1)}\right) \hspace{-0.1cm}\sqrt{T} + \frac{N}{q_\textnormal{min}}\,,
\]
where the expectation is taken with respect to the agent activations, and $\tilde{\mathcal{O}}$ neglects logarithmic terms in $N$ and $T$.
\end{theorem}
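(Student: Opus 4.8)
The plan is to split the horizon into a short \emph{exploration} phase, during which each agent's local instance predicts $0$ while accumulating activation counts, and an \emph{exploitation} phase, in which \Coolcn is run exactly as in \Cref{thm:sto} but with the frozen estimates $\hat{\pi}_{ij}(\tau)$ substituted for the unknown conditional probabilities $q_i/Q_j$. Starting from \Cref{lem:cool-cn}, I would decompose $R_T(U)\le\sum_{j=1}^N R^{\textnormal{clique-}j}_T(U^{(j)})$ and then split each clique regret at the local time $\tau$. During its exploration phase agent $j$ predicts $0$, so every round $t$ with $i_t\in\mathcal{N}_j$ contributes at most $w_{i_tj}\|g_t\|_2\|U^{(j)}_{i_t:}\|_2\le w_{i_tj}\le 1$; since clique $j$ is activated exactly $\tau$ times before its local time reaches $\tau$, its exploration cost is at most $\tau$. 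Summing over $j$ and plugging in $\tau=\lceil \tfrac{4}{q_\textnormal{min}}\ln(2N^2T)\rceil$ yields the additive $\tfrac{N}{q_\textnormal{min}}$ term, the logarithmic factor being absorbed by $\widetilde{\mathcal{O}}$. Crucially, because \Cref{lem:cool-cn} already separates the cliques, the fact that different neighbors leave exploration at different global times is irrelevant: each clique's warm-up is accounted for independently.

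The core of the argument is to show that the frozen estimates are accurate enough that the exploitation regret matches \Cref{thm:sto} up to constants. The key observation is that, conditioned on an activation falling in $\mathcal{N}_j$, the activated agent equals $i$ with probability $q_i/Q_j$ independently across rounds, so $\hat{\pi}_{ij}(\tau)$ is the empirical mean of $\tau$ i.i.d.\ $\mathrm{Bernoulli}(q_i/Q_j)$ variables. Since $q_i/Q_j\ge q_i\ge q_\textnormal{min}$, a multiplicative Chernoff bound together with a union bound over the at most $N^2$ pairs $(i,j)$ shows that, on a ``good event'' $E$ of probability at least $1-1/\mathrm{poly}(N,T)$, one has $\tfrac12\,\tfrac{q_i}{Q_j}\le\hat{\pi}_{ij}(\tau)\le 2\,\tfrac{q_i}{Q_j}$ for all $(i,j)$ simultaneously (choosing the constant in $\tau$ large enough to beat the union bound). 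I would then condition on $E$ and rerun the proof of \Cref{thm:sto} verbatim, with $\hat\beta_{t-1}=\sqrt{\sum_{i}\hat{\pi}_{ij}(\tau)\,w_{ij}^2}\,\sqrt{1+\sum_{s\le t-1}\Ind{i_s\in\mathcal{N}_j}-\tau}$ in place of the idealized learning rate. Because the exploitation activations are independent of the exploration phase, $E$ is measurable with respect to warm-up only and does not alter their law, so the expectation computation following \eqref{eq:Ineedtorefertothis} goes through with $\hat\beta$ treated as a constant; the two-sided control $\hat{\pi}_{ij}/(q_i/Q_j)\in[\tfrac12,2]$ perturbs both the $\beta$ term and the $1/\beta$ term of the \texttt{Hedge}/\FTRL\ decomposition \eqref{eq:decompo_hedge} by at most a constant factor, reproducing the first term of the claimed bound (with the clean cancellation of $Q_j$ observed in \Cref{thm:sto} surviving intact).

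To close, on the complementary event $E^c$ I would use a crude worst-case bound on the exploitation regret (each round costs at most a constant, so $R_T\le 2T$); since $\mathbb P(E^c)\le 1/\mathrm{poly}(N,T)$, the contribution $\mathbb E[R_T\Ind{E^c}]\le 2T\,\mathbb P(E^c)$ is negligible provided the constant in $\tau$ is chosen so that $\mathbb P(E^c)\le 1/T$. I expect the main obstacle to lie in the exploitation step rather than in the concentration itself: one must verify that freezing the estimate at local time $\tau$ preserves the \emph{monotonicity} of the sequence $\hat\beta_{t-1}$ on which both the \texttt{Hedge} and \FTRL\ analyses rely — this is precisely why, as noted in \Cref{rmk:unknown_q}, re-estimating $q_i/Q_j$ at every round would fail — and that conditioning on the warm-up-measurable event $E$ leaves the exploitation-phase regret expectation untouched, so that the constant-factor slack introduced by $\hat\beta$ does not interact with the stochastic averaging that produces the $\sqrt{T}$ rate.
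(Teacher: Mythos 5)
Your proposal is correct and follows essentially the same route as the paper's proof: decompose via \Cref{lem:cool-cn}, charge at most $\tau$ per clique for the warm-up phase (giving the $N/q_\textnormal{min}$ term), establish a multiplicative-band good event for the frozen estimates $\hat{\pi}_{ij}(\tau)$ via a multiplicative Chernoff bound plus a union bound, rerun the argument of \Cref{thm:sto} on that event with the perturbed learning rate at only a constant-factor cost, and absorb the bad event through a trivial $O(T)$ regret bound times its $1/\mathrm{poly}(N,T)$ probability. The only differences are immaterial (your $[\tfrac12,2]$ band versus the paper's $[\tfrac12,\tfrac32]$), and note that your claim that conditioning on the good event leaves the exploitation activations' law untouched is not literally exact since cliques overlap — but, as in the paper, this is harmless because for nonnegative $X$ one has $\mathbb{E}[X \mid E]\le \mathbb{E}[X]/\mathbb{P}(E)\le 2\,\mathbb{E}[X]$.
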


\begin{proof}
We first introduce some notation.
For any $j \in [N]$, $i \in \scN_j$, and $t > 0$, let $t^{(j)} = \sum_{s \le t} \Ind{i_s \in \scN_j}$ be the local time for the artificial clique centered at agent $j$, $\pi_{ij} = q_i / Q_j$, and $\hat{\pi}_{ij}(t^{(j)}) = \sum_{s \le t} \Ind{i_s = i} / \sum_{s \le t} \Ind{i_s \in \scN_j} = \sum_{s \le t} \Ind{i_s = i} / t^{(j)}$ be its natural estimator at local time $t^{(j)}$.
The strategy is as follows.
First, each agent $j$ predicts with $Y_t^{(j)} = 0$ until its local time $t^{(j)}$ reaches some value $\tau_j$ to be determined later.
It then builds the estimates $\hat{\pi}_{ij}(\tau_j)$ of the $\pi_{ij}$.
Finally, from local time $\tau_j$ agent $j$ predicts $Y_t^{(j)}$ using its local instance of \MTFTRL, run with $N = N_j$, and $\beta_{t-1} = \sqrt{\sum_{i \in \scN_j} \hat{\pi}_{ij}(\tau_j)\,w_{ij}^2} \sqrt{1+\sum_{s \le t-1}\Ind{i_s \in \scN_j} - \tau_j}$.
We refer to this algorithm as \MTFTRLq.

%
%
%
Note that \Cref{lem:cool-cn} still applies, such that we may only focus on the regret on the different instances of \MTFTRLq run by the different agents.
We differentiate two cases, depending on whether the event
\[
\mathcal{E} \coloneqq \left\{\frac{\pi_{ij}}{2} \le \hat{\pi}_{ij}(\tau_j) \le \frac{3}{2}\pi_{ij}\,, \quad \forall j \in [N], \forall i \in \scN_j\right\}
\]
occurs or not.
If $\mathcal{E}$ is not satisfied, the regret of the instance of \MTFTRLq run by agent $j$ is trivially bounded by $T_j$, and that of the global approach by $T$.
When $\mathcal{E}$ is satisfied, we can control the regret of the instance of \MTFTRLq run by agent $j$ as follows.
On the first $\tau_j$ round, the regret is bounded by $\tau_j$.
From time step $\tau_j$ onward, we have $\frac{\pi_{ij}}{2} \le \hat{\pi}_{ij}(\tau_j) \le \frac{3}{2}\pi_{ij}$ and taking expectations on \eqref{eq:Ineedtorefertothis}, with $\beta = \sqrt{\sum_{i \in \scN_j} \hat{\pi}_{ij}(\tau_j)\,w_{ij}^2}$, we obtain, conditionally on $\mathcal{E}$
\begin{align*}
&\mathbb{E}\left[\rule{0cm}{1cm}\right.\sum_{\substack{t \colon t^{(j)} \ge \tau_j+1\\t \colon i_t \in \mathcal{N}_j}} \Big\langle w_{i_tj}\,g_t, \big[X_t\big]_{i_t:} - \big[X_t^{(\xi^*)}\big]_{i_t:}\Big\rangle ~\bigg|~ \mathcal{E}\left.\rule{0cm}{1cm}\right]\\
&\le \mathbb{E}\left[\rule{0cm}{1cm}\right. \sqrt{\sum_{i \in \scN_j} \hat{\pi}_{ij}(\tau_j)\,w_{ij}^2} \sqrt{\ln N_j \sum_{t \colon t^{(j)}\ge \tau_j+1}\Ind{i_t \in \scN_j}}  \\
&\hspace{1.2cm}+\frac{\sqrt{\ln N_j}}{2\sqrt{\sum_{i \in \scN_j} \hat{\pi}_{ij}(\tau_j)\,w_{ij}^2}} \sum_{t \colon t^{(j)}=\tau_j+1} \frac{w_{i_tj}^2\,\Ind{i_t \in \scN_j}}{\sqrt{1+\sum_{\substack{s \colon s^{(j)} \ge \tau_j+1\\ s \le t-1}} \Ind{i_s \in \scN_j}}}~\bigg|~ \mathcal{E}\left.\rule{0cm}{1cm}\right]\\
&\le \mathbb{E}\left[ \sqrt{\frac{3}{2}}\sqrt{\sum_{i \in \scN_j} \pi_{ij}\,w_{ij}^2} \sqrt{\ln N_j \sum_{t\ge1}\Ind{i_t \in \scN_j}} + \frac{\sqrt{2}\sqrt{\ln N_j}}{2\sqrt{\sum_{i \in \scN_j} \pi_{ij}\,w_{ij}^2}} \sum_{t=1}^T \frac{w_{i_tj}^2\,\Ind{i_t \in \scN_j}}{\sqrt{1+\sum_{s \le t-1} \Ind{i_s \in \scN_j}}}\right]\\
&\le 2\sqrt{2} \sqrt{\sum_{i \in \scN_j} q_i\,w_{ij}^2}\,\sqrt{T\,\ln N_j}\,.
\end{align*}

The same method applies to the regret with choice $\xi^*$, see decomposition \eqref{eq:decompo_hedge}, and we finally obtain that the expected regret of the overall approach, is bounded by
\[
\underbrace{\sum_{j=1}^N \tau_j}_{\coloneqq\tau} + \underbrace{9 \left(\sum_{j=1}^N \sqrt{\sum_{i \in \scN_j} q_i\,w_{ij}^2} \sqrt{1 + \sigma_j^2(N_j - 1)}\right) \sqrt{T\,\ln N}}_{\coloneqq\mathfrak{A}}\,,
\]
conditionally to $\mathcal{E}$.
Hence, we have
\begin{equation}
\mathbb{E}[R_T(U)] \le \mathbb{E}\big[\mathbb{P}(\mathcal{E})(\tau + \mathfrak{A}) + \mathbb{P}(\mathcal{E}^\mathsf{c})T\big] \le \tau + \mathfrak{A} + \mathbb{E}\big[\mathbb{P}(\mathcal{E}^\mathsf{c})\big]T\,.\label{eq:plug_proba}
\end{equation}
We now upper bound $\mathbb{P}(\mathcal{E}^\mathsf{c})$.
Let $q_\textnormal{min} = \min_{i \in [N]} q_i$.
For any $j \in [N]$ and $i \in \scN_j$, by the multiplicative Chernoff bound \citep[Corollary~4.6]{mitzenmacher2005probability} we have
\[
\mathbb{P}\left(\hat{\pi}_{ij}(\tau_j) \le \frac{\pi_{ij}}{2} \text{ or } \hat{\pi}_{ij}(\tau_j) \ge \frac{3}{2}\pi_{ij}\right) = \mathbb{P}\left(|\hat{\pi}_{ij}(\tau_j) -\pi_{ij}| \ge \frac{\pi_{ij}}{2}\right) \le \exp\left({-\frac{\pi_{ij}}{12}\,\tau_j}\right) \le \exp\left({-\frac{q_\textnormal{min}}{12}\,\tau_j}\right)\,,
\]
and by the union bound
\begin{equation}
\mathbb{P}(\mathcal{E}^\mathsf{c}) \le N \sum_{j=1}^N \exp\left(-\frac{q_\textnormal{min}}{12}\,\tau_j\right)\,.\label{eq:bound_exp_proba}
\end{equation}
Substituting \eqref{eq:bound_exp_proba} into \eqref{eq:plug_proba}, and setting $\tau_j = \left\lceil \frac{12}{q _\textnormal{min}}\ln\big(2N^2T\big)\right\rceil$ for all $j$, we get
\begin{align*}
\mathbb{E}[R_T(U)] &\le \mathfrak{A} + \tau + 2NT \sum_{j=1}^N \exp\left(-\frac{q_\textnormal{min}}{12}\, \tau_j\right)\\
&\le 9 \left(\sum_{j=1}^N \sqrt{\sum_{i \in \scN_j} q_i\,w_{ij}^2} \sqrt{1 + \sigma_j^2(N_j - 1)}\right) \sqrt{T\,\ln N} + \frac{12N}{q_\textnormal{min}}\ln\big(2N^2T\big) + N + 1\\
&\le 9 \left(\sum_{j=1}^N \sqrt{\sum_{i \in \scN_j} q_i\,w_{ij}^2} \sqrt{1 + \sigma_j^2(N_j - 1)}\right) \sqrt{T\,\ln N} + \frac{14N}{q_\textnormal{min}}\ln\big(2N^2T\big)\,.
\end{align*}
\end{proof}

\subsection{Proof of Theorem \ref{thm:lower-bound}}
\label{apx:lower}

\thmlowerbound*

\begin{proof}
\begin{figure*}[!t]
\centering
\begin{subfigure}{}
\caption{Activated nodes in a $2$-regular graph.}
\label{fig:2_regular}
\medskip
\includegraphics[width=0.8\textwidth]{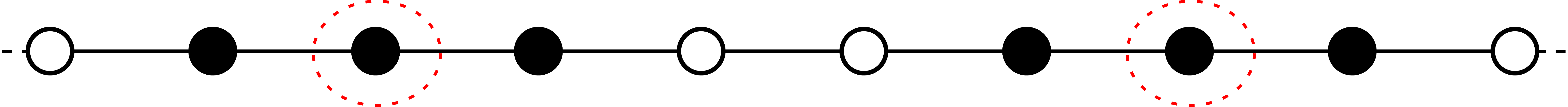}
\end{subfigure}
\bigskip
\begin{subfigure}{}
\caption{Activated nodes in a $4$-regular graph.}
\label{fig:4_regular}
\medskip
\includegraphics[width=0.8\textwidth]{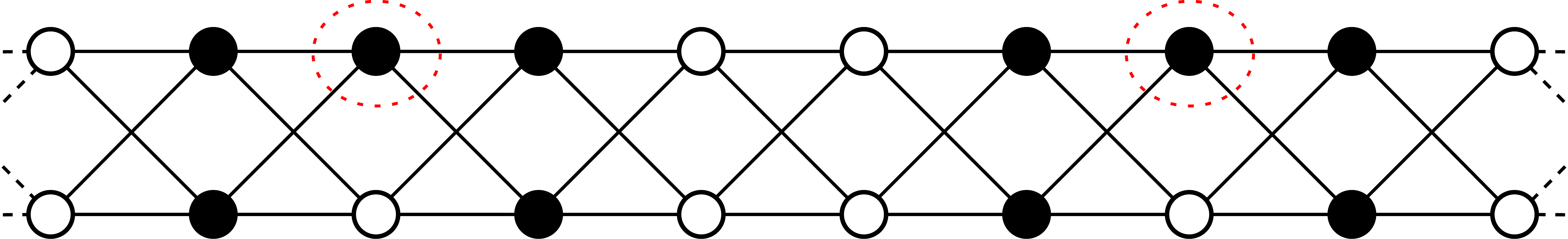}
\end{subfigure}
\bigskip
\begin{subfigure}{}
\caption{Activated nodes in a $6$-regular graph and more.}
\label{fig:6_regular}
\medskip
\includegraphics[width=0.8\textwidth]{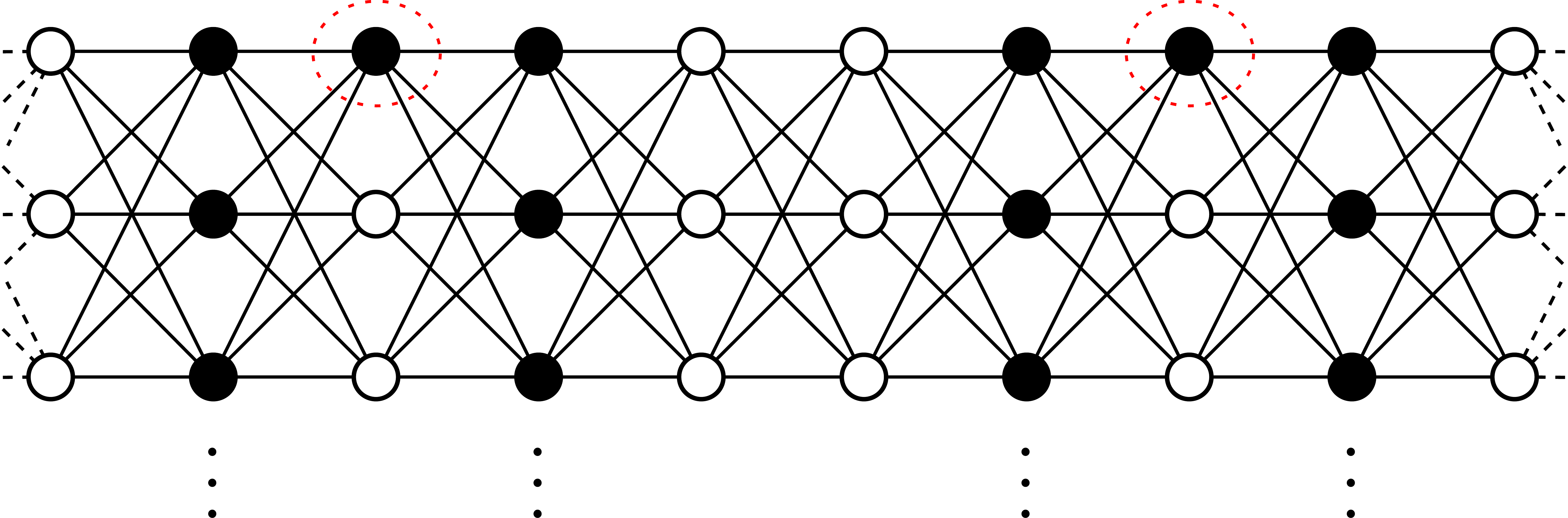}
\end{subfigure}
\bigskip
\end{figure*}

The first part of the lower bound is proved in \cite[Proposition~4]{cesa2022multitask}.
It establishes the existence of a comparator $U$ and a sequence of activations and gradients such that for any algorithm we have
\begin{equation}
R_T(U) \ge \frac{\sqrt{2}}{4}\sqrt{1 + \sigma^2(N-1)}\sqrt{T}\,.\label{eq:lb_variance}  
\end{equation}
Note that this lower bound is oblivious to the communication graph $G$, and applies in particular to algorithms that follow the protocol described in \Cref{sec:model} (where information can only be exchanged along the edges of  $G$).
For the second part of the first lower bound, let $S_{\alpha_2}(G) = \{j_1, \ldots, j_{\alpha_2}\}$ be a largest twice independent set of $G$.
Due to the communication protocol, it is immediate to check that if only nodes in $S_{\alpha_2}(G)$ are activated, then no information from an active node can be transmitted to another active node.
Hence, the regret incurred by the nodes in $S_{\alpha_2}(G)$ are independent.
Now, we know from standard online learning lower bounds that there exists some $u_0 \in \mathbb{R}^d$ and a sequence $g_1, \ldots, g_{T}$ such that the regret of a single agent is lower bounded by $\sqrt{T}/2$, see e.g., \cite[Theorem~5.1]{orabona2019modern}.
Let $U = (u_0, \ldots, u_0) \in \mathbb{R}^{N \times d}$, the activations be $j_1$ for the first $T/\alpha_2(G)$ steps, $j_2$ for the next $T/\alpha_2(G)$, and so on.
Finally, consider the sequence of gradients $g_1 \ldots g_{T/\alpha_2(G)}$ repeated $\alpha_2(G)$ times.
In words, each agent in $S_{\alpha_2}(G)$ is sequentially given $g_1 \ldots g_{T/\alpha_2(G)}$ and comparator $u_0$.
Then the multitask regret satisfies
\begin{equation}
R_T(U) = \sum_{j \in S_{\alpha_2}(G)} R_{T/\alpha_2(G)}^\textnormal{agent $j$} \ge \sum_{j \in S_{\alpha_2}(G)} \frac{\sqrt{T/\alpha_2(G)}}{2} = \frac{\sqrt{\alpha_2(G)\,T}}{2}\,,\label{eq:lb_twice_indep}
\end{equation}
where the first equality comes from the independence due to the communication constraints, and the inequality from the single-task lower bound.
Gathering \eqref{eq:lb_variance} and \eqref{eq:lb_twice_indep} yields the desired result.

Consider now the special case of $K$-regular graphs.
\Cref{fig:2_regular,fig:4_regular,fig:6_regular} show examples of communication graph on which our second lower bound can be attained.
The idea is similar to that of the first lower bound.
Only groups of nodes that are $2$-nodes apart (the black nodes in \Cref{fig:2_regular,fig:4_regular,fig:6_regular}) are activated.
This way, the regret incurred by the different groups are summed, as they are independent.
On a group of nodes, one can identify a ``central node'' (circled with red dashes).
We denote the set of indices of such  nodes $\mathcal{J}_\textnormal{cn}$.
By the multitask lower bound of \citet{cesa2022multitask}, we know that for the (artificial) clique with center node $j$ there exists a comparator $U^{(j)} \in \mathbb{R}^{N_j \times d}$ and a sequence of activations and gradients such that we have a lower bound of order $\sqrt{1 + K \, \sigma_j^2} \sqrt{\sum_{i \in \scN_j} T_i}$.
Overall by feeding these gradients to each groups of nodes, each activated $T/|\mathcal{J}_\textnormal{cn}|$ times, we obtain
\[
R_T(U) = \sum_{j \in \mathcal{J}_\textnormal{cn}} R_{T/|\mathcal{J}_\textnormal{cn}|}^\textnormal{group centered at $j$}(U^{(j)}) \ge \frac{\sqrt{2}}{4}\sum_{j \in \mathcal{J}_\textnormal{cn}} \sqrt{1 + K\,\sigma_j^2} \, \sqrt{\frac{T}{|\mathcal{J}_\textnormal{cn}|}} \ge \frac{1}{5} \sqrt{1 + K\,\sigma_\textnormal{min}^2} \, \sqrt{\frac{NT}{K}}\,,
\]
where we have used that on the family of regular graphs depicted in \Cref{fig:2_regular,fig:4_regular,fig:6_regular}, we have $|\mathcal{J}_\textnormal{cn}| = \frac{2N}{5K}$.
\end{proof}

\subsection{A Failing Approach}
\label{sec:fail}

In \mtcool, each agent maintains an instance of \MTFTRL for the artificial clique composed of its neighbors.
However, this is not the only way that one could use \MTFTRL as a building block to devise a generic algorithm operating on any communication graph.
Another tempting approach consists in maintaining one instance of \MTFTRL for every pair of agents $(i, j)$ such that $(i, j) \in E$.
Let $Y^{(i, j)}_t \in \mathbb{R}^{2 \times d}$ be the prediction maintained at time step $t$ by such an algorithm.\footnote{We consider here for simplicity that every agent is connected to itself, such that we also maintain $Y_t^{(i, i)}$ for all $i \in V$. The latter is exclusively based on the feedback received by agent $i$.}
Note that we have $Y^{(i, j)}_t = Y^{(j, i)}_t$ for any $(i, j) \in E$.
Similarly to \mtcool, a natural way for the active agent $i_t$ to make predictions is to submit the weighted average $x_t = \sum_{j \in \scN_{i_t}} w_{i_tj} \big[Y_t^{(i_t, j)}\big]_{i_t:}$, where the $w_{ij}$ are nonnegative and such that $\sum_{j \in \scN_{i_t}} w_{ij} = 1$ for all $i$.
Note that, contrary to \mtcool, predictions do not require a fetch step here, as both endpoints of each edge $(i, j)$ are maintaining the same $Y^{(i,j)}_t = Y^{(j,i)}_t$.
Using similar arguments as for \mtcool, we obtain
\begin{align}
\sum_{t=1}^T \ell_t(x_t) - \ell_t(U_{i_t:}) &\le \sum_{t=1}^T \langle g_t, x_t - U_{i_t:}\rangle\nonumber\\
&= \sum_{t=1}^T \sum_{i=1}^N  \Big\langle g_t, \sum_{j \in \mathcal{N}_i} \wij\big[Y_t^{(i,j)}\big]_{i:} -U_{i:} \Big\rangle \Ind{i_t=i}\nonumber\\
&= \sum_{t=1}^T \sum_{i=1}^N \sum_{j \in \mathcal{N}_i} \left\langle g_t,   \wij\left(\big[Y_t^{(i,j)}\big]_{i:} -U_{i:} \right) \right\rangle \Ind{i_t=i}\nonumber\\
&= \sum_{i=1}^N \sum_{j \in \mathcal{N}_i} \sum_{t=1}^T \left\langle \wij\,g_t,   \left(\big[Y_t^{(i,j)}\big]_{i:} -U_{i:} \right) \right\rangle \Ind{i_t=i}\label{eq:i_active}\\
&= \sum_{j=1}^N \sum_{i \in \mathcal{N}_j} \sum_{t=1}^T \left\langle \wij\,g_t,   \left(\big[Y_t^{(i,j)}\big]_{i:} -U_{i:} \right) \right\rangle \Ind{i_t=i}\label{eq:invert_i_active}\\
&= \sum_{i=1}^N \sum_{j \in \mathcal{N}_i} \sum_{t=1}^T \left\langle w_{ji}\,g_t, \left(\big[Y_t^{(i,j)}\big]_{j:} -U_{j:} \right) \right\rangle \Ind{i_t=j}\label{eq:j_active}\\
&= \frac{1}{2} \sum_{i=1}^N \sum_{j \in \mathcal{N}_i} \sum_{t=1}^T \bigg[\left\langle \wij\,g_t, \left(\big[Y_t^{(i,j)}\big]_{i:} -U_{i:} \right) \right\rangle \Ind{i_t=i}\nonumber\\
&\hspace{2.7cm}+ \left\langle w_{ji}\,g_t, \left(\big[Y_t^{(i,j)}\big]_{j:} -U_{j:} \right) \right\rangle\Ind{i_t=j}\bigg]\label{eq:ij_active}\\
&= \sum_{(i,j) \in E} R^{(i,j)}_T\big(U^{(i,j)}\big)\,,\nonumber
\end{align}
where \eqref{eq:invert_i_active} derives from \Cref{lem:invert}, \eqref{eq:j_active} is \eqref{eq:invert_i_active} with indices $i$ and $j$ swapped, \eqref{eq:ij_active} is the average of \eqref{eq:i_active} and~\eqref{eq:j_active}, and $R^{(i,j)}_T\big(U^{(i,j)}\big)$ denotes the regret suffered by the instance of \MTFTRL run by edge $(i,j)$ on the linear losses $\big\langle (w_{ij}\Ind{i_t=i}+ w_{ji}\Ind{i_t=j})\,g_t, \cdot\big\rangle$ over the rounds $t \le T$ such that $i_t=i$ or $i_t=j$.
Now, by the \MTFTRL analysis, assuming stochastic activations we have
\[
R^{(i,j)}_T\big(U^{(i,j)}\big) \tildeO \sqrt{(q_i w_{ij}^2 + q_j w_{ji}^2)(1 + \Delta_{ij}^2)}\sqrt{T}\,,
\]
where $\Delta_{ij}^2 = \|U_{i:} - U_{j:}\|_2^2$.
Hence, if we assume uniform activations (i.e., $q_i = 1/N$ for all $i$) in the single-task case (i.e., $\Delta_{ij}^2 = 0$ for all $(i, j) \in E$) we obtain overall
\[
R_T(U) \tildeO \sum_{(i, j) \in E} \sqrt{w_{ij}^2 + w_{ji}^2}\sqrt{T/N}\,,
\]
which can be further simplified into
\begin{equation}
\sum_{(i, j) \in E} \sqrt{w_{ij}^2 + w_{ji}^2}\sqrt{T/N} \le \sum_{(i, j) \in E} \big(w_{ij} + w_{ji}\big)\sqrt{T/N} = \sqrt{NT}\,.\label{eq:bad_regret}
\end{equation}
Contrary to \mtcool, this strategy thus fails to improve over the naive bound using independent updates (\texttt{iFTRL}).
Note that the inequality in \eqref{eq:bad_regret} is tight, while the equality holds for any $w_{ij}$.
Hence, \textbf{there is no choice of weights that allows achieving better performance than independent runs of \FTRL}.
This can be intuitively explained as this strategy runs \MTFTRL on small subsets (pairs), which makes learning slower as they are activated less often.
Instead, \mtcool leverages \MTFTRL on the largest possible set of nodes that can communicate with the central agent, i.e., its neighbourhood.


\section{Technical Proofs (Results from Section \ref{sec:DP})}

In this section, we gather the technical proofs of the results exposed in \Cref{sec:DP}.
First, we provide some intuition about the construction of \DOPE.
Then, we prove that \DOPE is $\epsilon$-DP (\Cref{th:privacy}) before bounding its regret (\Cref{th:reg-DP_anyG}).

\subsection{Intuition on \texorpdfstring{\DOPE}{dope} and Explicitation of \texorpdfstring{\texttt{DPMT-FTRL}}{DPMT-FTRL}}

In this section, we provide more intuition about the construction of \DOPE.
In particular, we explain why \DOPE invokes \DPMTFTRL, instead of any multitask algorithm working on a clique like \mtcool.
\DPMTFTRL is a variant of \MTFTRL that works with a different kind of feedback. 
Instead of observing the gradient of the loss at the prediction, \DPMTFTRL observes sanitized versions of cumulative sums that are needed to compute the expert predictions $X_t^{(\xi)}$ and their weights $p_t^{(\xi)}$.
To  explain this difference, we dissect \mtcool and highlight which changes are necessary to make it DP.

Extending the notation of \Cref{alg:mt-ftrl,alg:mt-ftrl-DP}, we denote by $X_t^{(j, \xi)}$ and $p_t^{(j, \xi)}$ the experts and probabilities maintained by agent $j$.
Let $\mathcal{U}_j \coloneqq \big\{U \in \mathbb{R}^{N_j \times d} \colon U_{i:} \in \mathcal{X} \text{ for all }i \le N_j\big\}$ and $\Delta_j$ the simplex in $\mathbb{R}^{N_j}$.
In \mtcool, we have
\begin{align}
X_t^{(j, \xi)} &= \argmin_{\substack{X \in \mathcal{U}_j\\\sigma^2(X) \le \xi}} ~~ \eta_{t-1}^{(\xi)} \sum_{s \le t-1} \big\langle w_{i_sj}\,g_s, X_{i_s:}\big\rangle \, \Ind{i_s \in \scN_j} + \frac{1}{2}\|X\|_{A_j}^2\nonumber\\
&= \argmin_{\substack{X \in \mathcal{U}_j\\\sigma^2(X) \le \xi}} ~~ \eta_{t-1}^{(\xi)} \sum_{s \le t-1} \, \sum_{i \in \scN_j} \big\langle w_{ij}\, g_s, X_{i:}\big\rangle\,\Ind{i_s=i} + \frac{1}{2}\|X\|_{A_j}^2\nonumber\\
&= \argmin_{\substack{X \in \mathcal{U}_j\\\sigma^2(X) \le \xi}} ~~ \eta_{t-1}^{(\xi)} \sum_{i \in \scN_j} \left\langle w_{ij}\underbrace{\sum_{s \le t-1} g_s \, \Ind{i_s=i}}_{\coloneqq \gamma_t^{(i)}}, X_{i:}\right\rangle + \frac{1}{2}\|X\|_{A_j}^2\,.\label{eq:expert}
\end{align}
Hence, agent $j$ must receive private versions of the $\gamma_t^{(i)}$ from each of its neighbor $i \in \scN_j$, which are precisely the $\tilde{\gamma}_t^{(i)}$ in \Cref{alg:DOPE_learning}.
Note that the $\gamma_t^{(i)}$ are sums of gradients, that can be sanitized efficiently using \treeBasedAgg \citep{chan2011private}.
But the $\gamma_t^{(i)}$ are not the only gradient information agent $j$ needs from its neighbors.
Indeed, we have
\begin{align}
p_t^{(j, :)} &= \argmin_{p \in \Delta_j} ~~ \frac{\sqrt{\ln N_j}}{\beta_{t-1}} \sum_{s \le t-1} \langle \text{loss}_s, p\rangle \, \Ind{i_s \in \scN_j} + \sum_{\xi \in \Xi_j} p^{(\xi)} \ln p^{(\xi)}\nonumber\\
&= \argmin_{p \in \Delta_j} ~~ \frac{\sqrt{\ln N_j}}{\beta_{t-1}} \sum_{s \le t-1} \sum_{\xi \in \Xi_j} p^{(\xi)} \Big\langle w_{i_sj}\,g_s, \big[X_s^{(j, \xi)}\big]_{i_s:}\Big\rangle \Ind{i_s \in \scN_j} + \sum_{\xi \in \Xi_j} p^{(\xi)} \ln p^{(\xi)}\nonumber\\
&= \argmin_{p \in \Delta_j} ~~ \frac{\sqrt{\ln N_j}}{\beta_{t-1}} \sum_{\xi \in \Xi_j} p^{(\xi)} \sum_{i \in \scN_j} \underbrace{w_{ij}\sum_{s \le t-1} \Big\langle g_s, \big[X_s^{(j, \xi)}\big]_{i:}\Big\rangle \Ind{i_s = i}}_{\coloneqq s_{t, i}^{(j, \xi)}} + \sum_{\xi \in \Xi_j} p^{(\xi)} \ln p^{(\xi)}\,.\label{eq:proba}
\end{align}
Hence, agent $j$ also needs to receive private versions of the $s_{t, i}^{(j, \xi)}$, which are the $\tilde{s}_{t, i}^{(j, \xi)}$ in \Cref{alg:DOPE_learning}.
Note that, again, the $s_{t, i}^{(j, \xi)}$ are sums, so that they can be sanitized easily using \treeBasedAgg.

The tree aggregations needed to sanitize the $\gamma_t^{(i)}$ and the $s_{t, i}^{(j, \xi)}$ are taken care of by instructions contained in \Cref{alg:DOPE_learning}, while the modifications of \MTFTRL needed to take private cumulative sums as inputs are outlined in \Cref{alg:mt-ftrl-DP}. 
In the latter pseudo-code, we drop the superscripts $^{(j)}$ since we consider \DPMTFTRL on a clique only. 



\begin{algorithm}[t]
\caption{~\texttt{DPMT-FTRL} (on linear losses)}\label{alg:mt-ftrl-DP}
\vspace{0.05cm}

\Req{Number of agents $N$, learning rates $\beta_{t-1}$}
\vspace{0.05cm}

\Init{$A = (1+N) I_N - \bm{1}_N\bm{1}_N^\top$,~~$\Xi = \{1/N,\,2/N, \ldots, 1\}$,~~$p_1^{(\xi)} = \frac{1}{N}$~$\forall\,\xi \in \Xi $, $\hatGammaIn_0 = 0_{\mathbb{R}^{N\times d}}$, $\tilde{s}_{t}^{(\xi)}= 0_{\mathbb{R}^{N}}$~$\forall\,\xi \in \Xi$}
\vspace{0.05cm}

\For{$t = 1, 2, \ldots$}{\vspace{0.05cm}

\For{$\xi \in \Xi$}{\vspace{0.05cm}
{\tcp{\small \textcolor{blue}{Set learning rate assuming $\sigma^2 = \xi$}}}
\vspace{0.05cm}

$\eta^{(\xi)}_{t-1} = \frac{N}{\beta_{t-1}}\sqrt{1+\xi(N-1)}$\vspace{0.15cm}

\tcp{\small \textcolor{blue}{FTRL with Mahalanobis regularizer}}\vspace{0.05cm}

$\displaystyle X_{t}^{(\xi)} = \argmin_{\substack{X \in \mathcal{U}\\\sigma^2(X) \le \xi}} ~ \eta^{(\xi)}_{t-1} \left\langle \hatGammaIn_{t-1} , X\right\rangle + \frac{1}{2} \|X\|_A^2$
\vspace{0.15cm}
}
\tcp{Predict and receive feedback}
\vspace{0.05cm}

Predict $Y_t  = \sum_{\xi \in\,\Xi} ~ p_t^{(\xi)}\,X_{t}^{(\xi)}$
\vspace{0.15cm}

Incur loss $\langle g_t, [Y_t]_{i_t:} \rangle$ and receive\vspace{0.05cm}

\qquad $\tilde{\gamma}_t^{(i_t)}$ the sanitized version of $\gamma_t^{(i_t)} = \sum_{s\le t} g_s\,\Ind{i_s = i_t}$
\vspace{0.07cm}

\qquad $\tilde{s}_{t, i_t}^{(\xi)}$ the sanitized version of $s_{t, i_t}^{(\xi)} = \sum_{s \le t-1} \big\langle g_s, \big[X_s^{(\xi)}\big]_{i_t:}\big\rangle\,\Ind{i_s = i_t}$, for all $\xi \in \Xi$
\vspace{0.15cm}
 
\tcp{Update $\hatGammaIn_t$ and $\tilde{s}_t^{(\xi)}$}\vspace{0.05cm}
$[\hatGammaIn_t ]_{i_t:} = \tilde{\gamma}_t^{(i_t)} \text{ and }[\hatGammaIn_t ]_{i:} = [\hatGammaIn_{t-1} ]_{i:}~\,\forall i \neq i_t$\vspace{0.15cm}

\For{$\xi \in \Xi$}{\vspace{0.05cm}
$\tilde{s}_{t,i_t}^{(\xi)} = \tilde{s}_{t,i_t}^{(\xi)} \text{ and } \tilde{s}_{t,i}^{(\xi)}=\tilde{s}_{t-1,i}^{(\xi)}~\,\forall i \neq i_t$\vspace{0.15cm}
}


\tcp{Update $p_t$ based on the experts losses}
\vspace{0.05cm}

\For{$\xi \in \Xi$}{

$\displaystyle p_{t+1}^{(\xi)} = \frac{\exp\left(-\frac{\sqrt{\ln N}}{\beta_t}\sum_{i=1}^N \tilde{s}_{t,i}^{(\xi)}\right)}{\sum_k\,\exp\left(-\frac{\sqrt{\ln N}}{\beta_t} \sum_{i=1}^N  \tilde{s}_{t,i}^{(k)} \right)}$
}
}
\end{algorithm}

\paragraph{Information flow.}
The above discussion allows understanding which information needs to be exchanged between neighbors.
In particular, the messages exchanged between agents, even if sanitized, carry information about the gradient sequence.
Consequently,  analyzing the content of these messages is crucial for the analysis in terms of DP of \DOPE. 
Overall, the message $m_t^{(i \shortrightarrow j)}$ sent by agent $i$ to agent $j$ at iteration $t$ of \DOPE is
\begin{equation}
m_t^{(i \shortrightarrow j)}= \begin{cases}
p_t^{(i, :)}\,,\, \Big\{\big[X_{t}^{(i, \xi)}\big]_{j:} \colon \xi \in \Xi_i\Big\} & \text { if } i \in \mathcal{N}_{i_t} \setminus \{i_t\} \text{ and } j=i_t \text{ (fetch step)}\\[0.2cm]
w_{ij}\,\tilde{\gamma}_t^{(i)}\,,\,\left\{\tilde{s}_{t, i}^{(j, \xi)} \colon \xi \in \Xi_j\right\} & \text { if } i=i_t \text{ and } j \in \scN_{i_t} \text{ (send step)}\\[0.2cm]
\,\emptyset &\text { otherwise}\\[0.1cm]
\end{cases}\label{eq:message}
\end{equation}
Note that sending $p_t^{(i, :)}$ and $\Big\{\big[X_{t}^{(i, \xi)}\big]_{j:} \colon \xi \in \Xi_i\Big\}$ is actually equivalent (information and privacy-wise) to sending $\big[Y^{(i)}_t\big]_{i_t:}$ as written in \Cref{alg:DOPE_learning}, since agent $i_t$ can compute $\big[Y_t^{(i)}\big]_{i_t:} = \sum_{\xi \in \Xi_i} p_t^{(i, \xi)} \big[X_t^{(i, \xi)}\big]_{i_t:}$.
In addition, having access to the individual experts $\big[X_{t}^{(i, \xi)}\big]_{j:}$ is necessary for $i_t$ to compute the $\tilde{s}_{t, i_t}^{(i, \xi)}$.
In what follows we denote $m^{(i)}_t = \big\{m_t^{(i \shortrightarrow j)} \colon j \in \scN_i\big\}$ the batch of messages sent by agent $i$ to other agents at time step $t$.

\subsection{Proof of Theorem \ref{th:privacy}}

\thmprivacy*

\begin{proof}
Recall from \Cref{def:loss-level-priv} that we should check that for any $i \in [N]$ and set of message sequences $\mathcal{M}$ we have
\[
\frac{\sP\big(m^{(i)}_{1}, \ldots, m^{(i)}_{T} \in \mathcal{M} \mid i_1, g_1, \ldots, i_T, g_T\big)} {\sP\big(m^{(i)}_{1}, \ldots, m^{(i)}_{T}  \in \mathcal{M} \mid i_1, g'_1, \ldots, i_T, g'_T\big)} \leq e^{\epsilon}\,.
\]
We will show that the above equation holds by focusing on each possible component of message $m_t^{(i)}$, see \eqref{eq:message}.
Let $\tau$ be the round where sequences of gradients $(g_t)_{t \in [T]}$ and $(g_t')_{t \in [T]}$ differ, we start by identifying which messages are impacted by the change of $g_\tau$ into $g'_\tau$:
\begin{itemize}[topsep=0pt]
\item By definition, $\tilde{\gamma}_t^{(i)}$ is impacted iff $i = i_\tau$.
\item Recalling \eqref{eq:expert}, $X_t^{(i, \xi)}$ is impacted iff $i \in \scN_{i_\tau}$.
\item By definition, $\tilde{s}_{t, i}^{(j, \xi)}$ is impacted iff $\sum_t g_t\,\Ind{i_t = i}$ or the $X_t^{(j, \xi)}$ are impacted, i.e., iff $i=i_t$ or $j \in \scN_{i_\tau}$.
\item Recalling \eqref{eq:proba}, $p_t^{(i, :)}$ is impacted iff any $\tilde{s}_{t, j}^{(i, \xi)}$ for $j \in \scN_i$ is impacted, i.e., iff $i \in \scN_{i_\tau}$.
\end{itemize}

We now quantify the ratios of probabilities that the above elements of messages belong to $\mathcal{M}$, given the different sequences of gradients.
Note that the latter elements may not belong to the same space (e.g., $p_t^{(i, :)} \in \Delta_i$ while $\tilde{\gamma}_t^{(i)} \in \mathbb{R}^d$), such that in the following we use $\mathcal{M}$ as a generic set that adapts to the type of element considered.
We highlight that for any $i, j$ such that the message element is not impacted, the probability ratio is equal to $1$, such that in particular it is bounded by the terms exhibited at \Cref{eq:ratio_proba_gamma,eq:ratio_proba_X,eq:ratio_proba_s,eq:ratio_proba_p}.

Recall that variables $\tilde{\gamma}_t^{(i)}$ are generated by a tree aggregation with at most $T$ summands, that have a maximal $L_1$ norm of $\sqrt{d} \max_{i \in \scN_j} w_{ij}\leq \sqrt{d}$, and a noise following a Laplacian distribution with parameter $\sqrt{d}\frac{\ln T}{\epsilon'}$.
Hence by \citet[Theorem~3.5]{chan2011private} \ifthenelse{\boolean{long}}{and the decomposition introduced in  \citet[Equation ~(11)]{smith2017federated}, allowing to deal with the dependence on a given gradient at some round of subsequent gradients, }{}for any $i \in [N]$ we have
\begin{equation}
\frac{\sP\big(\tilde{\gamma}^{(i)}_{1}, \ldots, \tilde{\gamma}^{(i)}_{T} \in \mathcal{M} \mid i_1, g_1, \ldots, i_T, g_T\big)} {\sP\big(\tilde{\gamma}^{(i)}_{1}, \ldots,\tilde{\gamma}^{(i)}_{T} \in \mathcal{M} \mid i_1, g'_1, \ldots, i_T, g'_T\big)} \leq \exp{(\epsilon')}\,.\label{eq:ratio_proba_gamma}
\end{equation}
We remind that the noise completion introduced by \cite{agarwal2017price} does not harm the privacy, as it can be considered as post-processing.
%
\medskip

Next, let $X_t^{(i, :)} = \big\{X_{t}^{(i, \xi)} \colon \xi \in \Xi_i\big\}$.
By the construction of the $X_t^{(i, \xi)}$, which are deterministic functions of the $\gamma_t^{(j)}$ for $j \in \scN_i$, see \eqref{eq:expert}, we have for any $i \in [N]$
\begin{equation}
\frac{\sP\big(X^{(i, :)}_{1}, \ldots, X^{(i, :)}_{T} \in \mathcal{M} \mid i_1, g_1, \ldots, i_T, g_T\big)}{\sP\big(X^{(i, :)}_{1}, \ldots, X^{(i, :)}_{T} \in \mathcal{M} \mid i_1, g'_1, \ldots, i_T, g'_T\big)} \leq \exp{(\epsilon')}\,.\label{eq:ratio_proba_X}
\end{equation}
\medskip

We now focus on the $\tilde{s}_{t, i}^{(j, \xi)}$, and denote $\tilde{s}_{t, i}^{(j, :)} = \big\{\tilde{s}_{t, i}^{(j, \xi)} \colon \xi \in \Xi_j\big\}$.
For any $i, j$ we have
\begin{align}
&\frac{\sP\Big(\tilde{s}_{1, i}^{(j, :)}, \ldots, \tilde{s}_{T, i}^{(j, :)} \in \mathcal{M} \mid i_1, g_1, \ldots, i_T, g_T\Big)} {\sP\Big(\tilde{s}_{1, i}^{(j, :)}, \ldots,\tilde{s}_{T, i}^{(j, :)} \in \mathcal{M} \mid i_1, g'_1, \ldots, i_T, g'_T\Big)}\nonumber\\
&= \bigintsss_{\mathcal{X}}{\frac{\sP\Big(\tilde{s}_{1, i}^{(j, :)}, \ldots, \tilde{s}_{T, i}^{(j, :)} \in \mathcal{M} \mid i_1, g_1,  \ldots, i_T, g_T, \big(X^{(j, :)}_{s}\big)_{s\in [T]} \in \partial \mu \Big)} {\sP\Big(\tilde{s}_{1, i}^{(j, :)}, \ldots, \tilde{s}_{T, i}^{(j, :)} \in \mathcal{M} \mid i_1, g'_1, \ldots, i_T, g'_T, \big(X^{(j, :)}_{s}\big)_{s\in [T]} \in \partial \mu \Big)}  \frac{\sP\Big(\big(X^{(j, :)}_{s}\big)_{s\in [T]} \in \partial \mu \mid i_1, g_1, \ldots, i_T, g_T\Big)} {\sP\Big(\big(X^{(j, :)}_{s}\big)_{s\in [T]} \in \partial \mu  \mid i_1, g'_1, \ldots, i_T, g'_T\Big)} \partial \mu} \nonumber\\
&\le {\prod}_{\xi \in \Xi_j}{ \bigintsss_{\mathcal{X}'}{\frac{\sP\Big(\tilde{s}_{1, i}^{(j, \xi)}, \ldots, \tilde{s}_{T, i}^{(j, \xi)} \in \mathcal{M} \mid i_1, g_1,  \ldots, i_T, g_T, \big(X^{(j, \xi)}_{s}\big)_{s\in [T]} \in \partial \mu' \Big)} {\sP\Big(\tilde{s}_{1, i}^{(j, \xi)}, \ldots, \tilde{s}_{T, i}^{(j, \xi)} \in \mathcal{M} \mid i_1, g'_1, \ldots, i_T, g'_T, \big(X^{(j, \xi)}_{s}\big)_{s\in [T]} \in \partial \mu' \Big)} \times \exp(\epsilon')\, \partial \mu'}}\label{eq:split_indep}\\
&\le \exp\big((N_j+1)\epsilon'\big)\label{eq:ratio_proba_s}\\
&\le \exp\big(2N_j\epsilon'\big)\,,\nonumber
\end{align}
where \eqref{eq:split_indep} derives from \eqref{eq:ratio_proba_X}, and \eqref{eq:ratio_proba_s} from \citet[Theorem~3.5]{chan2011private} and the decomposition introduced in  \citet[Equation ~(11)]{smith2017federated}, since each $\tilde{s}_{t, i}^{(j, \xi)}$ is generated by a tree aggregation with at most $T$ inner products between the predictions of the experts and the gradients (which in particular have a $L_1$ norm bounded by $\max_{j  \in \mathcal{N}_i}w_{ij}\le 1 $), and a noise following a Laplacian distribution with parameter $\ln T/ \epsilon'$. $\mathcal X$ denotes the space of sequences of real matrices of dimension $N_j \times N_j$ $\mathcal X'$ denotes the space of sequences of vectors of dimension $N_j$.
\medskip

Finally, noticing that $p_t^{(i, :)}$ is just a post-processing of the $\tilde{s}_{t, j}^{(i, \xi)}$ for $j \in \scN_i$ and $\xi \in \Xi_i$, see \eqref{eq:proba}, we have for any $i \in [N]$
\begin{equation}
\frac{\sP\big(p_1^{(i, :)}, \ldots, p_T^{(i, :)} \in \mathcal{M} \mid i_1, g_1, \ldots, i_T, g_T\big)}{\sP\big(p_1^{(i, :)}, \ldots, p_T^{(i, :)} \in \mathcal{M} \mid i_1, g'_1, \ldots, i_T, g'_T\big)} \leq \exp\big(N_i(N_i+1)\epsilon'\big) \le \exp\big(2N_i^2\epsilon'\big)\,.\label{eq:ratio_proba_p}
\end{equation}
\medskip

Overall, $\big(m_t^{(i)}\big)_{t \in [T]}$ is a post-processing of: $\big(\tilde{\gamma}_t^{(i)}\big)_{t \in [T]}$, $\big(X_t^{(i, :)}\big)_{t \in [T]}$, $\big(\tilde{s}_{t,i}^{(j, :)}\big)_{t \in [T], j \in \scN_i}$, $\big(p_t^{(i, :)}\big)_{t \in [T]}$, such that for all $i \in [N]$ we have
\[
\frac{\sP\big(m_1^{(i)}, \ldots, m_T^{(i)} \in \mathcal{M} \mid i_1, g_1, \ldots, i_T, g_T\big)}{\sP\big(m_1^{(i)}, \ldots, m_T^{(i)} \in \mathcal{M} \mid i_1, g'_1, \ldots, i_T, g'_T\big)} \leq \exp\left(\epsilon' + \epsilon' + 2N_i^2\epsilon' + 2N_i^2\epsilon'\right) \le \exp\big(6N_\textnormal{max}^2\epsilon'\big) = \exp(\epsilon)\,,
\]
where we have used that $\epsilon' = \epsilon/(6N_\textnormal{max}^2)$.

\end{proof}

\subsection{Proof of Theorem \ref{th:reg-DP_anyG}}

We first recall two results, which are instrumental to our analysis.

\begin{lemma}[An alternative version of Theorem 3.4 of \cite{agarwal2017price}]
\label{thm:mainftrlthm2}
For any noise distribution $\mathcal{D}$, regularizers $\psi_t(x)= \frac{\psi(x)}{\eta_{t-1}}$ with non decreasing $(\eta_t)_{t \ge 1}$ and $\psi$ that is $\mu$-strongly convex with respect to $\|\cdot\|$,  decision set $\mathcal{X}$, and loss vectors $g_1, \ldots g_T$, the regret of \FTRL on sums of gradients sanitized through \treeBasedAgg set with distribution $\mathcal{D}$ is bounded by
\[
\E[\textnormal{Regret}_T(u)] \leq \frac{\psi(u)- \min_{x \in \mathcal{X}} \psi(x)}{\eta_{T}} +\frac{1}{2\mu}\sum_{t = 1}^{T} \eta_{t-1} \|g_t\|_*^2 + D_{\mathcal{D}'}\,,
\]
where $D_{\mathcal{D}'} = \E_{Z \sim \mathcal{D}'} \big[\max_{x \in \mathcal{X}} \langle Z, x \rangle - \min_{x \in\mathcal{X}} \langle Z, x \rangle \big]$, and $\mathcal{D}'$ is the probability distribution of the sum of $\ln T$ variables with distribution $\mathcal{D}$.  
\end{lemma}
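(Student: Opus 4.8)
The plan is to recognize the algorithm as ordinary \FTRL\ in which, at round $t$, the linear term is driven by the \emph{sanitized} cumulative gradient $\tilde{G}_{t-1} = \sum_{s \le t-1} g_s + Z_{t-1}$ rather than by the exact sum $G_{t-1} = \sum_{s\le t-1} g_s$. The first thing I would record is the structural property of \treeBasedAgg: because the aggregation tree has depth at most $\ln T$ and the noise-completion step forces identical marginals, each perturbation $Z_t$ is distributed as $\mathcal{D}'$, the law of a sum of $\ln T$ independent draws from $\mathcal{D}$. Hence $x_t = \argmin_{x \in \mathcal{X}} \langle \tilde{G}_{t-1}, x\rangle + \psi(x)/\eta_{t-1}$, and the whole noise dependence enters only through the additive vectors $Z_{t-1}$. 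This reduction mirrors the argument of \citet{agarwal2017price}.

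The key move is to absorb the noise into the regularizer instead of into the losses. I would define the effective, time-varying regularizer $\varphi_t(x) = \psi(x)/\eta_{t-1} + \langle Z_{t-1}, x\rangle$ (with $Z_0 = 0$), so that $x_t = \argmin_{x\in\mathcal{X}} \langle G_{t-1}, x\rangle + \varphi_t(x)$ is exactly \FTRL\ on the \emph{true} gradients $g_1,\dots,g_T$ with regularizers $\varphi_t$; the stated monotonicity of $(\eta_t)$ keeps the sequence $(\varphi_t)$ monotone in the direction required by the \FTRL\ lemma. I would then apply the standard time-varying-regularizer \FTRL\ regret decomposition underlying \citet[Corollary~7.9]{orabona2019modern}. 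The crucial gain from this framing is that the noise enters $\varphi_t$ only as a \emph{linear} term; since adding a linear function does not change the modulus of strong convexity, the curvature/stability part of the decomposition stays exactly $\frac{1}{2\mu}\sum_{t} \eta_{t-1}\|g_t\|_*^2$, with the true gradient norms and no noise pollution. Combined with the penalty $\frac{\psi(u)-\min_x \psi(x)}{\eta_T}$ coming from the $\psi$-part of $\varphi_{T+1}(u)$ and $\varphi_1$ (up to the usual boundary indexing), this already produces the first two terms of the claimed bound.

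It remains to collect the residual noise contributions, namely $\langle Z_T, u\rangle$ (from the linear part of $\varphi_{T+1}(u)$) together with the regularizer-shift sum $\sum_{t=1}^T \langle Z_{t-1} - Z_t, x_t\rangle$. I would rearrange these by Abel summation and take expectations, exploiting that all the $Z_t$ share the common law $\mathcal{D}'$: the very purpose of the identically-distributed-noise completion is that the per-round noise terms cancel in expectation up to a single boundary term, so that what survives is controlled by $\E_{Z \sim \mathcal{D}'}\big[\max_{x\in\mathcal{X}}\langle Z, x\rangle - \min_{x\in\mathcal{X}}\langle Z, x\rangle\big] = D_{\mathcal{D}'}$. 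I expect this last step to be the main obstacle: making the telescoping rigorous requires care because $x_t$ is correlated with $Z_{t-1}$ (hence with neighbouring tree nodes), so the cancellation of the cross terms must be argued through the identical marginals of the completed noise rather than through naive independence. Everything preceding this step is a routine instantiation of the \FTRL\ lemma.
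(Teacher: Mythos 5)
Your setup (sanitized prefix sums whose marginal law is $\mathcal{D}'$ thanks to noise completion, and absorbing the noise into the regularizer as a linear perturbation that leaves strong convexity untouched) matches the ingredients of the paper's proof, but the step you yourself flag as the main obstacle is a genuine gap, and the cancellation you propose there is not available. Writing $x_t = f_t(Z_{t-1})$ with $f_t$ deterministic (the gradients being fixed), the residual sum $\sum_{t}\E\big[\langle Z_{t-1}-Z_t, x_t\rangle\big]$ does not vanish: $\E\big[\langle Z_{t-1}, f_t(Z_{t-1})\rangle\big]$ is a genuine correlation term (the minimizer is anti-correlated with its own noise), whereas $\E\big[\langle Z_t, f_t(Z_{t-1})\rangle\big]$ is a different quantity governed by which tree nodes $Z_t$ shares with $Z_{t-1}$. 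Equality of the \emph{marginal} laws of $Z_{t-1}$ and $Z_t$ never allows you to swap one for the other inside an expectation taken against a function of $Z_{t-1}$; only equality of the \emph{joint} laws of $(Z_{t-1},x_t)$ and $(Z_t,x_t)$ would, and these differ. Bounding each residual term instead costs a per-round noise diameter, i.e., something of order $T\cdot D_{\mathcal{D}'}$ rather than the single $D_{\mathcal{D}'}$ in the statement.

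The paper resolves this by exchanging experiments \emph{before} any \FTRL\ analysis, see \eqref{eq:lem_for_agar}, which is the device of \citet{agarwal2017price}: since the losses are linear and the gradient sequence oblivious, the expected regret equals $\sum_t \langle g_t, \E[x_t]-u\rangle$, a function of the marginal laws of the iterates only; and by noise completion each sanitized prefix sum has the same law as $\sum_{s\le t} g_s + Z$ for a \emph{single} $Z\sim\mathcal{D}'$ drawn once and reused at every round. Hence $\E[\textnormal{Regret}_T(u)]$ equals the expected regret of \FTRL\ run with the fixed perturbed regularizer $\psi(\cdot)+\langle Z,\cdot\rangle$ (suitably scaled), in which the linear noise term is constant in time: the shift terms you were trying to telescope are identically zero, not merely zero in expectation, and the only noise contribution is the boundary term, bounded by $D_{\mathcal{D}'}$. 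Your argument becomes correct once you insert this single-noise reduction---which crucially uses linearity and obliviousness, hypotheses you never invoke---and only then apply your regularizer-absorption step.
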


\begin{proof}
This is an alternative version of Theorem 3.4 of \cite{agarwal2017price}, but with  non-constant learning rates. The proof relies on the following observation, already proved by \cite{agarwal2017price}.
Let  $(x_t)_{t \in [T]}$ denote the predictions of the instance of \FTRL considered in \Cref{thm:mainftrlthm2}.
Consider the alternative experiment in which 
\[
\begin{cases}
Z \textnormal{ is such that } Z \sim \sum_{k \in [\ln T]} Z_k \textnormal{ where } Z_k \sim \mathcal{D}, \\
\hat{x}_1 = x_1,\\
\hat x_{t+1} = \argmin_{X \in \mathcal{X}} ~ \big\langle \sum_{s=1}^t g_{s} + Z, x \big\rangle + \psi_t(x ),
\end{cases}
\]
Then we have
\begin{equation}
\E[\textnormal{Regret}_T(u)]= \E\big[\widehat{\textnormal{Regret}}_T(u)\big]\,,\label{eq:lem_for_agar}
\end{equation}
where $\widehat{\textnormal{Regret}}_T$ is regret of the strategy outputting $\hat x_1, \ldots \hat x_T$.
The complete proof of Equation \eqref{eq:lem_for_agar} can be found in \cite{agarwal2017price}, but we provide its  two main arguments.
First, since the losses are linear, subsequent gradients do not depend on $x_t$. Second, $\sum_{s=1}^t g_{s} + Z$ is distributed as the sanitized sum of gradients used in \FTRL.

Equation \eqref{eq:lem_for_agar} proves that the expected regret is equal to the expectation of $\widehat{\textnormal{Regret}}_T$, the regret of \FTRL with an alternative regularizer $\psi_Z(\cdot) = \psi(\cdot) + \langle Z, \cdot \rangle$. 
The addition of $\langle Z, \cdot \rangle$ does not change the strongly convex nature of the regularizer, since it is linear.
The usual analysis of \FTRL with $\psi_Z$ \citep[Chapter~7]{orabona2019modern} yields
\begin{align*}
\E\big[\widehat{\textnormal{Regret}}_T(u)\big] & \leq  \frac{\psi_Z(u)- \min_{x \in \mathcal{X}} \psi_Z(x)}{\eta_{T}} +\frac{1}{2\mu}\sum_{t = 1}^{T} \eta_{t-1} \|g_t\|_*^2\\
&\leq \frac{\psi(u)- \min_{x \in \mathcal{X}} \psi(x)}{\eta_{T}} +\frac{1}{2\mu}\sum_{t = 1}^{T} \eta_{t-1} \|g_t\|_*^2 + D_{\mathcal{D}'}\,.
\end{align*}
Using this jointly with Equation \eqref{eq:lem_for_agar} concludes the proof.
\end{proof}

%

\begin{lemma}\label{lem:alternative_2}
Let $(p_t)_{t \in[T]}$ be the sequence of probabilities maintained by the instance of \MTFTRL run by agent $j$ among \DOPE, see \Cref{alg:DOPE_learning}.
Consider the alternative experiment in which
\[
\begin{cases}
Z \textnormal{ is such that } Z \sim \sum_{i \in \mathcal{N}_j}\sum_{k \in [\ln T]} Z_k \textnormal{, where } Z_k \sim \mathcal{D}_j,\textnormal{ a Laplacian of dimension } N_j \textnormal{ and parameter } \frac{ \ln T}{\epsilon'}\\
\hat{p}_1 =p_1,\\
\hat{p}_{t+1} = \argmin_{p \in \Delta_j} \left\langle \sum_{s=1}^t \textnormal{loss}_{s} + Z, p \right\rangle + \sum_{\xi \in \Xi_j} p^{(\xi)} \ln p^{(\xi)}
\end{cases}
\]
Then for all $u \in \Delta_j$ we have
\[
\E[\textnormal{Regret}_T(u)]= \E\big[\widehat{\textnormal{Regret}}_T(u)\big]\,,
\]
where $\textnormal{Regret}_T$ and $\widehat{\textnormal{Regret}}_T$ are the regrets of the strategy outputting $p_1, \ldots p_T$ and $\hat p_1, \ldots \hat p_T$ respectively.
\end{lemma}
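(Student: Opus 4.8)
The plan is to replicate, for the \texttt{Hedge} instance, the coupling argument used in the proof of \Cref{thm:mainftrlthm2}. The sequence $(p_t)$ is generated by \FTRL with the entropic regularizer $\psi(p)=\sum_{\xi\in\Xi_j} p^{(\xi)}\ln p^{(\xi)}$ fed the \emph{sanitized} cumulative loss vectors $\big(\sum_{i\in\scN_j}\tilde{s}_{t-1,i}^{(j,\xi)}\big)_{\xi\in\Xi_j}$, whereas $(\hat{p}_t)$ is generated by the same \FTRL run on the \emph{true} cumulative loss vectors perturbed by a single fixed noise draw $Z$. The goal is to show these two randomized strategies incur the same regret in expectation against any $u\in\Delta_j$.

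First I would record the two structural facts that make the argument go through. (i) Because \DOPE is run on linear losses, the observed gradient $g_s$ does not depend on the prediction $x_s$; consequently the per-expert loss vector $\text{loss}_s$, whose $\xi$-th entry is $w_{i_sj}\langle g_s,[X_s^{(j,\xi)}]_{i_s:}\rangle$, is a function of $g_s$ and of the experts $X_s^{(j,\xi)}$ only, and the latter are driven by the sanitized cumulative \emph{gradients}, never by the probability sequence $p_1,\dots,p_s$. Hence the loss vectors $\{\text{loss}_s\}$ are independent of the realized path of probabilities. (ii) I would then expand both regrets by linearity of expectation into per-round contributions, $\E[\textnormal{Regret}_T(u)]=\sum_{t}\E[\langle \text{loss}_t, p_t-u\rangle]$ and likewise for $\widehat{\textnormal{Regret}}_T$, so that it suffices to match the two experiments round by round.

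The crux is the distributional matching at each fixed round $t$. The probability $p_t$ is a deterministic function of the sanitized prefix $\big(\sum_{i\in\scN_j}\tilde{s}_{t-1,i}^{(j,\xi)}\big)_{\xi\in\Xi_j}$. By the defining property of \treeBasedAgg, each of the $N_j$ tree-aggregation instances (one per neighbour $i\in\scN_j$) releases, at any round, the exact cumulative inner-product sum plus a noise term whose marginal law, in each expert coordinate, is that of a sum of $\ln T$ independent draws of $\mathcal{D}_1$, i.e.\ of $\mathcal{D}_j$ across the $\Xi_j$ coordinates; moreover distinct instances use mutually independent noise. Summing over the $N_j$ neighbours shows that, for this fixed $t$, the sanitized prefix is distributed as $\sum_{s\le t-1}\text{loss}_s + Z$ with $Z\sim\sum_{i\in\scN_j}\sum_{k\in[\ln T]}Z_k$, $Z_k\sim\mathcal{D}_j$, exactly the noise feeding $\hat{p}_t$. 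Combined with fact (i), which guarantees $\text{loss}_t$ is independent of this prefix noise, this yields $\E[\langle \text{loss}_t, p_t-u\rangle]=\E[\langle \text{loss}_t, \hat{p}_t-u\rangle]$ for each $t$; summing over $t$ gives the claim.

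The main obstacle is precisely that one cannot replace the entire noise \emph{process} of the tree aggregation by the single vector $Z$: the noises injected at different rounds are correlated through shared tree nodes, so the two experiments are genuinely different as stochastic processes. The equality only survives in expectation, and only because the round-$t$ term depends on the noise solely through $p_t$, hence through the single prefix at round $t-1$, whose marginal matches the one used for $\hat{p}_t$. This is exactly where linearity of the losses, ensuring $\text{loss}_t$ is not itself a function of the past predictions, is indispensable, mirroring the role it plays in \Cref{thm:mainftrlthm2}.
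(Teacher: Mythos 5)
Your proof is correct and takes essentially the same route as the paper's: both reduce the claim to the observation that, at each round $t$, the sanitized prefix released by the $N_j$ independent tree-aggregation instances is distributed exactly as the true cumulative loss vector plus the fixed noise $Z$, combined with the fact that linear losses make the loss vectors independent of the probability path. Your writeup is actually more explicit than the paper's terse three-line argument — in particular the per-round expectation decomposition and the caveat that only the marginal laws (not the joint law of the noise process) match — but the underlying idea is identical.
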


\begin{proof}
First observe that
the probabilities $p_t$ are the result of running \FTRL where, in place of gradient inputs, a vector whose $\xi$-th element is $\sum_{i \in \mathcal{N}_j} \tilde{s}_{t, i}^{(j, \xi)}$  is used.
Note that the $\tilde{s}_{t, i}^{(j, \xi)}$ come from instances of \treeBasedAgg set with distribution $\mathcal{D}_1$, so that the element $\xi$ of the vector $\sum_{s=1}^t \textnormal{loss}_{s} + Z$ is actually distributed exactly as $\tilde{s}_{t, i}^{(j, \xi)}$.
This eventually shows that $\hat{p_t}$ is distributed as $p_t$.
\end{proof}









We are now ready to prove \Cref{th:reg-DP_anyG}.
\medskip

\regretDP*

\begin{proof}
Let $R^\textnormal{clique-$j$}_T$ be the regret suffered by \MTFTRL run by agent $j$ on the linear losses $\langle w_{i_tj}\,g_t, \cdot\rangle$, with feedback equal to $w_{i_tj}\,\hatGammaOut_{t}^{(i_t)}$ for the sum of gradients incurred by $i_t$, and $\tilde{s}_{t, i_t}^{(j, \xi)}$ for the sums of dot products between experts $(j, \xi)$ and the sum of gradients incurred by $i_t$.
With the same steps used to prove \Cref{lem:cool-cn}, we can prove that 
the regret of \DOPE satisfies
\[
\forall\, U \in \mathbb{R}^{N \times d}, \quad R_T(U) \le \sum_{j=1}^N R^\textnormal{clique-$j$}_T\big(U^{(j)}\big)\,.
\]

By the same arguments as in the proof of \Cref{thm:mt-ftrl} we have
\begin{align*}
R^\textnormal{clique-$j$}_T\big(U^{(j)}\big) 
&\le \underbrace{\sum_{t \colon i_t \in \mathcal{N}_j} \left\langle w_{i_tj}\,g_t,  \big[Y_t^{(j)}\big]_{i_t:} - \big[X_t^{(j,\xi^*)}\big]_{i_t:}\right\rangle}_{\text{Regret of \texttt{Hedge}}} + \underbrace{\sum_{t \colon i_t \in \mathcal{N}_j} \left\langle w_{i_tj}\,g_t,  \big[X_t^{(j,\xi^*)}\big]_{i_t:} - U_{i_t:}\right\rangle}_{\text{Regret with choice $\xi^*$}}\,,
\end{align*}
where $\xi^* = \argmin_{\xi \in\,\Xi_j} \, \sum_{t \colon i_t \in \mathcal{N}_j} \Big\langle w_{i_tj}\,g_t,  \big[X_t^{(j,\xi)}\big]_{i_t:}\Big\rangle$.
We start by upper bounding the regret due to \texttt{Hedge}.
Let $\text{loss}_t \in \mathbb{R}^{N_j}$ storing the $w_{i_tj} \big\langle g_t, \big[X_t^{(j,\xi)}\big]_{i_t:}\big\rangle$ for $\xi \in \Xi_j$, and $e^* \in \mathbb{R}^{N_j}$ the one-hot vector with an entry of $1$ at expert $\xi^*$.
By the analysis of \texttt{Hedge} with regularizers $\psi_t(\bm{p}) = \frac{\beta\sqrt{1+\sum_{s\le t-1}\Ind{i_s \in \scN_j}}}{\sqrt{\ln N_j}} \sum_{k=1}^{N_j} p_k \ln(p_k)$, combined with \Cref{lem:alternative_2},
\begin{align}
\E\left[\sum_{t \colon i_t \in \mathcal{N}_j} \left\langle w_{i_tj}\,g_t,  \big[Y_t^{(j)}\big]_{i_t:} - \big[X_t^{(j,\xi^*)}\big]_{i_t:}\right\rangle \right] &
= \E \left[\sum_{t \colon i_t \in \mathcal{N}_j} \big\langle\,\text{loss}_t,  \bm{p}_t - e^*\big\rangle\nonumber \right]+  D_{\mathcal{D}_{1,j}'}\ \nonumber\\
&= 2 \max_{i \in \scN_j} w_{ij} \sqrt{\ln N_j \sum_{i \in \scN_j} T_i} +  D_{\mathcal{D}_{1,j}'}\,,\label{eq:bound_hedge_2}
\end{align}
where $D_{\mathcal{D}_{1,j}'} = \E_{Z \sim \mathcal{D}_{1,j}'} \big[\max_{x \in \Delta_j} \langle Z, x \rangle - \min_{x \in \Delta_j} \langle Z, x \rangle \big]$ and $\mathcal{D}_{1,j}'$ is the probability distribution of the sum of $N_j \ln T$ variables of distribution $\mathcal{D}_j$, which is a Laplacian of dimension $N_j$ and parameter $\frac{\ln T}{\epsilon'}$.
Hence we have
\begin{align}
D_{\mathcal{D}_{1,j}'} &\leq \E_{Z \sim \mathcal{D}_{1,j}'}  \left[\max_{p \in \Delta_j} \langle Z, p \rangle  - \min_{p \in \Delta_j}  \langle Z, p \rangle \right] \nonumber\\
&\leq 2 \max_{p \in \Delta_j} \|p\|_{1} \E_{Z \sim \mathcal{D}_{1,j}'} \|Z\|_{\infty}\nonumber\\
&\leq 2 \E_{Z \sim \mathcal{D}_{1,j}'} \|Z\|_{\infty}\nonumber\\
&\leq 2 N_j \ln T ~ \E_{Z \sim \mathcal{D}_j} \|Z\|_{\infty} \nonumber\\
&\leq 2 N_j^{3/2} \ln T ~ \E_{Z \sim \mathcal{D}_j} \|Z\|_{2} \label{eq:norm2inf}\\
%
%
&\le 2 N_j^{3/2} \ln T ~ \sqrt{\E_{Z_i \sim \mathcal{D}_1} \left[\sum_{i \in [N_j]} Z_i^2\right]} \nonumber\\
& \le 2 N_j^2 \ln T ~ \sqrt{\E_{Z \sim \mathcal{D}_1} \left[Z^2\right]} \nonumber\\
%
%
&\leq 2 N_j^2 \frac{\ln^2 T}{\epsilon'} \label{eq:variance_lap}\,,
\end{align}
where \eqref{eq:norm2inf} stems form the fact that $Z$ is a vector of dimension $N_j$ and \eqref{eq:variance_lap} stems from the formula of the variance of a Laplacian random variable. 
%
%

Now, we turn to the  regret with choice $\xi^*$.
Assume first that $\sigma_j^2 \le 1$.
Then, the regret with choice $\xi^*$ is in particular better than the regret with choice $\bar{\xi} \in \Xi_j$ such that $\bar{\xi}-\frac{1}{N_j} \le \sigma_j^2 \le \bar{\xi}$.
Recall that the sequence $X_t^{(\bar{\xi})}$ is generated by \FTRL with the sequence of regularizers $\frac{1}{2}\|\cdot\|_{A_j}^2/\eta_{t-1}^{(\bar{\xi})}$.
By the analysis of \FTRL and \Cref{thm:mainftrlthm2} applied to non-adaptive \MTFTRL, we retrieve
\begin{align}
\E\Bigg[\sum_{t \colon i_t \in \mathcal{N}_j} \Big\langle w_{i_tj}\,g_t,  \big[&X_t^{(j,\bar{\xi})}\big]_{i_t:} - U_{i_t:}\Big\rangle \Bigg]
\le 4\,\max_{i \in \scN_j} w_{ij}\, \sqrt{1 + \sigma_j^2(N_j-1)}\sqrt{\sum_{i \in \scN_j} T_i} + D_{\mathcal{D}_{d,j}'}\,,\label{eq:bound_ftrl_3}
\end{align}
where $D_{\mathcal{D}'_{d,j}} = \E_{Z \sim \mathcal{D}'_d} \big[\max_{x \in \mathcal{X}} \langle Z, x \rangle - \min_{x \in\mathcal{X}} \langle Z, x \rangle \big]$ and $\mathcal{D}_{d,j}'$ is the probability distribution of $\mathbb{R}^{N_j\times d}$ matrices whose rows are the sum of $\ln T$ variables of distribution $\mathcal{D}_{d}$, which is a Laplacian of dimension $d$ and parameter $\frac{\sqrt{d} \ln T}{\epsilon'}$. 
Now let us bound the additional term due to the privacy. We have
\begin{align}
D_{\mathcal{D}_{d,j}'} &= \E\bigg[\max_{X \in \mathcal{X}}\langle Z, X \rangle - \min_{X \in \mathcal{X}} \langle Z, X\rangle \bigg] 
\leq 2\E\left[\|Z\|_{A_j^{-1}} ~ \max_{X \in \mathcal{X}}\|X\|_{{A_j}}\right] \nonumber\\
&\le 2 \sqrt N_j  \sqrt{1 + \bar{\xi}(N_j-1)} ~ \E\|Z\|_{A_j^{-1}} \le 2 \sqrt{2 N_j}  \sqrt{1 + \sigma_j^2(N_j-1)} ~ \E\|Z\|_{A_j^{-1}}\,.\label{eq:D_d'}
\end{align}
Now, we have
\begin{align*}
\E\|Z\|_{A_j^{-1}} &\leq \sqrt{\E\|Z\|_{A_j^{-1}}^2}\\
&= \sqrt{\sum_{i,k \in [N_j]}[A_j^{-1}]_{ik} ~ \E\left[Z Z^\top\right]_{ik}}\\
&= \sqrt{\sum_{i \in [N_j]}[A_j^{-1}]_{ii} ~ \E\left[Z_{i:}^\top Z_{i:}\right]}\\
&\leq \sqrt{\sum_{i \in [N_j]}[A_j^{-1}]_{ii}} ~ \sqrt{\sum_{k=1}^d\E\left[Z_{1,k}^2 \right]}\\
&\leq \sqrt{\frac{2N_j}{1+N_j}} \sqrt{d} \sqrt{\textnormal {Var}\left[Z_{1,1}\right]}\\
&\leq \sqrt{2d} \, \sqrt{\ln T \left( \frac{\sqrt{d}  \ln T}{\epsilon'}\right)^2}\\
&\leq  2 d\, \frac{\ln^2 T}{\epsilon'}\,,
\end{align*}
where  the third inequality stems from $[{A_{j}}^{-1}]_{ii}= \frac{2}{N_j+1}$ (see for example computations in Appendix A.2 of \cite{cesa2022multitask}). This in turn implies
\begin{equation}\label{eq:diameter_1}
D_{\mathcal{D}_{d,j}'}\leq 4d\sqrt{2 N_j}  \sqrt{1 + \sigma_j^2(N_j-1)}\,\frac{\ln^2 T}{\epsilon'}\,.
\end{equation}
Assume now that $\sigma_j^2 \ge 1$.
Then, the regret with choice $\xi^*$ is in particular better than the regret with choice $1$.
The latter corresponds to independent learning \citep{cesa2022multitask} and an analysis similar to the one above shows that its regret is bounded by
\begin{align}
\E\Bigg[\sum_{t \colon i_t \in \mathcal{N}_j} \Big\langle w_{i_tj}\,g_t,  \big[X_t^{(j,\bar{\xi})}\big]_{i_t:} - U_{i_t:}\Big\rangle \Bigg] &\leq 
\max_{i \in \scN_j} w_{ij}\,\sqrt{N_j \sum_{i \in \scN_j}T_i} + D_{\mathcal{D}_{d,j}'}\nonumber \\ &
\le \max_{i \in \scN_j} w_{ij} \sqrt{1 + \sigma_j^2(N_j -1)}\sqrt{\sum_{i \in \scN_j}T_i} + D_{\mathcal{D}_{d,j}'}\,.\label{eq:bound_ftrl_4}
\end{align}
In this case, we have
\begin{equation}
D_{\mathcal{D}_{d,j}'}\leq 4d N_j\,\frac{\ln^2 T}{\epsilon'} \leq 4d \sqrt{2 N_j} \sqrt{1 + \sigma_j^2(N_j-1)} \, \frac{\ln^2 T}{\epsilon'} \,.\label{eq:diameter_2}
\end{equation}
Substituting \eqref{eq:variance_lap} into  \eqref{eq:bound_hedge_2} and \eqref{eq:diameter_1} into \eqref{eq:bound_ftrl_3} (or \eqref{eq:diameter_2} into \eqref{eq:bound_ftrl_4}, depending on the value of $\sigma_j^2$), we obtain
\begin{align*}
&\E[R_T(U)]\\
&\qquad\le \sum_{j=1}^N \left[ 6\max_{i \in \scN_j} \wij \left(\sqrt{1 + \sigma_j^2(N_j - 1)} + \ln N_j \right) \sqrt{ \sum_{i \in \scN_j} T_i}  + D_{\mathcal{D}_{1,j}'} + D_{\mathcal{D}_{d,j}'}\right]\\
&\qquad\le \sum_{j=1}^N\left[6 \max_{i \in \scN_j} \wij \left(\sqrt{1 + \sigma_j^2(N_j - 1)} + \ln N_j \right) \sqrt{ \sum_{i \in \scN_j} T_i} + 2 N_j^2 \frac{\ln^2 T}{\epsilon'} + 4d \sqrt{2 N_j} \sqrt{1 + \sigma_j^2(N_j-1)} \, \frac{\ln^2 T}{\epsilon'}\right]\\
&\qquad\tildeO \sum_{j=1}^N \max_{i \in \scN_j} \wij \sqrt{1 + \sigma_j^2(N_j - 1)} \sqrt{ \sum_{i \in \scN_j} T_i}  + \frac{d N^4_\textnormal{max}}{\epsilon}N \ln^2 T\,.
\end{align*}
\end{proof}


\section{Additional Results related to Section \ref{sec:expe}}
\label{sec:app_expe}

In this section, we provide more details about the algorithm used to run \Cref{sec:expe}'s experiments.
In \Cref{sec:expe}, we use Krichevsky-Trofimov's Algorithm (hereafter abbreviated KT) instead of \texttt{Hedge} in order to obtain a $\sigma$-adaptive algorithm.
Practically, this means that \Aclique is set as the algorithm described in \Cref{alg:mt-ftrl-kt}.
In the next theorem, we recall the arguments developed in \citet{cesa2022multitask} and \citet{orabona2019modern} to establish the regret bound of \Cref{alg:mt-ftrl-kt}.

\begin{algorithm}[t]
\caption{~\texttt{MT-FTRL} (with KT adaptive learning rate)}\label{alg:mt-ftrl-kt}
\Req{Learning rates $\beta_t$}
\vspace{0.05cm}

\Init{$A = (1+N) I_N - \bm{1}_N\bm{1}_N^\top$, $z_1 = 0$\\
}\vspace{0.05cm}

\For{$t = 1, 2, \ldots$}{
\vspace{0.15cm}

\tcp{\small \textcolor{blue}{Update the direction using FTRL with Mahalanobis regularizer}}\vspace{0.05cm}

$\displaystyle \widetilde{Y}_t = \argmin_{X : ~ \|X\|_A\leq 1} ~ \frac{\sqrt{N}}{\beta_{t-1}} \left\langle \sum_{s=1}^{t-1} G_s, X\right\rangle + \frac{1}{2} \|X\|_A^2$\vspace{0.15cm}

\tcp{\small \textcolor{blue}{Update the magnitude}}\vspace{0.05cm}

${\displaystyle z_{t} = -\frac{1}{t}\sum_{s=1}^{t-1}u_s \left(1 - \!\!\sum_{s=1}^{t-1} z_s u_s \right) }$\vspace{0.15cm}

\tcp{\small \textcolor{blue}{Predict and get feedback}}\vspace{0.05cm}

Predict with $Y_{t} = z_{t} \widetilde{Y}_{t}$
\vspace{0.1cm}

Pay $\ell_t\big([Y_t]_{i_t:}\big)$ and receives $g_t \in \partial\ell_t\big([Y_t]_{i_t:}\big)$
\vspace{0.05cm}

Set $u_{t} = \frac{\sqrt{N}}{\sqrt{2}L} \big\langle [\tilde{Y}_{t}]_{i_t:}, g_t  \big\rangle$\vspace{0.15cm}

}
\end{algorithm}

\begin{theorem}\label{th:basebound}
Let $G$ be any graph. The regret of \textnormal{\texttt{MT-FTRL}} with KT adaptive learning rate and with $\beta_{t-1}= \max_{i\in \scN_j} \wij \sqrt{1+\sum_{s \le t-1}\Ind{i_s \in \scN_j}}$ satisfies for all $U \in \mathcal{U}$
\[
R_T(U)= \mathcal{O}\left(\sum_{j \in \setN} \sqrt{1 + \sigma_j^2(N_j - 1)}\sqrt{\sum_{i \in \scN_j} T_i} \right)\,,
\]
where we neglect $\log N$ and $\log T$ factors.
\end{theorem}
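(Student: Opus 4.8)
The plan is to mirror the proof of \Cref{thm:any-G}, replacing the \texttt{Hedge}-plus-\FTRL\ decomposition of the per-clique regret by the parameter-free (coin-betting) analysis of the KT variant. First I would invoke \Cref{lem:cool-cn}, which applies verbatim since its proof only uses the linearization of the $\ell_t$ and the fetch/send bookkeeping (both unchanged by the choice of adaptive layer), to reduce the problem to $R_T(U) \le \sum_{j\in\setN} R^\textnormal{clique-$j$}_T\big(U^{(j)}\big)$, where $R^\textnormal{clique-$j$}_T$ is now the regret of \Cref{alg:mt-ftrl-kt} run on the virtual clique $\mathcal{N}_j$ with the scaled losses $\langle w_{i_tj}\,g_t,\cdot\rangle$. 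Since $\|w_{i_tj}\,g_t\|_2 \le \max_{i\in\mathcal{N}_j} w_{ij} =: L_j \le 1$ (using $\sum_{i\in\mathcal{N}_j}w_{ij}=1$), the effective Lipschitz constant of agent $j$'s instance is $L_j$, which is exactly the factor multiplying the learning rate $\beta_{t-1}=L_j\sqrt{1+\sum_{s\le t-1}\Ind{i_s\in\mathcal{N}_j}}$. Once the per-clique bound is of order $L_j\sqrt{1+\sigma_j^2(N_j-1)}\sqrt{\sum_{i\in\mathcal{N}_j}T_i}$, summing and bounding $L_j\le1$ gives the claim.

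The heart is the \emph{base bound} for \Cref{alg:mt-ftrl-kt} on a single clique. I would use the dimension-free decoupling of magnitude and direction (see \citet[Chapter~9]{orabona2019modern} and \citet{cesa2022multitask}): writing $U^{(j)} = \rho\,V$ with $\rho = \|U^{(j)}\|_{A_j}$ and $\|V\|_{A_j}=1$, and recalling that the prediction is $Y_t = z_t\,\widetilde{Y}_t$ with $\|\widetilde{Y}_t\|_{A_j}\le1$, the clique regret splits as
\[
R^\textnormal{clique-$j$}_T\big(U^{(j)}\big) \le \mathrm{Reg}^{\mathrm{1D}}_T(\rho) + \rho\,\mathrm{Reg}^{\mathrm{dir}}_T(V),
\]
where $\mathrm{Reg}^{\mathrm{dir}}_T$ is the regret of \FTRL\ on the unit $A_j$-ball with regularizer $\tfrac12\|\cdot\|_{A_j}^2$ and learning rate $\sqrt{N_j}/\beta_{t-1}$, and $\mathrm{Reg}^{\mathrm{1D}}_T$ is the regret of the KT coin-bettor producing $z_t$. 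For the direction term I would apply the textbook \FTRL\ bound (as in \citet[Corollary~7.9]{orabona2019modern}, used already for \Cref{thm:any-G}) together with $[A_j^{-1}]_{ii}=2/(N_j+1)$, so that $\|w_{i_tj}G_t\|_{A_j^{-1}}^2 \le 2L_j^2/(N_j+1)$, yielding $\mathrm{Reg}^{\mathrm{dir}}_T(V) = \mathcal{O}\big((L_j/\sqrt{N_j})\sqrt{\sum_{i\in\mathcal{N}_j}T_i}\big)$. For the magnitude term I would feed the KT bettor the normalized scalars $u_t=\tfrac{\sqrt{N_j}}{\sqrt2\,L_j}\langle[\widetilde{Y}_t]_{i_t:},g_t\rangle$, noting that $\|[\widetilde{Y}_t]_{i_t:}\|_2\le\sqrt{2/(N_j+1)}$ forces $|u_t|\le1$ so the KT guarantee $\mathcal{O}(\rho\sqrt{T'\log(1+\rho T')})$ applies; undoing the normalization multiplies this by $\sqrt2\,L_j/\sqrt{N_j}$.

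The two contributions are glued by the norm identity
\[
\|U^{(j)}\|_{A_j}^2 = (1+N_j)(N_j-1)\sigma_j^2 + N_j\,\|\bar U^{(j)}\|_2^2 \le 2N_j\big(1+\sigma_j^2(N_j-1)\big),
\]
where $\bar U^{(j)}=\tfrac1{N_j}\sum_{i\in\mathcal{N}_j}U_{i:}$ satisfies $\|\bar U^{(j)}\|_2\le1$ by convexity, and where I used the pairwise form of $\sigma_j^2$ via the identity $\sum_{i,i'}\|U_{i:}-U_{i':}\|_2^2 = 2N_j\sum_i\|U_{i:}-\bar U^{(j)}\|_2^2$. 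Substituting $\rho=\mathcal{O}(\sqrt{N_j}\sqrt{1+\sigma_j^2(N_j-1)})$, the crucial cancellation occurs: the $\sqrt{N_j}$ in $\rho$ is killed by the $1/\sqrt{N_j}$ present in $\mathrm{Reg}^{\mathrm{dir}}_T(V)$ and in the normalization factor of $\mathrm{Reg}^{\mathrm{1D}}_T(\rho)$, so that \emph{both} terms become $\mathcal{O}\big(L_j\sqrt{1+\sigma_j^2(N_j-1)}\sqrt{\sum_{i\in\mathcal{N}_j}T_i}\big)$ up to $\log N_j$ and $\log T$ factors. Summing over $j$ and using $L_j\le1$ delivers the stated bound.

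The main obstacle is precisely this gluing step: one must track the normalization constants so that (i) $|u_t|\le1$, making the KT analysis legitimate; (ii) the learning rate $\sqrt{N_j}/\beta_{t-1}$ in the direction \FTRL\ generates exactly the $1/\sqrt{N_j}$ that cancels the $\sqrt{N_j}$ in $\|U^{(j)}\|_{A_j}$; and (iii) the bettor's scale-dependent regret is expressed against the \emph{unknown} comparator scale $\rho=\|U^{(j)}\|_{A_j}$ rather than an a-priori bound, which is what lets KT adapt to $\sigma_j^2$ without knowing it. Everything else is routine and identical to the proof of \Cref{thm:any-G}: the adaptivity to $\sigma_j^2$ formerly supplied by the \texttt{Hedge} meta-layer over $\Xi_j$ is now supplied by the KT magnitude learner, at the same polylogarithmic price.
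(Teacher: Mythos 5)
Your proposal is correct and follows essentially the same route as the paper's proof: reduction via \Cref{lem:cool-cn}, the magnitude/direction decoupling of \citet[Chapter~9]{orabona2019modern} with KT learning the scale $\|U^{(j)}\|_{A_j}$ and \FTRL\ on the unit $A_j$-ball learning the direction, the bound $\|U^{(j)}\|_{A_j}^2 \le 2N_j\big(1+\sigma_j^2(N_j-1)\big)$, and the cancellation of the $\sqrt{N_j}$ factors before summing over $j$. The only cosmetic difference is that you derive the norm identity for $\|U^{(j)}\|_{A_j}$ explicitly, whereas the paper cites it from \citet{cesa2022multitask}; your tracking of the normalization constants (in particular $|u_t|\le 1$ via $[A_j^{-1}]_{ii}=2/(N_j+1)$) matches the paper's argument.
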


\begin{proof}
By \Cref{lem:cool-cn}, we have
\[
R_T(U) \le \sum_{j=1}^N R^\textnormal{clique-$j$}_T\big(U^{(j)}\big)\,,
\]
where $R^\textnormal{clique-$j$}_T$ is the regret suffered by \MTFTRL on the linear losses $\langle w_{i_tj}\,g_t, \cdot\rangle$ over the rounds $t \le T$ such that $i_t \in \mathcal{N}_j$.
We proceed by bounding all these terms individually.
Let us focus on $R^\textnormal{clique-$j$}_T$.
To bound this term, we build upon \citet[Theorem~9.9]{orabona2019modern}.
The parameter-independent one-dimensional algorithm $\mathcal{A}_{1d}$ is set as the Krichevsky-Trofimov (KT) algorithm, see \citet[Algorithm~9.2]{orabona2019modern}, and used to learn $ \|U^{(j)}\|_{A_j}$.
It produces the sequence $(z_t^{(j)})_{t \ge 1}$.
Instead, the algorithm $\mathcal{A}_{\mathcal B}$ is set as \texttt{MT-FTRL} on the ball of radius $1$ with respect to norm $\|\cdot\|_{A_{j}}$.
It produces the sequence $(\tilde{Y}_t^{(j)})_{t \ge 1}$.
The prediction at time step $t$ is $Y_t^{(j)} = z_t^{(j)} \tilde{Y}_t^{(j)}$.
We can decompose the regret of \Cref{alg:mt-ftrl-kt} as follows
\begin{align}
R^\textnormal{clique-$j$}_T(U^{(j)}) &\le \sum_{t=1}^T \sum_{i\in \mathcal{N}_j} \big\langle \wij g_t, [Y_t^{(j)}]_{i_t:} - U^{(j)}_{i_t:} \big\rangle \Ind{i_t =i}\nonumber\\
&= \sum_{t: ~i_t \in \scN_j} \big\langle \wij g_t, z_t^{(j)} [\tilde{Y}_t^{(j)}]_{i_t:} - U^{(j)}_{i_t:} \big\rangle\nonumber\\
&= \sum_{t: ~i_t \in \scN_j} \big\langle G^w_t, z_t^{(j)} \tilde{Y}_t^{(j)} - U^{(j)} \big\rangle\nonumber\\
&= \sum_{t: ~i_t \in \scN_j} \big\langle G^w_t, z_t^{(j)} \tilde{Y}_t^{(j)} - \|U^{(j)}\|_{A_{j}} \, \tilde{Y}_t^{(j)} \big\rangle + \sum_{t: ~i_t \in \scN_j} \big\langle G^w_t, \|U^{(j)}\|_{A_{j}} \, \tilde{Y}_t^{(j)} - U^{(j)} \big\rangle\nonumber\\
&= \frac{\max_{i\in \scN_j} \wij\sqrt{2}}{\sqrt{N_j}} ~ \sum_{t: ~i_t \in \scN_j} \frac{\sqrt{N_j}}{\max_{i\in \scN_j} \wij\sqrt{2}} \big\langle G^w_t, \tilde{Y}_t^{(j)}\big\rangle \big(z_t^{(j)} - \|U^{(j)}\|_{A_{j}}\big)\\
&\quad+ \|U^{(j)}\|_{A_{j}} \, \sum_{t: ~i_t \in \scN_j} \left\langle G^w_t, \tilde{Y}_t^{(j)} - \frac{U^{(j)}}{ \|U^{(j)}\|_{A_{j}}} \right\rangle\nonumber\\
&\le \frac{\max_{i\in \scN_j} \wij\sqrt{2}}{\sqrt{N_j}}~\,\text{Regret}_T^{\mathcal{A}_{1d}}\big(\|U^{(j)}\|_{A_{j}}\big)  + \|U^{(j)}\|_{A_{j}}~ \text{Regret}_T^{\mathcal{A}_{\mathcal B}}\left(\frac{U^{(j)}}{ \|U^{(j)}\|_{A_{j}}}\right)\,,\label{eq:decompo_regret}
\end{align}
where $G_t^w$ is the matrix containing only zeros except at row $i_t$, which is equal to $w_{i_tj}g_t$, and $\text{Regret}_T^{\mathcal{A}_{1d}}$ and $\text{Regret}_T^{\mathcal{A}_{\mathcal B}}$ denote upper bounds on the regrets (with linear losses) of algorithms $\mathcal{A}_{1d}$ and $\mathcal{A}_\mathcal{B}$ respectively.
Note that the last inequality holds as we do have
\[
\left|\frac{\sqrt{N_j}}{\max_{i\in \scN_j} \wij\sqrt{2}} \big\langle G^w_t, \tilde{Y}_t^{(j)} \big\rangle\right| \leq \frac{\sqrt{N_j}}{\max_{i\in \scN_j} \wij\sqrt{2}} \, \|G^w_t\|_{A_{j}^{-1}}\,\|\tilde{Y}_t^{(j)}\|_{A_{j}} = \frac{w_{i_tj}\,\sqrt{N_j}}{\max_{i\in \scN_j} \wij\sqrt{2}}\,\sqrt{[A_{j}^{-1}]_{i_ti_t}} \, \|g_t\| \le 1\,.
\]
Now, using \citet[Section~9.2.1]{orabona2019modern}, we have that there exists a universal constant $C_0$ such that
\begin{align}
\text{Regret}_T^{\mathcal{A}_{1d}}\big(\|U^{(j)}\|_{ {A_{j}}}\big) &\leq \| U^{(j)}\|_{{A_{j}}} \sqrt{4 \sum_{i \in \scN_j} T_i \ln\Big(1 + C_0 \|U^{(j)}\|_{ {A_{j}}} \sum_{i \in \scN_j} T_i\Big)} + 1\nonumber\\
&\leq \sqrt{ N_j\big(1+\sigma^2(N_j-1)\big)} \sqrt{4 \Big(\sum_{i \in \scN_j} T_i\Big) \ln\Big(1 +  C_0  N_j \sum_{i \in \scN_j} T_i\Big)} + 1\,,\label{eq:regret_1d}
\end{align}
where we used \citet[Equation~(21)]{cesa2022multitask} to compute $\|U^{(j)}\|_{ {A_{j}}}$.
%
%
%
On the other hand, \texttt{MT-FTRL} is an instance of \texttt{FTRL} with regularizer $X \mapsto (1/2)\|X\|^2_{A_{j}}$ and learning rates $\eta_{t-1} = 1/ \beta_{t-1}$.
Its regret on the unit ball with respect to $\|\cdot\|_{{A_{j}}}$ can thus be bounded (see e.g., \citealt[Corollary~7.9]{orabona2019modern}) by
\begin{align}
\text{Regret}_T^{\mathcal{A}_{\mathcal B}}\left(\frac{U^{(j)}}{\| U^{(j)}\|_{{A_{j}}}}\right) & \leq \frac{\left\|\frac{U^{(j)}}{\| U^{(j)}\|_{ {A_{j}}}}\right\|_{{A_{j}}}}{2\,\eta_{T-1}} + \frac{1}{2}\sum_{t: ~i_t \in \scN_j} \eta_{t-1} \big\|G^w_t\big\|_{A_{j}^{-1}}^2\nonumber\\
&\leq \frac{1}{2\eta_{T-1}} + \frac{1}{2} \sum_{t: ~i_t \in \scN_j} \eta_{t-1} [A_{j}^{-1}]_{i_ti_t} w_{i_tj}^2 \|g_t\|_2^2\nonumber\\
&\leq \frac{\max_{i\in \scN_j} \wij\,\sqrt N_j}{2} \sqrt{\sum_{i \in \scN_j} T_i}  + \frac{1}{\max_{i\in \scN_j} \wij} \sum_{t: ~i_t \in \scN_j} \frac{w_{i_tj}^2}{\sqrt {1+\sum_{s \le t-1} \Ind{i_s \in \scN_j}}}\label{eq:for_stoch_KT}\\
&\leq \sqrt N_j \max_{i\in \scN_j} \wij\frac{\sqrt{\sum_{i \in \scN_j} T_i}}{2} + \frac{2\,\max_{i\in \scN_j} \wij \, \sqrt N_j \sqrt{\sum_{i \in \scN_j} T_i}}{N_j}  \nonumber\\
&= 2\max_{i\in \scN_j} \wij \sqrt{\frac{\sum_{i \in \scN_j} T_i}{N_j}}\,,\label{eq:regret_ball}
\end{align}
where the third inequality comes from $[{A_{j}}^{-1}]_{ii}= \frac{2}{N_j+1}$ (see for example computations in Appendix A.2 of \cite{cesa2022multitask}).
Substituting \eqref{eq:regret_1d} and \eqref{eq:regret_ball} into \eqref{eq:decompo_regret}, we obtain
\begin{align*}
&R_T(U^{(j)})\\
&~~\le \max_{i\in \scN_j} \wij \sqrt{1+\sigma_j^2(N_j-1)} \sqrt{\sum_{i \in \scN_j} T_i}\left(2 + \sqrt{8\ln\Big(1 +  C_0 N_j {\sum_{i \in \scN_j} T_i}\Big)}\right) + \max_{i\in \scN_j} \wij \sqrt{1+\sigma_j^2(N_j-1)} \sqrt{\sum_{i \in \scN_j} T_i}\\
&~~\le \max_{i\in \scN_j} \wij \sqrt{1+\sigma_j^2(N_j-1)} \sqrt{\sum_{i \in \scN_j} T_i}\left(3 + \sqrt{8\ln\Big(1 +  C_0  N_j \sum_{i \in \scN_j} T_i\Big)}\right)\,.
\end{align*}
\end{proof}
\medskip

\begin{remark}
Note that in the stochastic case, \Cref{eq:for_stoch_KT} shows that we could use $\beta_{t-1} = \sqrt{\sum_{i \in \scN_j}\frac{q_i}{Q_j}\,w_{ij}^2}$ $\sqrt{1+\sum_{s \le t-1}\Ind{i_s \in \scN_j}}$   for each agent $j$as in \Cref{thm:sto}, therefore reducing the expectation of the second term of the regret of \Cref{eq:decompo_regret} to a $\tilde{\mathcal{O}}\left(\sum_{j=1}^N \sqrt{\sum_{i\in \mathcal{N}_j}q_i\,w_{ij}^2} \sqrt{1 + \sigma_j^2 (N_j-1)}\right) \hspace{-0.1cm}\sqrt{T}$. However, that would not change the first term in the regret of \Cref{eq:decompo_regret}, related to the regret of the KT-algorithm itself.
\end{remark}

\end{document}